\pgfplotsset{compat=1.16}
\newcommand{\be}{\begin{equation}} 
\newcommand{\ee}{\end{equation}}
\newcommand{\bea}{\begin{equation} \begin{aligned}} \newcommand{\eea}{\end{aligned} \end{equation}}
\newcommand{\cN}{\mathcal{N}}
\newcommand{\cO}{\mathcal{O}}
\newcommand{\cS}{\mathcal{S}}
\newcommand{\bN}{\mathbb{N}}
\newcommand{\ov}{\over}
\newcommand{\ie}{\textit{i.e.}\ }
\newcommand{\eg}{\textit{e.g.}\ }
\DeclarePairedDelimiter{\ceil}{\lceil}{\rceil}
\DeclarePairedDelimiter{\floor}{\lfloor}{\rfloor}
\newcommand{\rr}{\mathcal{R}}
\newcommand{\bb}{\mathcal{B}}
\newtheorem{theorem}{Theorem}
\newtheorem{definition}{Definition}[section]
\newtheorem{remark}[definition]{Remark}
\newtheorem{corollary}[definition]{Corollary}
\newtheorem{lemma}[definition]{Lemma}
\newtheorem{proposition}[definition]{Proposition}
\newtheorem{conjecture}[definition]{Conjecture}
\newtheorem*{inner}{\innerheader}
\newcommand{\innerheader}{}
\newenvironment{defi}[1]
 {\renewcommand\innerheader{#1}\begin{inner}}
 {\end{inner}}
\newenvironment{LateProof}{\noindent \textit{Proof}}{\hfill$\square$ \\}
\newenvironment{SketchProof}{\noindent \textit{Sketch of Proof.}}{\hfill$\square$ \\}
\newcommand{\@toptitlebar}{
  \hrule height 4\p@
  \vskip 0.25in
  \vskip -\parskip%
}
\newcommand{\@bottomtitlebar}{
  \vskip 0.29in
  \vskip -\parskip
  \hrule height 1\p@
  \vskip 0.09in%
}
\providecommand{\@maketitle}{}
\renewcommand{\@maketitle}{%
  \vbox{%
    \hsize\textwidth
    \linewidth\hsize
    \vskip 0.1in
    \@toptitlebar
    \centering
    {\LARGE\bf \@title\par}
    \@bottomtitlebar
      \begin{tabular}[t]{c}\bf\rule{\z@}{24\p@}\@author\end{tabular}%
    \vskip 0.3in \@minus 0.1in
  }
}
\def\and{
  \end{tabular}%
  \hskip 4em \@plus.17fil%
  \begin{tabular}[t]{c}}
\newcommand{\ignore}[1]{}
\begin{document}

\title{Consensus learning: A novel decentralised ensemble learning paradigm}
\author{\textbf{Horia Magureanu}\\ 
Flare Research\\
\texttt{horia@flare.foundation} 
\and
\textbf{Na\"{i}ri Usher}\\
Flare Research\\
\texttt{nairi@flare.foundation}}

\date{ }
\maketitle 

\begin{abstract}
The widespread adoption of large-scale machine learning models in recent years highlights the need for distributed computing for efficiency and scalability. This work introduces a novel distributed machine learning paradigm -- \emph{consensus learning} -- which combines classical ensemble methods with consensus protocols deployed in peer-to-peer systems. These algorithms consist of two phases: first, participants develop their models and submit predictions for any new data inputs; second, the individual predictions are used as inputs for a communication phase, which is governed by a consensus protocol. Consensus learning ensures user data privacy, while also inheriting the safety measures against Byzantine attacks from the underlying consensus mechanism. We provide a detailed theoretical analysis for a particular consensus protocol and compare the performance of the consensus learning ensemble with centralised ensemble learning algorithms. The discussion is supplemented by various numerical simulations, which describe the robustness of the algorithms against Byzantine participants.
\end{abstract}

\setcounter{tocdepth}{2}
\tableofcontents

\numberwithin{equation}{section}  

\section{Introduction}

Machine learning (ML) has traditionally existed within the context of centralised computing, whereby both data processing and computations occur on a single server. More recently, distributed settings~\cite{konevcny2015federated, mcmahan2017communication, abadi2016tensorflow, ben2019demystifying, 10.1145/3377454} have garnered increased attention due the complexity of modern foundation models, such as large language and computer vision models~\cite{bommasani2022opportunities}, which require vast quantities of data to be processed. There, both data and computational resources can be spread across multiple devices or nodes. A prominent distributed learning paradigm is federated learning (FL), where nodes train a model collectively by sharing only local model updates in order to protect data privacy~\cite{konevcny2015federated, mcmahan2017communication, konevcny2016federated}.

Yet, FL algorithms and more generally distributed algorithms are vulnerable to malicious or faulty behaviour – termed Byzantine behaviour – of the participants~\cite{10.1145/357172.357176}, and dealing with such participants is one of the most challenging problems in distributed ML~\cite{10.1145/3616537, shejwalkar2021manipulating, bouhata2022byzantine, shi2022challenges}. The resilience against such Byzantine actors is rooted in the aggregation rule used to combine the local model updates shared in every federated training round~\cite{10.1145/3616537}. Furthermore, even though FL algorithms can leverage encryption or differential privacy mechanisms, they remain susceptible to privacy threats \cite{cheu2019manipulation}, as the local model updates have been shown to contain enough information to reconstruct local data samples~\cite{NEURIPS2020_c4ede56b, NEURIPS2019_60a6c400, fredrikson2015model}.

Another issue encountered in many distributed learning settings, and predominantly in popular FL methods, is the reliance on a central server, which effectively restricts such algorithms to an enterprise-only setting. Fully decentralised, or peer-to-peer algorithms, operate without a central authority and depend instead on the network topology~\cite{warnat2021swarm, 8186925}. Nevertheless, in such cases enhanced robustness against Byzantine users is typically achieved only for dense network topologies, which leads to increased communication overhead~\cite{NEURIPS2021_d2cd33e9}. This challenge is already evident for training or fine-tuning of large models within the FL framework, even in the absence of malicious players~\cite{woisetschläger2024survey}.

Ensemble methods combine the knowledge of multiple models, built with different architectures, parameters, and amount of available data, to solve a single task~\cite{sagi2018ensemble}. More precisely, ensembles are built on the strengths of every contributor in order to overcome the weaknesses of the individual models. These methods enlarge the representation space of a model, which is also one of the goals of transfer learning methods~\cite{5288526, singh1992transfer, thrun1998learning, Baxter_2000}, notably deployed in foundation models~\cite{bommasani2022opportunities}. Ensemble methods have been typically considered in a centralised setting, and thus implicitly assume that none of the participants are Byzantine.  The problem of Byzantine participants has recently been popularised through the widespread adoption of blockchains~\cite{nakamoto2009bitcoin}, where network participants vote on the validity of a set of transactions to reach agreement. This is implemented through a consensus protocol, which aims to provide both safety and liveness guarantees. Informally, this means that the system is able to deal with misbehaving participants. 

In this work, we introduce a novel ML paradigm that combines consensus protocols with common ensemble learning methods, which we naturally term \emph{consensus learning}. As opposed to the FL setup, individual participants do not share information about their ML models, nor any local data, which allows us to bypass data privacy and leakage issues. Instead, participants are required to only share their predictions for any given data inputs of a test dataset, to which any participant may contribute. To strengthen Byzantine robustness, these predictions then enter a communication phase, which is governed by a consensus protocol to ensure that the network reaches agreement. Here, honest participants will truthfully follow the rules of the consensus protocol, while malicious ones will attempt to stir the network to their desired outputs.

\subsection{Main contributions}

Consensus learning enhances typical ensemble weighting methods through a communication phase, where participants share their outputs until consensus is reached. We present a theoretical analysis of the performance of a consensus learning algorithm specialised to a binary classification task, which represents a toy model suitable for explaining the subtleties of this novel paradigm. More precisely, we deploy the Slush consensus protocol from the Snow family of protocols~\cite{rocket2020scalable}, which is a family of gossip protocols.

Our analysis provides lower bounds on the accuracy of a binary classifier deploying consensus learning and indicates which practical settings are suitable for using this type of algorithms. Moreover, we compare the Slush consensus learning algorithm with simple ensemble methods such as centralised majority rules and describe scenarios where the former can be the better performer. This analysis is supplemented by various numerical simulations, which describe the resilience of consensus learning against Byzantine participants.

While this work focuses mostly on classification tasks, consensus learning algorithms can also be applied to regression problems. There, robust local aggregation rules need to be deployed, similarly to Byzantine ML algorithms~\cite{10.1145/3616537}. We also comment briefly on the applicability of consensus learning to unsupervised or self-supervised learning, and leave a detailed analysis of these use cases for future work.

\subsection{Related works} \label{sec: related works}

Consensus learning is closely related to meta-learning~\cite{9428530}, also known as learning-to-learn~\cite{thrun1998learning}. These methods involve a meta-learner, which is an ML model trained on the outputs of the base learners. In fact, weighting methods where the weights are based on the precision of the base learners can already be thought of as simple meta-learning methods. The first stage of consensus learning methods is, indeed, identical to that of meta-learning. The communication phase, however, does not involve a secondary training round per se; regardless, the analogy with meta-learning methods might stem from the local aggregation rules used in this phase, which can involve weighting methods.

Meta-learning methods have recently been used for unsupervised learning~\cite{metz2019metalearning}, indicating that consensus learning methods could be applied in such contexts as well. Of particular relevance would be unsupervised ensemble methods, such as consensus clustering~\cite{strehl2002cluster}, which offer a natural playground for extending consensus learning methods. Note that consensus clustering algorithms combine clusterings from multiple sources without accessing the private data of individual participants. Nevertheless, this aggregation is done in a centralised fashion through a \emph{consensus function}, which also assumes that participants are honest. A peer-to-peer adaptation can be implemented by adding a communication phase, where the consensus function would be used for local aggregations.

Another related approach to consensus learning is federated one-shot learning, which is an FL method that allows the central server to learn a model in a single communication round~\cite{guha2019oneshot, NEURIPS2022_868f2266}. Federated one-shot learning could be also classified as an ensemble method with some additional features, such as cross-validation user selection: only users achieving a baseline performance on some validation data can be part of the global model. Consensus protocols are generally not feasible for multi-round federated learning algorithms, due to the amount of computing resources required for obtaining satisfying results~\cite{NEURIPS2021_d2cd33e9}, but could be used for one or few-shot FL. Weaker forms of consensus, such as approximate agreement~\cite{10.1145/5925.5931} and averaging agreement~\cite{NEURIPS2021_d2cd33e9} have been argued to be more suitable for modern ML applications.

Knowledge distillation (KD) \cite{hinton2015distilling} based methods also share same similarities with consensus learning. There, the goal is to compress the knowledge of a group of \emph{teacher models} into a smaller \emph{student model} that can approximate the teachers' predictions with high precision. In KD-based FL algorithms, participants only communicate their predictions on an unlabelled public test set, which are then used for improving local models \cite{chang2019cronus} -- see also \eg \cite{li2019fedmd, roux2024byzantineresilience} and references therein for other recent works on the topic. Note that most research in this direction focuses on centralised algorithms.

Consensus algorithms have recently been deployed in distributed neural networks. In this setup, data is split among a number of agents, which share the same initialised neural networks. These models are then trained locally, with the local updates aggregated based on the network topology and a deterministic consensus algorithm. The setup is somewhat similar to peer-to-peer FL, and has been shown to achieve convergence using a small number of communication rounds after each training phase -- see \eg \cite{liu2019distributed_v0}. This setup becomes closer to our approach when using a heuristic adaptive consensus algorithm, which deploys local aggregation functions with varying weights~\cite{liu2019distributed}. This approach is based on switching communication graphs for the network topology, which is reminiscent of temporal graph neural networks~\cite{10.5555/3455716.3455786}. 

Blockchain mechanisms, such as \emph{proof-of-learning} (PoL)~\cite{8783030}, also appear to share some of the features of consensus learning. The implementation described in~\cite{8783030}, which was inspired by Kaggle machine learning competitions, involves a set of nodes called \emph{trainers} that submit ML models for tasks previously set by other nodes, referred to as \emph{suppliers}. These models are ranked according to their performance on unseen data by a set of randomly selected \emph{validators}. A similar proposal was sketched out in~\cite{LIU2021108594}. Nevertheless, these PoL mechanisms still lack a basic utility: rather than aiming to collaboratively solve a problem, each node only tries to have the best model on their own. Blockchain empowered FL methods, such as those of~\cite{warnat2021swarm, 8998397, 9127823}, align more closely with our proposal; regardless, such blockchain implementations of FL methods are currently not feasible due to the shortcomings of EVM-based chains.\footnote{EVM chains do not natively support floating point numbers.} Additionally, certain client selection algorithms might be needed to ensure a minimum algorithm precision.

\subsection{Organisation}

The rest of the paper is organised as follows. In Section~\ref{sec: background} we cover some background material on jury problems and ensemble learning, focusing on binary classification problems. We also review consensus mechanisms and their fundamental properties. Section~\ref{sec: consensus learning} introduces the principles of consensus learning and summarises the key results of our work. Section~\ref{sec: results} provides a theoretical analysis for binary classifiers built with consensus learning algorithms, as well as a performance analysis in the presence of Byzantine learners. This analysis is extended in Section~\ref{sec: simulation} through numerical simulations on non-iid\footnote{Independent and identically distributed.} data. Finally, we summarise our findings and future research directions in Section~\ref{sec: conclusions}.

\section{Preliminaries}\label{sec: background}

\subsection{Ensemble learning} \label{sec: ensemble learning review}

Ensemble methods provide powerful techniques for combining multiple ML models to make a common decision~\cite{10.1007/3-540-45014-9_1}. Algorithms developed using ensemble learning techniques are task-agnostic, thus generalising across a wide range of problems (see \eg\cite{finn2017model, NEURIPS2020_24389bfe}). 

\begin{definition}[Ensemble]
    An ensemble is a collection of ML models whose predictions are combined together into a single output for any given input.
\end{definition}

\begin{definition}[Base learner]
    A base learner is one of the individual components of an ensemble.
\end{definition}

A base learner is thus a trained ML model deployed by a single participant of a network. The main premise of ensemble methods is that the errors of a single learner are compensated by the other learners~\cite{sagi2018ensemble}. In a distributed setting, such methods lead to significantly higher computational power and larger training datasets. Moreover, these techniques can reduce the risk of overfitting and increase the robustness of a model~\cite{10.1214/aos/1024691352, dietterich2002ensemble, 1688199}.

The effectiveness of ensemble learning techniques can also be appreciated through the perspective of hypothesis spaces~\cite{Blockeel2010}. Model building in supervised learning algorithms generally involves a search through a task-dependent hypothesis space, which can be understood as the set of functions between the input features and the output labels. In many instances, it is highly likely that the optimal hypothesis lies outside the hypothesis space of a single model. By combining multiple models, ensembles enlarge these individual spaces, thus increasing the likelihood of finding the optimal hypothesis~\cite{dietterich2002ensemble}. 

This idea is closely related to the concept of \emph{domain generalisation}, where multiple data sources are combined in order to improve a model's generalisation performance on unseen target domains~\cite{ben2010theory}. Additionally, ensemble learning techniques can achieve similar feats to \emph{transfer learning} methods~\cite{singh1992transfer, thrun1998learning, Baxter_2000}, whose goal is to leverage knowledge from different but related problems. The connection to these two concepts appears to be more pronounced especially when the base learners of an ensemble display significant dissimilarities, which will be a recurring theme of this work.

In ensemble learning, the outputs of the base learners can be combined together with two main methods: \emph{weighting methods} and \emph{meta-learning methods}. Typically, weighting methods turn out to be most suitable for cases when the performances of the learners are comparable. The simplest such method is a majority voting, where the assigned weights are all equal, and is commonly deployed in bootstrap aggregating (bagging)~\cite{breiman1996bagging}, or random forest~\cite{breiman2001random} algorithms. More intricate weight assignments are used, for instance, in boosting algorithms~\cite{schapire1998improved}.

Meta-learning algorithms, on the other hand, use a two-stage process in which the outputs of the base learners become inputs for an additional learning process~\cite{wolpert1992stacked}. Such methods are expected to perform extremely well when the base models have different performances on distinct subspaces of the dataset. Common meta-learning algorithms, such as \emph{stacking}~\cite{wolpert1992stacked}, rely on a central server - the \emph{meta-learner} - which is trained on the outputs of the base learners. Notably, both weighting methods and meta-learning algorithms are prone to various types of attacks from external users when considered in a distributed setting, which we aim to address in this work.

\subsection{Jury problems}
Ensemble methods implicitly assume that the base learners are \emph{honest}. This assumption is partly relaxed in a decentralised setting, where a fraction of the base learners is allowed to be Byzantine.

\begin{definition}[Honest participant]
    An honest participant is a participant who follows the modelling process truthfully.
\end{definition}

Importantly, according to this definition, a low-performance model can still be labelled as honest.  One of the fundamental results in ensemble methods is \emph{Condorcet's jury theorem}, which, despite its assumptions, captures essential aspects for building ML ensembles. This result is tailored to binary classification tasks, which we will also focus on for the rest of this work unless otherwise stated.

Hansen and Salamon~\cite{58871} adapted the jury theorem to an ML context, by modelling a base learner as a Bernoulli trial $X_i$, with probability $p_i$ of correctly identifying the label of a given input, for $i = 1, \ldots, n$, where $n$ is the number of base learners. In simplest terms, these success probabilities measure the accuracy of the base learners for the binary classification task at hand.

\begin{definition}[Accuracy]\label{def: accuracy}
    The accuracy of a classifier is the fraction of correctly identified samples in a test set.
\end{definition}

This measure can then be extrapolated to new inputs. Namely, the accuracy of an ML classifier will approximately give the probability of correctly identifying a new input. Of course, this assumes that the new input comes from the same distribution as the test data. In classification tasks, accuracy is based on the use of a \emph{unit loss function}, which does not distinguish between false positives and false negatives. As such, accuracy may not be the best performance metric for imbalanced datasets, where metrics such as F1-score or area under the ROC curve provide better alternatives~\cite{frank2015regression}. For our generic setting, however, accuracy will provide a reasonable metric.

To introduce Condorcet's jury theorem, let us first define independence and homogeneity.

\begin{definition}[Independence]  \label{def: independence}
    Base learners are independent if their associated random variables $\{ X_i\}_{i = 1, \ldots, n}$, are pairwise independent. 
\end{definition}

\begin{definition}[Homogeneity]  \label{def: homogeneity}
   A group of base learners is called homogeneous if all participants have the same accuracy on a specific input.
\end{definition}

The jury theorem dates back to the 18th century, and constitutes a majority rule ensemble method~\cite{58871}: voters are given a binary choice and the collective decision corresponds to that of the majority. For simplicity, we assume that the number of voters is odd, to guarantee that a decision can always be made.

\begin{defi}{Condorcet's jury theorem}
    Given a homogeneous group of $n$ independent base learners, for $n$ odd, each having accuracy $p > {1\ov 2}$, the accuracy $\mathbb{P}_{\rm Maj}$ of the ensemble built using a majority rule satisfies
    \be \label{hom Condorcet probability}
    \mathbb{P}_{\rm Maj}(p, n) = \sum_{j = \ceil*{{n\ov 2}}}^n \binom{n}{j}p^j(1-p)^{n-j} ~\geq ~p~,
    \ee
    with equality for $n = 1$ or $p=1$ only. Moreover, in the limit $n \to \infty$, we have
    \be
    \mathbb{P}_{\rm Maj}(p, n) \to 1~.
    \ee
\end{defi}

We refer to \eg \cite{7a1b040b-f861-32e9-9956-8bbdd7097f6c} for a proof of the first statement. The convergence in the large $n$ limit follows from the law of large numbers and will be included in the proof of Proposition~\ref{prop: Slush lower bound}. The basic principle behind Condorcet's jury theorem is that of the \emph{wisdom of crowds}, \ie the knowledge of a crowd is larger than that of a single member. This is, of course, not true in general, but convergence theorems can still be proven for heterogeneous or correlated juries -- see \eg \cite{10.1214/aoms/1177728178, 7a1b040b-f861-32e9-9956-8bbdd7097f6c}. 

Despite their apparent simplicity, weighting methods are effectively used by many state-of-the-art ensemble methods, such as bagging~\cite{breiman1996bagging}  or boosting algorithms~\cite{freund1995desicion}. In fact, the simple majority rule used in Condorcet's jury theorem turns out to be a very powerful aggregation rule for problems with a high degree of homogeneity, \ie where the models of the base learners have similar performances. This was explicitly demonstrated by Nitzan and Paroush (Theorem~1 of~\cite{3ea5cd6d-9799-36bb-8e45-f724ea53f0e0}), who showed that the optimal (decisive) decision rule\footnote{In the context of binary jury theorems, a decisive decision rule is a rule that ensures a decision for any set of juror votes. \label{footnote: DDR}} is a weighted majority, with the weights solely determined by the base learner accuracy.

This result shows, in particular, that in the homogeneous setting, the majority rule will outperform any other aggregation rule. Additionally, the majority rule will still perform close to optimal if the variance of the distribution of accuracies is not too large. In this sense, we introduce the notion of \emph{diversity}, which we will revisit in subsequent sections.

\begin{definition}[Diversity]\label{def: diversity}
    The diversity of a group of base learners is defined as the spread of the distribution\footnote{This is also known as \emph{variability}.} of accuracies of the base learners.
\end{definition}

\subsection{Consensus mechanisms} \label{sec: consensus mechanisms}

Consensus protocols were introduced in the context of distributed computing~\cite{10.1145/571637.571640, 1335465, 10.5555/1972495}, to ensure a system's security, resilience, and dependability~\cite{cachin2017blockchain}, and today are at the heart of blockchains. 

Blockchains are public databases which are updated and shared across many nodes in a network. Transactional data is stored in groups known as blocks, through the use of a unique identifier called a block hash, which is the output of a cryptographic hash function. This hash value is then part of the data of the next block of transactions, which thus links the blocks together in a chain. As such, each block cryptographically references its parent, and, thus, each block contains information about \emph{all} previous blocks. As a result, the data in a block cannot change without changing all subsequent blocks, which would require the approval of the entire network. Every node in the network keeps a copy of the database and, consequently, every node must agree upon each new block and the current state of the chain. To accomplish this distributed agreement, blockchains use consensus mechanisms. 

A protocol can solve the aforementioned problem of consensus if a set of conditions are satisfied~\cite{10.5555/1972495, cachin2017blockchain, amoressesar2024analysis}: every honest node eventually decides on some value (\emph{termination}); if all nodes propose the same value, then all nodes decide on that value (\emph{validity}); no node decides twice (\emph{integrity}); no two honest nodes decide differently (\emph{agreement}). These conditions become highly non-trivial in the presence of Byzantine nodes, and typically further assumptions about the environment might be needed for liveness and safety guarantees. These will be further discussed in Section~\ref{sec: modelling assumptions}.

Our primary focus will be on probabilistic consensus protocols, which rely on random or probabilistic processes. One such example is Nakamoto consensus~\cite{nakamoto2009bitcoin}  currently used by the Bitcoin network. Other examples include \emph{gossip} protocols~\cite{demers1987epidemic, 10.1145/3548606.3560638}, which are a class of communication protocols used to circulate information within a network. Probabilistic protocols typically require less communication overhead and thus tend to provide scaling advantages over deterministic variants; conversely, they can be subject to weaker resilience against malicious participants~\cite{892324}.

\subsubsection{Snow consensus protocols}    \label{sec: Snow intro}

In the following, we consider the Snow family of consensus protocols~\cite{rocket2020scalable}. These operate by repeatedly sampling the network at random, and steering correct nodes towards a common outcome, being examples of gossip protocols. In this section, we describe some of the technical aspects of the Slush protocol, which is the simplest protocol from this family. We include some new analytic results, summarised in Lemma~\ref{lemma: Slush symmetry} and Remark~\ref{remark: Slush absorption monotonicity}. Other technical details about the protocol are explained in Appendix~\ref{app: Avalanche}. See also~\cite{amoressesar2022spring, amoressesar2024analysis} for further recent analysis.

Consider a network consisting of $n$ nodes and a binary query with the output choices formally labelled by two colours, red and blue. At the beginning of the Slush protocol, a node can either have one of the two coloured states or be in an uncoloured state. Then, each node samples the network at random, choosing $k$ nodes to which they send a query. Every node responds to a query with its colour. 

Once a node receives $k$ responses, where typically $k\ll n$, it updates its colour if a threshold of votes $\alpha > \left\lfloor{k\ov 2}\right\rfloor$ is reached. This process is repeated multiple times, and each node decides on the colour it ends up with after the last communication round. To ensure convergence, Slush needs $\cO\left(n \log k\right)$ rounds~\cite{amoressesar2024analysis}, which is considerably lower than the $\cO(n^2)$ rounds required in most deterministic protocols.

The dynamics of the Slush protocol can be modelled as a continuous-time Markov process~\cite{rocket2020scalable}. For now, let us assume that all $n$ participants are honest and that all nodes are in a coloured state. We will refer to $\cS$ as the state (or configuration) of the network at any given time. Without loss of generality, the state simply represents the number of blue nodes in the system and takes values in the set $\{0, \ldots, n\}$. The process has two absorbing states, all-red and all-blue, corresponding to the final decision taken by the network. 

\begin{definition}[Slush absorption rates]
    Let the absorption rates in the all-red (all-blue) state of the Slush protocol from the state with $b$ blue nodes be $\rr_b$ ($\bb_b$, respectively). These satisfy $\rr_b + \bb_b = 1$.\footnote{See also the remark at the end of Section~\ref{sec: modelling assumptions}.}
\end{definition}

The exact expressions for these absorption probabilities are given in Appendix~\ref{app: Avalanche},  Corollary~\ref{corr: exact zeta}. Based on these results, we make the following remark.

\begin{remark}  \label{remark: Slush absorption monotonicity}
    The absorption probability in the all-blue state, $\bb_b$, increases monotonically with the number of blue nodes, $b$.
\end{remark}

Additionally, it also follows that $\rr_b$ is a monotonically decreasing function with $b$. A new and important result that we will use throughout this paper follows. The proof can be found in Appendix~\ref{app: Avalanche}.

\begin{lemma}\label{lemma: Slush symmetry}
    The Slush protocol is symmetric as long as all participants are honest, \ie
    \be
    \rr_b = \bb_{n-b}~,
    \ee
    for any $b \in \{0, \ldots, n\}$. Moreover, the majority absorption probability satisfies:
    \be \label{Slush hom bound}
    \bb_b \geq {b\ov n}~,
    \ee
    for $b \geq \ceil*{{n\ov 2}}$. Equality occurs for all $b$ whenever $k = \alpha = 1$.
\end{lemma}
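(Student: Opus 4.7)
The plan is to exploit two structural features of Slush in the all-honest setting: the symmetry under the colour-swap map $\sigma$ that interchanges ``red'' and ``blue'', and the fact that in continuous time each node updates at an independent Poisson clock so the chain is a birth--death process on $\{0,1,\ldots,n\}$ with up-rate $\lambda_b$ and down-rate $\mu_b$.

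For the symmetry claim $\rr_b=\bb_{n-b}$, observe that $\sigma$ is manifestly an automorphism of the Slush transition kernel (every rule is colour-agnostic when all nodes are honest), acts on the state space by $b\mapsto n-b$, and swaps the two absorbing states. It therefore induces a measure-preserving bijection between trajectories starting at $b$ and those starting at $n-b$ that exchanges absorption in all-red with absorption in all-blue, giving the first claim immediately. Applied at the level of rates, the same symmetry yields $\mu_b=\lambda_{n-b}$. For the equality case $k=\alpha=1$, a node that wakes up simply copies the colour of a single uniformly random other node, so $\lambda_b=(n-b)\,b/(n-1)=\mu_b$; the chain has zero drift, $B_t$ is a bounded martingale, and the optional stopping theorem at the absorption time gives $\bb_b=b/n$ for every $b$.

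For the inequality $\bb_b\geq b/n$ with $b\geq\ceil*{n/2}$, the starting point is the standard birth--death absorption formula (equivalently, Corollary~\ref{corr: exact zeta}),
\be
\bb_b=\frac{S_b}{S_n},\qquad S_b=\sum_{j=0}^{b-1}\pi_j,\qquad \pi_j=\prod_{i=1}^{j}\rho_i,\quad \rho_i=\mu_i/\lambda_i.
\ee
The rate-level symmetry $\mu_i=\lambda_{n-i}$ gives $\rho_i\rho_{n-i}=1$, hence $\prod_{i=1}^{n-1}\rho_i=1$, which leads to the palindromic identity $\pi_j=\pi_{n-1-j}$. Setting $c=n-b\leq b$ and using the palindrome to rewrite $\sum_{j=b}^{n-1}\pi_j=\sum_{j=0}^{c-1}\pi_j$, the desired inequality is algebraically equivalent to
\be
\frac{1}{b}\sum_{j=0}^{b-1}\pi_j\;\geq\;\frac{1}{c}\sum_{j=0}^{c-1}\pi_j.
\ee
Because the threshold condition $\alpha>\floor*{k/2}$ produces a ``pull to majority'' at the rate level (i.e.\ $\rho_i\geq 1$ for $i<n/2$), the sequence $\pi_j$ is unimodal with peak near the middle. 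The palindrome then forces every entry of the extra range $\{c,\ldots,b-1\}$ to dominate every entry of $\{0,\ldots,c-1\}$, and a term-by-term comparison finishes the proof.

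The main obstacle is verifying the unimodality of $\pi$, which amounts to the hypergeometric tail inequality
\be
j\,\mathbb{P}\!\left(H(n-1,n-j,k)\geq\alpha\right)\;\geq\;(n-j)\,\mathbb{P}\!\left(H(n-1,j,k)\geq\alpha\right)\qquad \text{for } j<n/2.
\ee
I expect this to be the combinatorial heart of the statement, and would attempt to prove it either by direct manipulation of the binomial-coefficient sums that explicitly uses $\alpha>\floor*{k/2}$, or via an urn-based coupling of the two hypergeometric experiments that keeps the threshold visible throughout the argument.
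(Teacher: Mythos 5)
Your treatment of the symmetry claim and of the equality case $k=\alpha=1$ is correct and essentially the paper's (the paper argues neutrality/colour-agnosticism plus $\lambda_b=\mu_{n-b}$, and for $k=\alpha=1$ notes $\lambda_b=\mu_b$ so that all terms in the absorption formula are equal; your martingale/optional-stopping phrasing is an equivalent variant). For the inequality $\bb_b\geq b/n$ you take a genuinely different route: the paper shows $\bb_b$ is discrete concave on $b\geq\ceil*{{n\ov2}}$ (by proving $\mu_b<\lambda_b$ there) and deduces the bound from concavity, whereas you pass to the gambler's-ruin weights $\pi_j$, use the palindrome $\pi_j=\pi_{n-1-j}$ to reduce the claim to ${1\ov b}\sum_{j<b}\pi_j\geq{1\ov c}\sum_{j<c}\pi_j$ with $c=n-b$, and finish by unimodality of $\pi$. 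I checked this reduction and the domination argument; they are sound, and arguably cleaner than the paper's, since the concavity route still has to pin down the left endpoint $\bb_{\ceil*{n/2}}\geq\ceil*{n/2}/n$ separately while your averaging argument handles all $b$ uniformly. One cosmetic repair: $\rho_i=\mu_i/\lambda_i$ is undefined for $i<\alpha$ where $\lambda_i=0$, so you should work with the unnormalised products $t_l=\prod_{i<l}\mu_i\prod_{j\geq l}\lambda_j$ of Corollary~\ref{corr: exact zeta} (which vanish outside $\alpha\leq l\leq n-\alpha+1$); the palindrome becomes $t_l=t_{n-l+1}$ and nothing else changes.

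The genuine gap is exactly where you flag it: the unimodality of $\pi$, i.e.\ $\mu_j\geq\lambda_j$ for $j<n/2$, equivalently $j\,{\rm H}(n,n-j,k,\alpha)\geq(n-j)\,{\rm H}(n,j,k,\alpha)$, is asserted as a "pull to majority" but not proved, and without it your argument (like the paper's) does not close. This is not a throwaway step -- it is the only place where the threshold condition $\alpha>\floor*{k/2}$ enters, and the lemma is false without some such condition. The paper proves it by the first of your two suggested strategies: a term-by-term comparison of the hypergeometric sums, reducing to $b\binom{n-b}{j}\binom{b}{k-j}<(n-b)\binom{b}{j}\binom{n-b}{k-j}$ for $\alpha\leq j\leq k$, which after writing $b=m+1+\delta$ becomes a product of $2j-k\geq 2\alpha-k\geq1$ factors each at most $1-{2\delta+1\ov m+1+\delta}$, strictly so except when $k=\alpha=1$. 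You should carry out this computation (or an equivalent coupling) to complete the proof; as it stands the proposal correctly isolates, but does not supply, the combinatorial core of the lemma.
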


Figure~\ref{fig: Slush Absorption} shows an explicit plot of the absorption probability $\bb_b$ and some relevant bounds for it. While the bound \eqref{Slush hom bound} is not particularly strong, it will play an important role in our analysis in Section~\ref{sec: results}. Note that this Lemma no longer holds in the presence of Byzantine nodes. Finally, let us mention that $\rr_{b\, <\, \alpha} = 1$ (and $\rr_{b\, >\, n-\alpha} = 0$), as, in these cases, the threshold for accepting a query can only be reached for the red (respectively blue) colour.

 \begin{figure}[t] %
    \centering
    \includegraphics[width=0.65\textwidth]{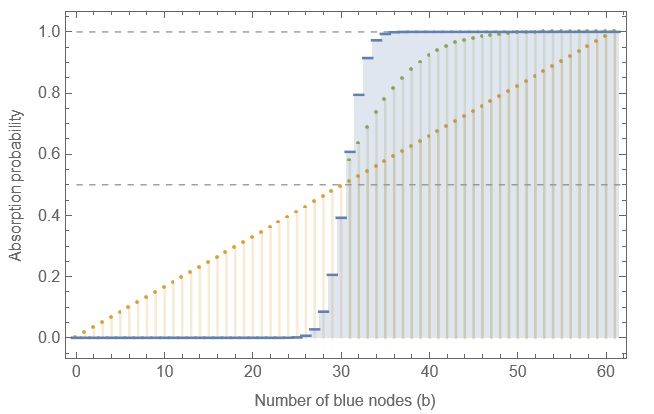}
    \caption{The $\bb_b$ absorption probability (in blue) for the Slush consensus protocol for $n=61$, $k=10$, $\alpha = 7$, as a function of the number of blue nodes $b$. In orange, the ${b\ov n}$ bound is plotted, while in green we have the Chvatal bound \eqref{Chvatal bound}.}\label{fig: Slush Absorption}
\end{figure}%

\section{Consensus learning}    \label{sec: consensus learning}

In this section, we introduce consensus learning, a fully distributed ML paradigm that is based on consensus protocols. We focus on supervised ML methods, and briefly comment on how the algorithm could be adapted to unsupervised and self-supervised problems.

\subsection{Algorithm description}

Supervised consensus learning is a two-stage process that can be described as follows. In this description, we assume for now the existence of a global test set.

\begin{enumerate}
    \item \textbf{Individual learning phase.} During the first stage, participants develop their own ML models, without the need to share any data or information about their models. At the end of this phase, participants determine their initial predictions for any given inputs. 
    \item \textbf{Communication phase.} During the communication phase, participants exchange their initial predictions on new inputs, and update them using a local aggregation function based on the outputs of the other base learners and their confidence in their own predictions. This phase is governed by a consensus protocol which may include several rounds, with the aim of guiding the network towards a common output. The outputs from the end of the communication phase will be the final outputs of the participants.
\end{enumerate}

\begin{figure}[!ht]
    \centering
    \begin{subfigure}{0.5\textwidth}
    \centering
    \includegraphics[width=1\textwidth]{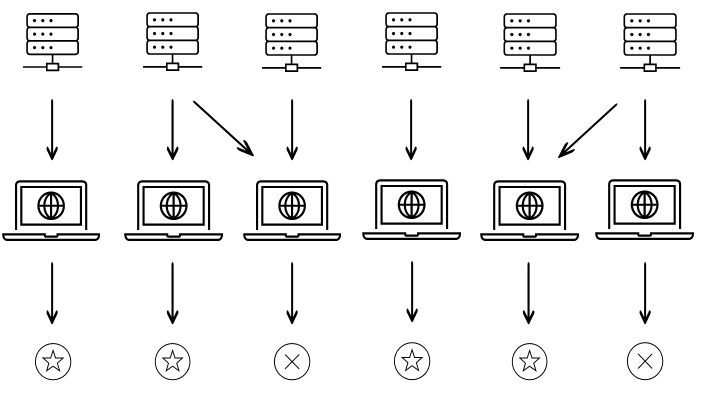}
    \caption{Individual training phase}
    \end{subfigure}%
    \begin{subfigure}{0.5\textwidth}
    \centering
    \includegraphics[width=0.8\textwidth]{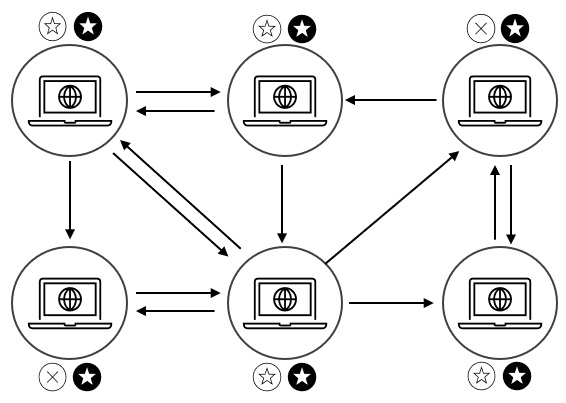}
    \caption{Communication phase}
    \end{subfigure}
\caption{Supervised consensus learning. (a) In the the first stage, participants develop their own models, based on datasets that may overlap. At the end of this phase, each model determines an initial prediction (hollow circles) for any new input. (b) In the communication phase, the initial outputs are exchanged between the participants, which eventually reach consensus on a single output (filled circles).}\label{fig: Consensus learning}
\end{figure}%

\noindent The two stages are depicted in Figure~\ref{fig: Consensus learning}. In an ideal case, the (honest) base learners reach consensus on a single global output in the communication phase. This unique output would then be the output of the ensemble formed by the base learners. However, this is not a requirement in fully decentralised algorithms, where participants may reach different final decisions. 

Consensus learning also allows for direct implementations on decentralised platforms such as blockchains. There, participants may either propose data for testing using proof-of-stake based protocols or use a predefined test set provided by an independent party. The latter can be facilitated through the use of a smart contract on Ethereum Virtual Machine (EVM) compatible blockchains.

The algorithm can also be adapted to self-supervised or unsupervised ML problems, where participants only have access to (partly) unlabelled data. For the former, each participant would deploy self-supervised learning techniques during the individual learning phase, such as contrastive learning~\cite{pmlr-v119-chen20j} or auto-associative learning~\cite{kramer1991nonlinear}. Then, the communication phase would proceed similarly to the supervised setting; here, the test set could include data inputs from the training sets of the individual participants, with the ensemble outputs being used to improve local models.

Unsupervised ensemble methods, such as consensus clustering~\cite{strehl2002cluster}, combine clusterings for multiple sources without accessing the private data of individual participants. This aggregation is commonly done in a centralised fashion through a \emph{consensus function}, which also assumes that participants are honest. Such methods can be adapted to peer-to-peer settings through the implementation of a communication phase, as described above. There, the consensus function would be used for local aggregations.

\subsection{Summary of key results}

Consensus learning methods are fully distributed ensemble techniques, being thus effective for generalising across a wide range of problems, as described in Section~\ref{sec: ensemble learning review}. Moreover, consensus learning is a much simpler type of meta-learning, being less demanding from a computational perspective. It is worth pointing out that meta-learning algorithms create additional sources of overfitting~\cite{NEURIPS2020_3e5190ee}, which are absent in consensus learning algorithms.

Another advantage of consensus learning is that it preserves the explainability of ensemble weighting methods. Thus, the relatively simple design of such algorithms offers transparency and interpretability. Importantly, consensus learning does not rely on a single central server. Additionally, adequate choices of probabilistic consensus protocols can result in low communication overhead, on par with that of centralised weighting methods. It is also natural to consider whether or not consensus-based methods can improve the performance of classical weighting methods, which we will discuss further below.

For classification tasks in supervised learning, perhaps the simplest weighting method is the equal-weight majority rule, deployed in methods such as Random Forests~\cite{breiman2001random}. In the next section, we consider binary classification and present a theoretical analysis of the performance of a consensus learning algorithm against centralised majority rules. For this, we deploy the Slush consensus protocol~\cite{rocket2020scalable} in the communication phase. We use accuracy as a performance metric (as per Definition~\ref{def: accuracy}) and analyse three types of scenarios, as discussed below.

\medskip 
\paragraph{I. Homogeneous scenario.} Arguably the simplest scenario to consider from an analytical point of view is one where all base learners have the same accuracy. In this homogeneous setting, Condorcet's jury theorem has long been one of the main pillars of ensemble learning. One of our most important results in this direction is a generalisation of Condorcet's jury theorem to a consensus learning algorithm.

\begin{theorem} \label{thm: Slush convergence}
    Consider a homogeneous group of $n$ independent base learners, with accuracies $p$ for a binary classification task. Then, the accuracy $\mathbb{P}_{\rm S}$ of the consensus learning algorithm using the Slush protocol satisfies:
    \be
        \mathbb{P}_{\rm S} \geq p ~,
    \ee
    for any $p > {1\ov 2}$,  with equality only occurring for $n = 1$ or $p=1$. Moreover, $\mathbb{P}_{\rm S}$  can be brought arbitrarily close to 1 for any $p > {1\ov 2} + {1\ov n}$, and large enough $n$.
\end{theorem}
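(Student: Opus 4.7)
The plan is to reformulate $\mathbb{P}_{\rm S}$ as an expectation against a binomial distribution, so that Lemma~\ref{lemma: Slush symmetry} can do most of the work. Labelling the correct answer as blue, independence and homogeneity give that the number of initially correct learners $B$ follows $\mathrm{Binomial}(n,p)$, and hence
\begin{equation}
\mathbb{P}_{\rm S} \;=\; \sum_{b=0}^n \binom{n}{b}\, p^b (1-p)^{n-b}\, \bb_b \;=\; \mathbb{E}[\bb_B]~.
\end{equation}
Since $\mathbb{E}[B/n] = p$, the first inequality reduces to showing $\mathbb{E}[D_B] \geq 0$ for the deviation $D_b := \bb_b - b/n$.

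For the lower bound, I would exploit two consequences of Lemma~\ref{lemma: Slush symmetry}: the antisymmetry $D_{n-b} = -D_b$ (from $\bb_b + \bb_{n-b} = 1$), and the sign pattern $D_b \geq 0$ for $b \geq \lceil n/2 \rceil$, which forces $D_b \leq 0$ on the complementary range. Pairing $b$ with $n-b$, writing $q = 1-p$, then collapses the sum to
\begin{equation}
\mathbb{P}_{\rm S} - p \;=\; \sum_{b < n/2} \binom{n}{b}\, D_b\, (pq)^b\, \bigl(q^{n-2b} - p^{n-2b}\bigr)~,
\end{equation}
with the would-be middle term (for $n$ even) vanishing because $D_{n/2} = 0$ by antisymmetry. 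On this range, $p > 1/2$ makes $q^{n-2b} - p^{n-2b}$ negative while $D_b \leq 0$, so every summand is non-negative and $\mathbb{P}_{\rm S} \geq p$. The equality cases $n = 1$ and $p = 1$ are immediate; strict inequality in the remaining cases follows as soon as $D_b \neq 0$ for some $b < n/2$, which by Lemma~\ref{lemma: Slush symmetry} is the case outside the degenerate parameter choice $k = \alpha = 1$.

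The convergence claim requires combining concentration of $B$ around $np$ with a quantitative bound on $\bb_b$. A Hoeffding estimate yields $\mathbb{P}\bigl(B \leq n(1/2+\delta)\bigr) \leq \exp\!\bigl(-2n(p - 1/2 - \delta)^2\bigr)$ for any $0 < \delta < p - 1/2$, and the hypothesis $p > 1/2 + 1/n$ ensures that such a $\delta$, bounded away from zero, is available for all sufficiently large $n$. The main obstacle is that Lemma~\ref{lemma: Slush symmetry} alone only gives $\bb_B \geq 1/2$ on the majority event, which cannot push $\mathbb{P}_{\rm S}$ to $1$; I would therefore invoke the sharper Chvatal-type bound displayed in Figure~\ref{fig: Slush Absorption}, which provides $\bb_b \geq 1 - e^{-cn}$ once $b/n$ is bounded strictly above $1/2$. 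Splitting $\mathbb{E}[\bb_B]$ over $\{B \geq n(1/2+\delta)\}$ and its complement and applying the stronger bound on the dominant part then delivers $\mathbb{P}_{\rm S} \to 1$.
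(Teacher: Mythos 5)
Your proof of the first inequality ($\mathbb{P}_{\rm S} \geq p$) is correct and is essentially the paper's own argument: writing $p$ as the normalised first moment of the binomial, pairing $b$ with $n-b$, and using the two consequences of Lemma~\ref{lemma: Slush symmetry} (the antisymmetry of $D_b = \bb_b - b/n$ and its sign for $b \geq \lceil n/2\rceil$) is exactly the term-by-term comparison carried out in Appendix~\ref{app: proof of Slush - Condorcet}, just indexed from the lower half of the range. Your caveat that equality also occurs in the degenerate case $k=\alpha=1$ is fair and, if anything, more careful than the statement in the text.

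The convergence claim is where the proposal has a genuine gap. The Chvatal tail bound \eqref{Chvatal bound} reads $\rr_b \leq e^{-2(\alpha/k - 1 + b/n)^2 k}$: the exponent scales with the sample size $k$, not with $n$. For fixed protocol parameters $k,\alpha$ (the regime of the theorem, where typically $k \ll n$), having $b/n$ bounded strictly above $1/2$ only gives $\bb_b \geq 1 - e^{-ck}$ for a constant $c$, not $1 - e^{-cn}$ as you assert. Your split of $\mathbb{E}[\bb_B]$ over the majority event and its complement then yields at best $\mathbb{P}_{\rm S} \geq (1-o(1))\bigl(1 - e^{-ck}\bigr)$, which is exactly the content of Proposition~\ref{prop: Slush lower bound} --- a bound strictly below $1$ that does not improve as $n \to \infty$. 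This is precisely the obstruction the paper flags after that proposition, and it is why the actual proof takes a completely different route: it shows (Theorem~\ref{thm: Slush vs delta-majority}) that Slush beats a $\delta$-supermajority rule whenever $p$ is below a threshold $p_{\rm th} > 1/2$, imports the fact that $q$-supermajority rules converge to accuracy $1$ for $p > q$, removes the upper end of the resulting interval $(q, p_{\rm th}(q))$ via the monotonicity of $\mathbb{P}_{\rm S}$ in $p$ (Lemma~\ref{lemma: Slush monotonicity}), and then does the genuinely hard work of verifying $p_{\rm th}(q) > q$ for $q = \tfrac{1}{2} + \tfrac{1}{n}$ uniformly in $n$, via the inequality \eqref{threshold sufficient condition} on ratios of absorption probabilities. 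A concentration argument on the initial vote count $B$ cannot substitute for any of this, because the randomness that keeps $\mathbb{P}_{\rm S}$ away from $1$ lives in the communication phase (through $\bb_b$ at fixed $k$), not in the initial votes.
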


The proof of this statement is rather involved, and is left to Appendix~\ref{app: proofs}. Nevertheless, the proof only uses simple features that are specific to the Slush protocol and could thus be generalised to other probabilistic consensus protocols. Other main results in the homogeneous setting are discussed in Section~\ref{sec: hom Slush} and include lower bounds on the accuracy of the consensus learning algorithm using the Slush protocol, as well as comparisons with majority and supermajority rules.\footnote{Supermajority rules are essential for proving Theorem~\ref{thm: Slush convergence}.}

\medskip 
\paragraph{II. Partly heterogeneous scenario.} Adaptations of jury theorems to heterogeneous juries have previously been discussed in the literature -- see \eg Theorem~4 of~\cite{10.1214/aoms/1177728178}, Theorem~3 of~\cite{10.2307/2348873}, as well as Theorem~3 of~\cite{fey2003note}. In fact, we expect that a similar result to Theorem~\ref{thm: Slush convergence} would hold for heterogeneous groups, but we do not explicitly pursue this direction. 

Instead, we compare consensus learning algorithms with majority rules in partly heterogeneous settings: more precisely, the accuracies of the base learners will be split into distinct homogeneous groups. We will see, in particular, that consensus learning algorithms can perform better than majority rules as long as the learners are \emph{diverse} enough, \ie when the distribution of accuracies of the base learners has a certain degree of heterogeneity, as per Definition~\ref{def: diversity}. This topic will be discussed in more detail in Section~\ref{sec: diversity}.

\medskip
\paragraph{III. Almost homogeneous scenario with Byzantine nodes.} An implicit assumption of well-established ensemble methods is that the base learners are honest. In a fully decentralised setting (or peer-to-peer), this assumption is relaxed, due to the presence of Byzantine participants. We clarify this notion in our framework below.

\begin{definition}[Byzantine participant] \label{def: Byzantine}    
    A malicious or Byzantine participant is a participant who is not honest. This also extends to the communication phase of a consensus learning algorithm.
\end{definition}

\noindent A Byzantine participant may share any outputs they wish during the communication phase. These will usually be decided based on their adversarial strategy. Section~\ref{sec: Byzantine} will present some analytical results regarding consensus learning algorithms with Byzantine users, where the honest base learners will form a homogeneous group. Our analysis comprises a comparison with majority and supermajority rules in the presence of Byzantine users.

\paragraph{Numerical results.} To better understand these scenarios and to provide more supporting evidence in favour of consensus learning algorithms, we will also present various numerical simulations in Section~\ref{sec: simulation}. We will use non-iid data from the LEAF benchmark~\cite{caldas2019leaf} and will analyse the effect of Byzantine users in more detail. Furthermore, we present slight modifications of the Slush protocol that can improve algorithm performance and Byzantine resilience.

\section{Theoretical analysis} \label{sec: results}

In this section we present a theoretical analysis of a consensus learning algorithm using the Slush consensus protocol, specialised to a binary classification problem. The underlying assumptions of this analysis are presented in Section~\ref{sec: modelling assumptions}. For ease of notation, we introduce the following terminology.

\begin{definition}[Slush algorithm]
    The Slush algorithm is the consensus learning method deploying the Slush consensus protocol in the communication phase.
\end{definition}

\subsection{Modelling premises} \label{sec: modelling assumptions}

To provide an in-depth analysis of a typical consensus learning algorithm, we will make some simplifying assumptions. Our analysis will be an extension of the Hansen and Salamon adaptation of jury theorems to an ML context~\cite{58871}. As such, we will model base learners as Bernoulli trials; the success probabilities of these random variables correspond to a chosen performance metric in the binary classification task. Another modelling assumption will be the independence of the base learners, as per Definition~\ref{def: independence}. We will relax this assumption in Section~\ref{sec: simulation}, where we conduct a simulation of a numerical consensus learning algorithm. Let us also mention that the number of base learners will typically be assumed odd unless otherwise stated.

For a concrete illustration of the communication phase, we will consider the Slush consensus protocol~\cite{rocket2020scalable}, briefly summarised in Section~\ref{sec: Snow intro}. We also refer to Appendix~\ref{app: Avalanche} for a more technical discussion of this protocol. This consensus protocol can be adapted to a binary classification task as follows. Consider a query, representing a data entry which needs to be classified by the ensemble. Initially, each node picks a class (which we refer to as \emph{colour}) as dictated by their local ML algorithm. For the binary classification problem of interest, we assume without loss of generality that class 1 (labelled as \emph{blue} colour) is the correct class of some given input to be classified, as opposed to class 0 (labelled as \emph{red} colour). Thus, at the beginning of the protocol, each node will be in a coloured state. These states can then change during the communication rounds.

Consensus protocols assume a form of \emph{synchronicity}, as agreement cannot be reached in a fully asynchronous setting~\cite{fischer1985impossibility}. For the technical analysis of the consensus learning algorithms, we will also assume a synchronous setting in the communication phase. Additionally, we will assume that consensus is eventually reached within the network, such that all (honest) base learners decide on the same final output.

Finally, let us briefly comment on the presence of Byzantine nodes. As pointed out in~\cite{rocket2020scalable}, if the number of Byzantine nodes is greater than the threshold $\alpha$ of accepting a query, then the Markov chain modelling the Slush protocol appears to have only a single absorbing state. Of course, in practical terms, the transition probabilities away from the all-blue state would be arbitrarily small. Nevertheless, throughout the paper, we will not consider this case. Another subtlety concerns the identity $\rr_b + \bb_b = 1$ for the absorption probabilities. It was recently argued that adversarial strategies that assume knowledge of the whole network can, in fact, indefinitely stall the protocol~\cite{MediumAvalancheLiveness}. Our analysis will be limited to fixed (extreme) adversarial strategies, where the Byzantine nodes will communicate the wrong class at all times. In such cases, the protocol can still be modelled as a Markov chain with fixed transition rates, thus ensuring that there is no closed communicating class, apart from the two absorbing states.

\subsection{Homogeneous case}\label{sec: hom Slush}

As a first scenario, we consider the homogeneous case, where each participant $j$ has the same accuracy in classifying a new input, $p_j = p$, for $j \in \{1, \ldots, n \}$, with $p \in [0,1]$. We will discuss the odd $n \in \bN$ case below.

\subsubsection{Majority rules}

In the homogeneous context, the first natural question is how a consensus learning algorithm will compare to a single implementation of a majority rule. Theorem~\ref{thm: Slush vs Majority hom} gives a first result in this direction.

\begin{theorem} \label{thm: Slush vs Majority hom}
    Given a homogeneous group of $n$ independent base learners, each with accuracy $p$ for a binary classification problem, the majority rule will outperform\footnote{That is, the accuracy of the majority rule is larger than that of the Slush algorithm.} the Slush algorithm, as long as $p>{1\ov2}$ and $\alpha \neq \ceil{{n\ov 2}}$. The Slush algorithm will achieve the same accuracy only for $\alpha = \ceil*{{n\ov 2}}$.
\end{theorem}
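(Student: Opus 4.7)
The plan is to reduce $\mathbb{P}_{\rm S}-\mathbb{P}_{\rm Maj}$ to a manifestly non-positive sum by pairing the indices $b$ and $n-b$ through the Slush symmetry of Lemma~\ref{lemma: Slush symmetry}. Under the homogeneity and independence assumptions, the number of base learners initially holding the correct (blue) colour is binomially distributed, so
\begin{equation*}
\mathbb{P}_{\rm S} = \sum_{b=0}^{n}\binom{n}{b}p^b(1-p)^{n-b}\,\mathcal{B}_b,\qquad \mathbb{P}_{\rm Maj} = \sum_{b=\lceil n/2\rceil}^{n}\binom{n}{b}p^b(1-p)^{n-b}.
\end{equation*}
Since $n$ is odd, I would split the Slush sum at $b=(n-1)/2$, use $\mathcal{B}_b = 1-\mathcal{R}_b$ on the upper half, and then perform the change of variables $b\mapsto n-b$ there, exploiting $\binom{n}{b}=\binom{n}{n-b}$ together with $\mathcal{R}_{n-b}=\mathcal{B}_b$ from Lemma~\ref{lemma: Slush symmetry}.

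After this rearrangement the two halves collapse into the compact identity
\begin{equation*}
\mathbb{P}_{\rm S} - \mathbb{P}_{\rm Maj} = \sum_{b=0}^{(n-1)/2}\binom{n}{b}\,\mathcal{B}_b\,p^b(1-p)^b\bigl[(1-p)^{n-2b}-p^{n-2b}\bigr].
\end{equation*}
For every $b\leq (n-1)/2$ we have $n-2b\geq 1$, and for $p>1/2$ the bracket is strictly negative, while the remaining factors are non-negative. Each summand is therefore non-positive, giving $\mathbb{P}_{\rm S}\leq \mathbb{P}_{\rm Maj}$; moreover, since only the bracket carries a fixed negative sign, equality of the whole sum requires $\mathcal{B}_b=0$ for every $b\leq (n-1)/2$.

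To finish, I would invoke the monotonicity of $\mathcal{B}_b$ (Remark~\ref{remark: Slush absorption monotonicity}) together with the fact noted just after Lemma~\ref{lemma: Slush symmetry} that $\mathcal{B}_b = 0$ precisely when $b<\alpha$: this turns the requirement $\mathcal{B}_b=0$ for all $b\leq (n-1)/2$ into the single condition $\alpha\geq \lceil n/2\rceil$. Combined with the standing assumption from Section~\ref{sec: modelling assumptions} that the chain has only the two canonical absorbing states (which effectively enforces $\alpha\leq \lceil n/2\rceil$, since larger thresholds would generate intermediate absorbing configurations and break $\mathcal{R}_b+\mathcal{B}_b=1$), the unique equality case is $\alpha=\lceil n/2\rceil$, and every other admissible choice gives strict inequality. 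The main technical hurdle is the bookkeeping of the pairing step: one must check that, for odd $n$, the two halves of the Slush sum match one-to-one through $b\mapsto n-b$ with no leftover central term, and that Lemma~\ref{lemma: Slush symmetry} is applied in the correct direction. Once this compact identity is in hand, the sign and equality analyses are immediate.
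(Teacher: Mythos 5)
Your proof is correct and is essentially the paper's own argument: your compact identity is exactly the paper's expression for $\Delta\mathbb{P}$ with the summation index reflected via $b\mapsto n-b$, and both arguments hinge on the same two ingredients, namely the symmetry $\rr_b=\bb_{n-b}$ of Lemma~\ref{lemma: Slush symmetry} and the sign of $p^{n-2b}-(1-p)^{n-2b}$ for $p>{1\ov 2}$. Your factored form does make the equality case ($\bb_b=0$ for all $b\leq\floor*{{n\ov2}}$, i.e.\ $\alpha=\ceil*{{n\ov2}}$) slightly more transparent than the paper's ratio $\kappa_b$, but the route is the same.
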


The veracity of this affirmation can be inferred from the Nitzan-Paroush theorem on optimal decision rules (\ie Theorem~1 of~\cite{3ea5cd6d-9799-36bb-8e45-f724ea53f0e0}). This states that the majority rule is the optimal decisive decision rule in the homogeneous setting. However, it is not entirely clear whether that theorem holds for our algorithm due to the existence of a communication phase. Thus, we provide below an alternative proof for this theorem.

\begin{proof}
    Note first that the number of blue nodes $b$ before the communication phase starts follows a binomial distribution. Thus, the probability of success for the Slush protocol is given by:
\bea    \label{Slush prob homogeneous}
     \mathbb{P}_{\text{S}}(n, k, \alpha, p) & =  \sum_{b=0}^{n}  \binom{n}{b} 
    \bb_b \, p^{b} (1-p)^{n-b} \\
   &  = \sum_{b=\alpha}^{n-\alpha}  \binom{n}{b} 
    \bb_b \, p^{b} (1-p)^{n-b} + \hspace*{-0.25cm} \sum_{b=n-\alpha+1}^n   \hspace*{-0.15cm} \binom{n}{b} p^{b} (1-p)^{n-b}~.
\eea
We would like to compare this expression with the expression \eqref{hom Condorcet probability} for the homogeneous majority rule. We immediately see that for $\alpha > {n \ov 2}$, the first sum in \eqref{Slush prob homogeneous} vanishes, and we have:
\be \label{large alpha}
    k \geq \alpha  \geq \ceil*{{n\ov 2}}: \qquad \mathbb{P}_{\text{S}}(n, k, \alpha , p) \leq \mathbb{P}_{\rm Maj}(p, n)~,
\ee
with equality only for $\alpha = \ceil*{{n\ov 2}}$. The more interesting case to analyse is $\alpha < {n\ov 2}$. For this, note that the first sum in the bottom line of \eqref{Slush prob homogeneous} can be further decomposed into
\be
   \sum_{b=\alpha}^{n-\alpha}  \binom{n}{b} 
    \bb_b\, p^{b} (1-p)^{n-b}  =  \sum_{b=\alpha}^{\floor*{{n\ov2}}} \binom{n}{b} 
    \bb_b\, p^{b} (1-p)^{n-b} + \sum^{n-\alpha}_{b=\ceil*{{n\ov2}}}\binom{n}{b} 
    \bb_b\, p^{b} (1-p)^{n-b}~.
\ee
Then, defining
\be \label{def DeltaP}
   \Delta \mathbb{P} = \mathbb{P}_{\rm Maj}(p, n) - \mathbb{P}_{\text{S}}(n, k, \alpha, p)~,
\ee
we find that for $n=2m+1$, $\Delta\mathbb{P}$ reduces to:
\be
   \Delta \mathbb{P} =  \sum_{b =m+1}^{n - \alpha}\binom{n}{b}\, \bigg( \rr_b \, p^b (1-p)^{n-b} - \bb_{n-b} \, p^{n-b} (1-p)^{b}\bigg)~.
\ee
To make further progress, we consider the ratio of the individual terms in the above summation:
\be
    \kappa_b \equiv {\rr_b \ov \bb_{n-b}} \times  \left( {p\ov 1- p} \right)^{2b-n}~,
\ee
for $m+1 \leq b \leq n-\alpha$. Using Lemma~\ref{lemma: Slush symmetry}, and since $2b > n$ for the range of interest, we have $\kappa_b>1$ as long as $p>{1\ov 2}$, with $\kappa_b = 1$ at $p = {1\ov2}$
and $\kappa_b<1$ otherwise. This concludes our proof.
\end{proof}

While the Slush algorithm cannot improve on the accuracy of an already optimal decisive decision rule in the homogeneous setting, we would still like to find a lower bound on its accuracy to illustrate its functionality. We thus seek a generalisation of Condorcet's jury theorem, which compares the accuracy of the ensemble generated through the Slush algorithm with that of a single base learner. The first part of this statement, illustrated by Theorem~\ref{thm: Slush convergence}, already gives such a lower bound, namely $\mathbb{P}_S \geq p$, for $p > {1\ov 2}$. In simple terms, this states that the ensemble built using this fully decentralised paradigm is more accurate than any of the base learners. A different bound on the Slush accuracy can be determined as follows.

\begin{proposition} \label{prop: Slush lower bound}
    Consider a homogeneous group of $n$ independent base learners, each with accuracy $p$. For large enough $n$, a lower bound for the accuracy of the Slush algorithm is given by
    \be
    \mathbb{P}_{\rm S}(n, k, \alpha, p) > 1 - e^{-2 \left({\alpha\ov k} - {1\ov 2} \right)^2k}~,
    \ee
    as long as $p>{1\ov 2}$.
\end{proposition}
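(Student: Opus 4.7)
The plan is to rewrite $\mathbb{P}_{\rm S}(n, k, \alpha, p) = \sum_b \binom{n}{b} p^b (1-p)^{n-b} \bb_b$ as an expectation $\mathbb{E}[\bb_B]$ over the initial blue count $B \sim \mathrm{Bin}(n,p)$, and then exploit two concentration phenomena: (i)~Hoeffding's inequality controls the probability that $B$ falls below $\ceil*{n/2}$, and (ii)~the Chv\'atal-type tail bound on the hypergeometric sample driving each Slush communication round (the green curve in Figure~\ref{fig: Slush Absorption}) lower-bounds $\bb_b$ for $b \geq \ceil*{n/2}$. The strict bound of the Proposition should then emerge by matching the hypergeometric Chv\'atal exponent at its worst point $b/n \downarrow 1/2$.

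First I would split the sum into a majority part ($b \geq \ceil*{n/2}$) and a minority part ($b < \ceil*{n/2}$), discarding the latter as a conservative step. For the majority part, Remark~\ref{remark: Slush absorption monotonicity} says $\bb_b$ is monotonically increasing in $b$, so the worst case on $\{\ceil*{n/2}, \ldots, n\}$ sits at $b = \ceil*{n/2}$. The bound $\bb_b \geq b/n$ from Lemma~\ref{lemma: Slush symmetry} only yields $\sim 1/2$ at this endpoint and is therefore too weak; instead I would invoke a Chv\'atal-type inequality on the hypergeometric sample at each Slush round, namely that the probability a sample of size $k$ fails to reach the threshold $\alpha$ decays like $e^{-2k(b/n - \alpha/k)^2}$ whenever $b/n \geq \alpha/k$. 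Unfolding one round of the Slush dynamics in this way should translate the per-sample decay into a lower bound on $\bb_b$ with exponent $-2k(b/n - 1/2)^2$ after pushing $b/n$ to its worst admissible value, reproducing precisely the stated factor $e^{-2(\alpha/k - 1/2)^2 k}$.

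Next, Hoeffding's inequality applied to the initial $\mathrm{Bin}(n,p)$ variable gives $P(B < \ceil*{n/2}) \leq e^{-2(p - 1/2)^2 n}$, which vanishes as $n \to \infty$. Combining the two ingredients yields
\be
\mathbb{P}_{\rm S}(n, k, \alpha, p) \geq \bigl(1 - e^{-2(\alpha/k - 1/2)^2 k}\bigr)\bigl(1 - e^{-2(p - 1/2)^2 n}\bigr),
\ee
and for $n$ sufficiently large the right-hand side strictly exceeds $1 - e^{-2(\alpha/k - 1/2)^2 k}$. The strict inequality is reinforced by the fact that the Chv\'atal lower bound on $\bb_b$ improves strictly as $b$ moves away from $\ceil*{n/2}$, so the typical contribution of the Binomial (concentrated near $np > n/2$) is well above the worst case.

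The main obstacle will be in the second step: converting the per-round, per-sample Chv\'atal bound on the hypergeometric into a cumulative lower bound on the absorption probability $\bb_b$, since absorption reflects the full Markov-chain dynamics rather than a single communication round. I anticipate handling this through a drift or coupling argument, comparing the Slush chain with a simpler monotone process whose absorption probability is directly estimable, and leveraging the symmetry of Lemma~\ref{lemma: Slush symmetry} together with the monotonicity of Remark~\ref{remark: Slush absorption monotonicity} to reduce the analysis to the critical configuration $b = \ceil*{n/2}$. This is also where the ``large enough $n$'' hypothesis must be invoked, since the per-sample Hoeffding bound controls hypergeometric tails only in the limit where sampling with and without replacement approximately agree.
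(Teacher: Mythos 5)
Your architecture matches the paper's proof: write $\mathbb{P}_{\rm S}$ as $\mathbb{E}[\bb_B]$ with $B\sim\mathrm{Bin}(n,p)$, discard the minority terms, reduce the majority part to the critical configuration $b=\ceil*{{n\ov2}}$ by monotonicity of $\bb_b$, and show $\mathbb{P}\big(B\geq\ceil*{{n\ov2}}\big)\to 1$ (the paper uses the weak law of large numbers where you use Hoeffding; this difference is immaterial). The genuine gap is exactly the step you flag as your ``main obstacle'': you only possess a Chv\'atal/Hoeffding tail bound for a \emph{single} hypergeometric sample of size $k$, and the conversion of this per-round estimate into a bound on the \emph{absorption} probability of the full birth--death chain is deferred to an unexecuted drift or coupling argument. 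That conversion is the entire content of the inequality $\rr_b \leq e^{-2(\alpha/k-1+b/n)^2k}$ for $b\geq\ceil*{{n\ov2}}$, which the paper does not re-derive but imports as Theorem~1 of~\cite{rocket2020scalable} (displayed as \eqref{Chvatal bound}); evaluated at $b=\ceil*{{n\ov2}}$ it immediately yields the stated exponent. So your plan can be completed by citation rather than by a new Markov-chain analysis, but as written the central quantitative inequality is assumed, not proved.

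A secondary issue: your displayed conclusion $\big(1-e^{-2(\alpha/k-1/2)^2k}\big)\big(1-e^{-2(p-1/2)^2n}\big)$ is strictly \emph{smaller} than $1-e^{-2(\alpha/k-1/2)^2k}$ for every finite $n$, so it cannot ``strictly exceed'' the target as claimed. The strictness must come from the slack you mention only in passing: for odd $n$ one has $\ceil*{{n\ov2}}/n>{1\ov2}$, so the Chv\'atal exponent at the critical point beats $-2(\alpha/k-1/2)^2k$ by a margin of order $k/n$, and this polynomially small gain dominates the exponentially small binomial-tail loss once $n$ is large --- which is precisely where the ``large enough $n$'' hypothesis enters. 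The paper's own proof is terse on this point as well, but if you keep the product form you need to make this trade-off explicit, since otherwise the final strict inequality does not follow.
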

\begin{proof}
    The proof of the statement uses the Chvatal tail bounds of the hypergeometric distribution~\cite{chvatal1979tail}. In particular, Theorem~1 of~\cite{rocket2020scalable} shows that
    \be \label{Chvatal bound}
        \rr_b \leq e^{-2\left({\alpha \ov k} - 1 + {b\ov n} \right)^2 k}~,
    \ee
    where $b \geq \ceil*{{n\ov 2}}$. Then, we have
    \be
    \mathbb{P}_{\rm S} > \sum_{b=\ceil*{{n\ov2}}}^n \mathbb{P}(S_n = b) \left( 1- \rr_b\right) \, > \,  \sum_{b=\ceil*{{n\ov2}}}^n \mathbb{P}(S_n = b) \left( 1- \rr_{\ceil*{{n\ov 2}}}\right)  ~,
    \ee
    with $S_n = \sum_{i=1}^n X_i$ being the sum of the Bernoulli trials associated to the base learners. For the last inequality, we use the fact that the value $b = \ceil*{{n\ov 2}}$ minimizes the expression $(1-\rr_b)$ for the range of the sum. Furthermore, it is not difficult to see that
    \bea
    \mathbb{P}\left(S_n \geq \ceil*{{n\ov 2}}\right) \, = \, \mathbb{P}\left({S_n \ov n} > {1\ov 2} \right)  \, \geq \,  \mathbb{P}\left(\left|{S_n \ov n}-p\right| < p-{1\ov 2} \right)~, 
    \eea
    which, by the weak law of large numbers, converges to 1 for large $n$. The result follows immediately.
\end{proof} 

Tail bounds are rather conservative, and thus the actual accuracy of the Slush algorithm is expected to be considerably better than this bound. Nonetheless, the bound provides a different perspective on the performance on the Slush algorithm. Specifically, for accurate classifiers with $p>{1\ov 2}$, this lower bound improves as the threshold parameter $\alpha$ for accepting a query increases. On the other hand, it was argued in~\cite{amoressesar2024analysis} that values of $\alpha$ closer to $k/2$ are more suitable for Byzantine consensus protocols. This is thus an important distinction between the objectives of consensus protocols in distributed computing and those of protocols designed for consensus ML. Nevertheless, Theorem~\ref{thm: Slush convergence} shows that consensus learning can leverage protocols primarily designed to safeguard distributed networks.

Other finite $n$ bounds, similar in spirit to that of Proposition~\ref{prop: Slush lower bound}, can be found using bounds for the binomial distribution~\cite{hoeffding1994probability}. However, we would like to find a stronger result for the case of large $n$. Ultimately, this search concludes with the second statement of Theorem~\ref{thm: Slush convergence}, which conveys that the Slush algorithm is indeed an efficient algorithm. The proof of the statement relies on supermajority rules, which we introduce next.

\begin{figure}[t] %
    \centering
    \includegraphics[width=0.65\textwidth]{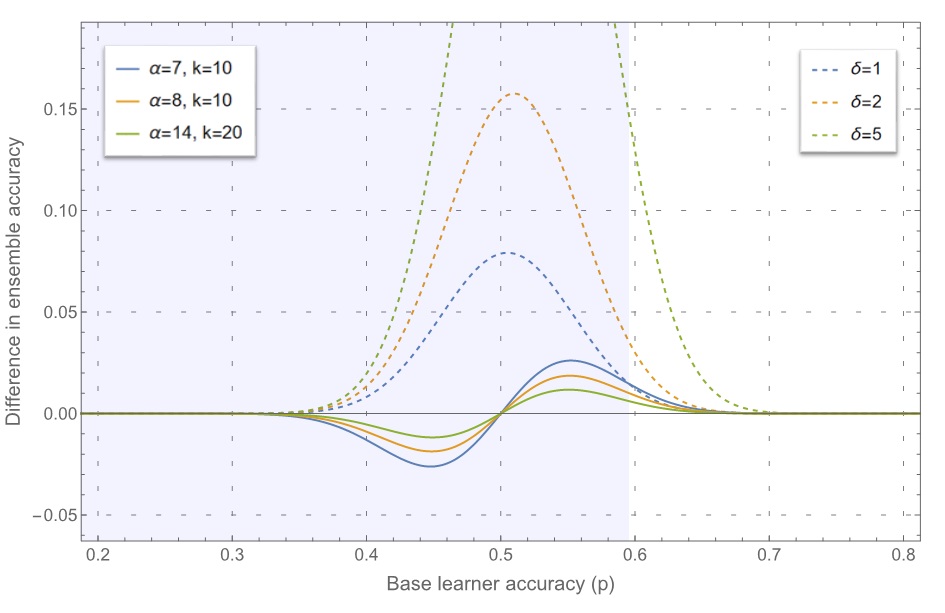}
    \caption{Solid lines: difference in accuracy between the simple majority rule and the homogeneous Slush algorithm, $\Delta\mathbb{P}$, with $n=101$, against the base learner accuracy $p$. Dashed lines: differences in accuracy between majority and $\delta$-supermajority rules. The shaded area shows the region where the Slush algorithm with $\alpha =7$, $k=10$ outperforms the $\delta=1$ supermajority.}\label{fig: Homogeneous Ensembles}
\end{figure}%

\subsubsection{Supermajority rules} \label{sec: supermajority}

The simple majority rule discussed so far requires that more than half of the votes are cast for an output. Supermajority rules increase this acceptance threshold and can lead to enhanced stability in the voting process, as well as increased legitimacy in the final decision~\cite{nitzan1984qualified}.

\begin{definition}[$\delta$-supermajority rule]  \label{def: supermajority rule}
    A $\delta$-supermajority rule is a majority rule for which the acceptance threshold required to choose an alternative is $\ceil*{{n\ov2}} + \delta$. Alternatively, one can use a fraction $q$ of the votes, with $\ceil*{qn}$ votes required for taking a decision.\footnote{This decision rule is not decisive, in the sense of footnote~\ref{footnote: DDR}.}
\end{definition}

The following theorem shows that the Slush algorithm can outperform any $\delta\geq 1$ supermajority rule, even in a homogeneous setting. This is rather noteworthy since low $\delta$ supermajority rules are still very close to being optimal decision rules.

\begin{theorem}\label{thm: Slush vs delta-majority}
    For any $\delta > 0$, there exists a value $p_{\rm th}(\delta) > {1\ov 2}$ such that the accuracy of Slush algorithm built with a homogeneous group of independent base learners with accuracies $p \leq p_{\rm th}(\delta)$ will be larger than the accuracy of a $\delta$-supermajority rule.
\end{theorem}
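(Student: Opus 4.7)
\noindent The plan is to use the symmetry of the Slush protocol on honest networks (Lemma~\ref{lemma: Slush symmetry}) to show that at $p = {1\ov 2}$ the Slush accuracy equals exactly ${1\ov 2}$, while for any $\delta\geq 1$ the supermajority accuracy is strictly smaller there because of its non-decisive middle region. Continuity of both accuracies in $p$ will then extend this strict separation to a right-neighbourhood of ${1\ov 2}$, which gives the desired threshold $p_{\rm th}(\delta)$.

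First I would write both quantities as explicit polynomials in $p$. The Slush accuracy is already given by~\eqref{Slush prob homogeneous}. Treating any undecided outcome as an error in line with Definition~\ref{def: accuracy}, the supermajority accuracy takes the form
\be
    \mathbb{P}_{\delta\text{-sup}}(p) \; = \; \sum_{b = \ceil*{{n\ov 2}}+\delta}^{n} \binom{n}{b}\, p^b (1-p)^{n-b}\,.
\ee

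Evaluating at $p = {1\ov2}$, I would pair the $b$-th summand of the Slush expression with the $(n-b)$-th, using $\binom{n}{b} = \binom{n}{n-b}$ together with the honest-network identity $\bb_b + \bb_{n-b} = 1$ (a direct consequence of Lemma~\ref{lemma: Slush symmetry} and $\rr_b + \bb_b = 1$). The reindexing $b \mapsto n-b$ then yields $\sum_b \binom{n}{b}\,\bb_b = 2^{n-1}$, so $\mathbb{P}_{\rm S}(n,k,\alpha,{1\ov 2}) = {1\ov 2}$ uniformly in the Slush parameters $\alpha$ and $k$. For the supermajority, the odd-$n$ identity $\sum_{b = \ceil*{{n\ov 2}}}^n \binom{n}{b} = 2^{n-1}$ shows that raising the threshold by any $\delta \geq 1$ strips away strictly positive binomial mass, so $\mathbb{P}_{\delta\text{-sup}}({1\ov 2}) < {1\ov 2}$.

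Both $\mathbb{P}_{\rm S}$ and $\mathbb{P}_{\delta\text{-sup}}$ are polynomials in $p$, hence continuous on $[0,1]$. Combined with the strict inequality at ${1\ov2}$, this immediately yields a right-neighbourhood of ${1\ov 2}$ on which the Slush algorithm strictly outperforms the $\delta$-supermajority rule; defining $p_{\rm th}(\delta)$ as the supremum of such a neighbourhood then closes the argument. I do not expect a serious obstacle: the pairing is a clean involution for odd $n$ (no fixed middle index to worry about), and the continuity step is standard. The only real limitation of this route is that it yields existence only; extracting an explicit $p_{\rm th}(\delta)$ in terms of $(n,k,\alpha,\delta)$ would require finer control of $\bb_b$ near $b = \ceil*{{n\ov 2}}$, which lies beyond what the statement actually asks for.
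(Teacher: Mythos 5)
Your proof is correct, but it takes a genuinely different route from the paper's. You anchor the comparison at the single point $p={1\over 2}$: the neutrality of Slush (Lemma~\ref{lemma: Slush symmetry} together with $\rr_b+\bb_b=1$, giving $\bb_b+\bb_{n-b}=1$) forces $\mathbb{P}_{\rm S}(n,k,\alpha,{1\over 2})={1\over 2}$ exactly, while the $\delta$-supermajority strips strictly positive binomial mass from $\sum_{b\geq\ceil*{n/2}}\binom{n}{b}=2^{n-1}$ and so sits strictly below ${1\over2}$ there; continuity of two polynomials does the rest. The paper instead reindexes the deficit $\Delta\mathbb{P}=\mathbb{P}^{(\delta)}_{\rm Maj}-\mathbb{P}_{\rm S}$ via $b\mapsto n+2\delta-b$ and compares term by term, which produces \emph{explicit} thresholds $p_{\rm th}(b;\delta)=1-\bigl(1+\tau(b;\delta)\bigr)^{-1}$ with $\tau$ built from ratios of absorption probabilities and binomial coefficients. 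The trade-off is exactly the one you flag at the end: your argument is shorter, uses only the symmetry of the protocol (so it transfers verbatim to any neutral probabilistic consensus), and fully proves the existence statement as written; but it yields no quantitative handle on $p_{\rm th}(\delta)$. That quantitative handle is not idle in this paper --- the explicit form \eqref{tau(b)} is what the proof of Theorem~\ref{thm: Slush convergence} leans on to show $p_{\rm th}(q)>q$ for $q={1\over2}+{1\over n}$ uniformly in $n$, something a bare continuity neighbourhood (which could a priori shrink as $n$ grows) cannot deliver. Two minor points: take $p_{\rm th}(\delta)$ to be any point strictly inside your neighbourhood rather than its supremum, where the strict inequality could in principle degenerate to equality; and your argument covers $p\in[{1\over2},p_{\rm th}]$ only, though that matches the paper's evident intent (cf.\ $p_{\rm th}(0)=0.5$ in Table~\ref{tab: threshold (honest)}) despite the literal ``$p\leq p_{\rm th}(\delta)$'' in the statement.
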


The proof of this theorem can be found in Appendix~\ref{app: proofs}. To get a grasp on how the threshold value $p_{\rm th}$ changes with $\delta$, we perform a numerical analysis below.

\paragraph{Numerical bounds.} Figure~\ref{fig: Homogeneous Ensembles} shows a comparison between majority aggregation rules and the Slush algorithm, for a homogeneous group of independent learners. We highlight that the Slush algorithm appears to perform exceptionally well compared to supermajority rules, even for $\delta=1$. This indicates that the bound found in Theorem~\ref{thm: Slush vs delta-majority} should increase rapidly with $\delta$. Table~\ref{tab: threshold (honest)} gives some numerical values\footnote{More precisely, the values shown in Table~\ref{tab: threshold (honest)} are lower bounds for $p_{\rm th}$.} for the threshold value $p_{\rm th}$, approximated to two decimal places, for $n = 101$ and a varying $\delta$. These values validate our expectations that $p_{\rm th}$ increases rather fast with $\delta$.
\begin{table}[!ht]
    \centering
    \begin{tabular}{cccccc}
      \toprule
       $\delta$   & 0 & 1 & 2 & 3 & 4 \\ \midrule
    $\{6, 10 \}$  & $0.5$ & $0.56$ & $0.62$ & $0.68$ & $0.73$ \\
      $\{7, 10 \}$  & $0.5$ & $0.6$ & $0.68$ & $0.76$ & $0.83$\\ 
      $\{8, 10 \}$ & $0.5$ & $0.63$ & $0.74$ & $0.83$ & $>0.87$\\ 
      $\{14, 20\} $ & $0.5$ & $0.7$ & $0.84$ & $0.87$ & $>0.88$\\ \bottomrule 
    \end{tabular}
    \caption{Lower bounds on the threshold values $p_{\rm th}(\delta)$ for $n=101$ and varying values of $\{\alpha, k\}$, as indicated in the first column.}
    \label{tab: threshold (honest)}
\end{table}

Remarkably, supermajority rules can still be shown to satisfy jury theorems, as long as the base learner accuracy is larger than the acceptance threshold -- see Theorem~2 of~\cite{fey2003note}. These results allow us to improve on the ``large $n$" behaviour of the Slush algorithm from Proposition~\ref{prop: Slush lower bound}. In the following, we shall use the fraction $q > {1\ov 2}$ of votes required to accept a proposal when discussing a supermajority rule, as introduced in Definition~\ref{def: supermajority rule}.

\begin{lemma}   \label{lemma: Slush convergence}
    For large enough $n$, the accuracy of the Slush algorithm with homogeneous and independent base learners can be brought arbitrarily close to 1, if the base learner accuracy $p$ satisfies
    \be \label{Slush convergence interval}
        q < p < p_{\rm th}(q)~,
    \ee
    for some $q > {1\ov 2}$, where $p_{\rm th}(q)$ is the value below which the Slush algorithm outperforms the $q$-supermajority rule. 
\end{lemma}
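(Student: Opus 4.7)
The plan is to sandwich the Slush accuracy between $1$ and the accuracy of a $q$-supermajority rule. The key idea is that, by the very definition of $p_{\rm th}(q)$, we have $\mathbb{P}_{\rm S}(n, k, \alpha, p) \geq \mathbb{P}_{q}(n, p)$ whenever $p < p_{\rm th}(q)$, where $\mathbb{P}_{q}$ denotes the accuracy of the $q$-supermajority rule. It therefore suffices to show that $\mathbb{P}_{q}$ itself approaches $1$ as $n$ grows, provided $p > q$.

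For this step, let $S_n = \sum_{i=1}^n X_i$ denote the sum of the Bernoulli trials associated with the independent base learners, so that the $q$-supermajority rule succeeds exactly when $S_n \geq \ceil*{qn}$. Since $\ceil*{qn}/n \to q < p$, and since the weak law of large numbers gives $S_n/n \to p$ in probability, we obtain
\be
\mathbb{P}_{q}(n, p) = \mathbb{P}\left( \frac{S_n}{n} \geq \frac{\ceil*{qn}}{n}\right) \to 1
\ee
as $n \to \infty$. This is essentially the content of Theorem~2 of~\cite{fey2003note} alluded to immediately before the lemma. Combining this convergence with the sandwich $1 \geq \mathbb{P}_{\rm S} \geq \mathbb{P}_{q}$, which holds on the whole interval $q < p < p_{\rm th}(q)$, yields $\mathbb{P}_{\rm S} \to 1$ and establishes the claim.

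The main obstacle, and the point that must be verified independently of the sandwich argument, is that $p_{\rm th}(q)$ exists and that the interval $(q, p_{\rm th}(q))$ is non-empty for some $q > 1/2$. Theorem~\ref{thm: Slush vs delta-majority} produces such a threshold for $\delta$-supermajority rules with \emph{fixed} $\delta$, but a $q$-supermajority rule with fixed fraction $q > 1/2$ corresponds to $\delta \sim (q - 1/2)\, n$ growing linearly in $n$, so its proof does not directly transfer. To close this gap, I would adapt the argument of Theorem~\ref{thm: Slush vs delta-majority} to the fixed-$q$ regime by exploiting the monotonicity of $\bb_b$ from Remark~\ref{remark: Slush absorption monotonicity} together with the Chvatal tail bound \eqref{Chvatal bound} on $\rr_b$, in order to control the relevant binomial sums in \eqref{Slush prob homogeneous} and verify that $p_{\rm th}(q) > q$. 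Once this technical step is in place, the sandwich argument above delivers the lemma.
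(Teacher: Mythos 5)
Your proposal is correct and follows essentially the same route as the paper: sandwich the Slush accuracy between $1$ and the $q$-supermajority accuracy via Theorem~\ref{thm: Slush vs delta-majority}, and invoke Theorem~2 of~\cite{fey2003note} (the law-of-large-numbers argument you spell out) for the convergence of the supermajority rule when $p > q$. The ``main obstacle'' you flag --- that $p_{\rm th}(q) > q$ must hold uniformly as $n$ grows, since fixed $q$ corresponds to $\delta$ growing linearly in $n$ --- is precisely the issue the paper also acknowledges immediately after the lemma and defers to the proof of Theorem~\ref{thm: Slush convergence} in Appendix~\ref{app: proofs}, so it is not a defect of your argument for the lemma as stated.
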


This result combines Theorem~\ref{thm: Slush vs delta-majority} with Theorem~2 of~\cite{fey2003note}, and thus the proof is straightforward. The latter claims that the $q$-supermajority rule leads to unit success probability for large $n$ as long as $p > q$. To get a sense of how the interval in Lemma~\ref{lemma: Slush convergence} evolves with $n$, we list approximate values of $p_{\rm th}(q)$ in Table~\ref{tab: threshold for varying n}.
\begin{table}[!ht]
    \centering
    \begin{tabular}{ccccc}
      \toprule
      $n$   & 51 & 101 & 201 & 501 \\\midrule
    $p_{th}(q)$ & 0.92 & $0.88$ & $0.8$ & $0.73$   \\ \bottomrule
    \end{tabular}
    \caption{Lower bound on threshold value $p_{\rm th}(q)$, for $q = 0.55$, $k=10$, $\alpha = 7$ and varying $n$.}
    \label{tab: threshold for varying n}
\end{table}
Of course, it is not obvious that $p_{\rm th}(q)$ remains greater than $q$ as $n$ increases further. Resolving this issue is crucial for proving Theorem~\ref{thm: Slush convergence}.

Lemma~\ref{lemma: Slush convergence} makes a clear statement on the accuracy of the Slush algorithm for $q < p < p_{\rm th}(q)$. We would like to extend this interval further, for any $p > p_{\rm th}(q)$ values. For this, we use the following result.

\begin{lemma}    \label{lemma: Slush monotonicity}
    The accuracy of the Slush algorithm with homogeneous and independent base learners is a strictly monotonically increasing function of the base learner accuracy.
\end{lemma}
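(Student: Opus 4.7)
The plan is to verify strict monotonicity by direct differentiation of the explicit formula
\[
\mathbb{P}_{\rm S}(n,k,\alpha,p) \;=\; \sum_{b=0}^{n} \binom{n}{b}\bb_b\,p^b(1-p)^{n-b},
\]
which I already have from \eqref{Slush prob homogeneous}. The idea is to apply a standard Abel/forward-difference trick: using $\binom{n}{b}b = n\binom{n-1}{b-1}$ and $\binom{n}{b}(n-b) = n\binom{n-1}{b}$, then reindexing the first of the two resulting sums by $b \mapsto b-1$, the derivative collapses to
\[
\frac{d\mathbb{P}_{\rm S}}{dp} \;=\; n\sum_{b=0}^{n-1}\binom{n-1}{b}\bigl(\bb_{b+1} - \bb_b\bigr)\,p^b(1-p)^{n-1-b}.
\]
This is the key identity; everything else is essentially a sign check.

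Next I would invoke Remark~\ref{remark: Slush absorption monotonicity}, which gives $\bb_{b+1} - \bb_b \geq 0$ for every $b \in \{0,\dots,n-1\}$. Hence each term in the sum above is non-negative, so $\mathbb{P}_{\rm S}$ is at least weakly increasing on $(0,1)$. To upgrade this to a strict inequality I would observe that $\bb_0 = 0$ and $\bb_n = 1$ (since the configurations with no blue, respectively no red, nodes are the absorbing states themselves), so the telescoping sum $\sum_{b=0}^{n-1}(\bb_{b+1} - \bb_b) = 1$ forces at least one of the consecutive differences to be strictly positive. For any $p \in (0,1)$ all the weights $\binom{n-1}{b}p^b(1-p)^{n-1-b}$ are strictly positive, so the derivative is strictly positive throughout $(0,1)$, establishing strict monotonicity there. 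The boundary behaviour $\mathbb{P}_{\rm S}(0) = \bb_0 = 0$ and $\mathbb{P}_{\rm S}(1) = \bb_n = 1$ then extends the strict monotonicity to the closed interval $[0,1]$.

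There is no real obstacle in this argument: the whole proof rests on the previously recorded monotonicity of the absorption probabilities in the number of blue nodes. The only mild subtlety is making sure one does not try to deduce strict monotonicity of $\mathbb{P}_{\rm S}$ from strict monotonicity of $b \mapsto \bb_b$ on the whole range (which need not hold, since $\bb_b$ may be constant in the ``saturated'' regions $b < \alpha$ and $b > n-\alpha$). Using only the boundary values $\bb_0 = 0$ and $\bb_n = 1$ sidesteps this issue entirely. A coupling-based alternative would also work (couple $S_n^{(p_1)} \leq S_n^{(p_2)}$ for $p_1 < p_2$ via a common uniform source, then apply Remark~\ref{remark: Slush absorption monotonicity} termwise and note that the event $\{S_n^{(p_1)} = 0,\,S_n^{(p_2)} = n\}$ has probability $(p_2-p_1)^n > 0$), but the differentiation route is shorter and produces a formula that could be reused elsewhere in the paper.
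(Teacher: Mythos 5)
Your proof is correct and follows essentially the same route as the paper: both arguments reduce to an Abel/summation-by-parts rearrangement in which the derivative of $\mathbb{P}_{\rm S}$ becomes a nonnegative combination of the differences $\bb_{b+1}-\bb_b$, whose sign is supplied by Remark~\ref{remark: Slush absorption monotonicity} (the paper packages this as positivity of the coefficients of the increasing functions $F(b^*,n;p)$, which is the same identity you obtain by differentiating first and reindexing). Your explicit handling of strictness via the telescoping sum $\sum_b(\bb_{b+1}-\bb_b)=\bb_n-\bb_0=1$ is in fact slightly more careful than the paper's, which glosses over the fact that the individual differences vanish in the saturated ranges $b<\alpha$ and $b>n-\alpha$.
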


\begin{proof}
    Let us define the function $F(b^*, n; p)$ as:
    \be
        F(b^*, n; p) = \sum_{b = b^*}^n \binom{n}{b} p^b (1-p)^{n-b} = 1 - \sum_{b = 0}^{b^*-1} \binom{n}{b} p^b (1-p)^{n-b}~.
    \ee
    The Slush algorithm accuracy defined in \eqref{Slush prob homogeneous} can be also expressed in terms of this function as follows:
    \be \label{Slush prob homogeneous rewritten}
        \mathbb{P}_S(p) =  \bb_{0}  F(0, n; p) + (\bb_1 - \bb_0)  F(1, n; p)  + \ldots + (\bb_n - \bb_{n-1})  F(n, n; p)~.
    \ee
    The veracity of this statement can be checked backwards: from \eqref{Slush prob homogeneous rewritten} we can collect the terms $p^b(1-p)^{n-b}$ for all $b$ and see that \eqref{Slush prob homogeneous} is recovered. Note that the coefficients of all $F(b^*, n; p)$ terms are positive due to Remark~\ref{remark: Slush absorption monotonicity}. As such, if $F(b^*, n; p)$ were an increasing function of $p$, then so would $\mathbb{P}_S(p)$. To show this, we look at the first derivative
    \bea
        {d \ov dp} F(b^*, n; p) & = - \sum_{b=0}^{b^* - 1} \binom{n}{b} p^{b-1}(1-p)^{n-b-1} (b-np) \\
        & = \binom{n}{b^*} b^*\,p^{b^*-1} (1-p)^{n-b^*} > 0~,
    \eea
    which is clearly positive. The proof of the identity used on the second line can be found in Appendix~\ref{app: Avalanche}, Lemma~\ref{lemma: F(p) monotonicity}. This concludes the proof.
\end{proof}

\paragraph{Towards proving Theorem~\ref{thm: Slush convergence}.} Equipped with the above results, we are now ready to sketch the proof for Theorem~\ref{thm: Slush convergence}. The full proof of the theorem can be found in Appendix~\ref{app: proofs}.\\

\begin{SketchProof}
The first statement of the theorem $(\mathbb{P}_S \geq p)$ follows rather simply from an application of Lemma~\ref{lemma: Slush symmetry}. Nevertheless, a proof of the second statement $(\mathbb{P}_S \to 1)$ requires multiple ingredients. This statement builds on Theorem~\ref{thm: Slush vs delta-majority}, according to which large accuracies for the Slush algorithm can occur whenever the base learner accuracy $p$ is in the interval $(q, p_{\rm th}(q))$. Using the monotonicity of the Slush algorithm proved in Lemma~\ref{lemma: Slush monotonicity}, we can eliminate the upper bound of this interval. 
    
However, the only remaining issue is showing that $q < p_{\rm th}(q)$, for any $n$, such that Theorem~\ref{thm: Slush vs delta-majority} is valid. This part of the proof is rather tedious, and can be found in  Appendix~\ref{app: proofs}.
\end{SketchProof}

\subsection{Diversifying the base learners}\label{sec: diversity}

The Slush algorithm combines the distribution of outputs of the base learners with an additional random variable responsible for the communication phase. For a general heterogeneous setting, the binomial distribution from the homogeneous problem is replaced by a Poisson binomial distribution. The ratio $\kappa_b$ introduced in the proof of Theorem~\ref{thm: Slush vs Majority hom} was shown to dictate the behaviour of the Slush algorithm, as compared to the majority rule. For future reference, we define it more formally below.

\begin{definition}[Control ratio]
    Consider a learning problem with $n$ base learners. Let $b$ be the number of blue nodes at the start of the communication phase, and let $\mathbb{P}(S_n= b)$ be the probability that this state can arise from the initial outputs of the base learners, where $S_n = \sum X_i$ is the sum over the Bernoulli random variables assigned to the base learners. Then, we define the control ratio as:
\be \label{def: control ratio}
    \kappa_{b} =  {\rr_{b} \ov \bb_{n-b}} \times {\mathbb{P}(S_n = b) \ov \mathbb{P}(S_n = n-b)}~,
\ee
where $\rr_{b}$ (respectively $\bb_{b}$) are the absorption probabilities in the \emph{all-red} (\emph{all-blue}, respectively) states, starting from a state with $b$ blue nodes.
\end{definition}

This particular definition of the control ratio will be relevant for values $b \geq \ceil*{{n\ov 2}}$. The control ratio can be used to deduce a set of simple sufficient conditions for the Slush algorithm to outperform the majority rule: $\kappa_{b} < 1$. The argument for this statement is very similar to that presented in the proof of Theorem~\ref{thm: Slush vs Majority hom}. As such, a simple analysis of this ratio leads to insights into the type of problems where the Slush algorithm would be more suitable. Schematically, we can interpret the control ratio as
\be \label{control ratio schematically}
    \kappa = \left( \begin{matrix}
        \textit{Asymmetry of} \\
        \textit{consensus protocol}
    \end{matrix} \right) \times \left( \begin{matrix}
        \textit{Diversity of} \\
        \textit{base learners}
    \end{matrix} \right).
\ee
Here, the \emph{diversity} is measured by the variance of the distribution of accuracies, as per Definition~\ref{def: diversity}. Based on these two factors, we have the following two cases of interest:
\begin{itemize}
    \item Asymmetric problems, \ie problems which break the symmetry of the Slush protocol from Lemma~\ref{lemma: Slush symmetry} in favour of the correct output.
    \item Heterogeneous problems, \ie situations in which we deal with a diverse group of learners, which include both strong and weak learners.
\end{itemize}

\noindent In view of these conclusions, we consider a semi-homogeneous setting, with two performance groups. The main result of this subsection is the following theorem.

\begin{theorem} \label{thm: performance groups}
   (Performance groups) Consider a binary classification task with two homogeneous groups of classifiers of sizes $n_1$ and $n_2$ with $n_1> n_2$, having accuracies $p_1$ and $p_2$, respectively. Assume that the classifiers are independent and let $n = n_1 + n_2$, with $n$ odd. Then, for $p_2 < {1\ov 2}$, there exists $p_{\max} > {1\ov 2}$ such that the Slush algorithm outperforms the majority rule for any $p_1$ in the region $[0, p_{\rm max}]$, where $p_{\max}$ is bounded by
    \be
       p_{\rm max} \leq {\ceil*{{n\ov 2}} \ov 1+ n_1}~,
    \ee
    with the upper bound reached for the limiting case $p_2 = 0$.
\end{theorem}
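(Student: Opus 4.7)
The plan is to exploit the control-ratio formulation of Section~\ref{sec: diversity}. Write $S_n = X + Y$ with independent $X \sim \mathrm{Bin}(n_1, p_1)$ and $Y \sim \mathrm{Bin}(n_2, p_2)$ for the number of initial blue nodes. Using $\rr_b = \bb_{n-b}$ from Lemma~\ref{lemma: Slush symmetry} and folding the lower half of the accuracy sum around $b = \ceil*{n/2}$ exactly as in the proof of Theorem~\ref{thm: Slush vs Majority hom}, the accuracy gap becomes
\[
\Delta\mathbb{P} = \mathbb{P}_{\rm Maj} - \mathbb{P}_{\rm S} = \sum_{b=\ceil*{n/2}}^{n-\alpha} \rr_b\,\bigl[\mathbb{P}(S_n = b) - \mathbb{P}(S_n = n-b)\bigr].
\]
A sufficient condition for the Slush algorithm to outperform the majority rule is therefore $\kappa_b \leq 1$ for every $b$ in the range, where $\kappa_b = \mathbb{P}(S_n = b)/\mathbb{P}(S_n = n-b)$ is the honest-nodes specialisation of the control ratio \eqref{def: control ratio}. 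I would then define $p_{\max}(p_2)$ to be the supremum of $p_1$ at which all these inequalities hold.

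The existence of $p_{\max} > 1/2$ is obtained by a continuity argument anchored at $p_1 = 1/2$. The substitution $Y \mapsto n_2 - Y$, combined with the symmetry of $\mathrm{Bin}(n_1, 1/2)$, yields the identity $\mathbb{P}(S_n = n-b) = \mathbb{P}(X + Y^* = b)$ with $Y^* \sim \mathrm{Bin}(n_2, 1-p_2)$, hence $\kappa_b\,\kappa_{n-b} = 1$ for every $b$ at $p_1 = 1/2$. Since $Y \leq_{\mathrm{lr}} Y^*$ for $p_2 < 1/2$ and monotone likelihood-ratio ordering is preserved under convolution with the log-concave law of $X$, $\kappa_b$ is monotonically decreasing in $b$ and must therefore cross $1$ at (or around) $b = n/2$; combined with the product identity, $\kappa_b < 1$ strictly throughout $b \in [b_*, n-\alpha]$ at $p_1 = 1/2$. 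These strict inequalities persist for $p_1$ slightly above $1/2$, which gives $p_{\max}(p_2) > 1/2$.

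For the upper bound, the single inequality at $b_* = \ceil*{n/2}$ already forces $p_{\max}(p_2) \leq p^*(p_2)$, where $p^*(p_2)$ is the $p_1$ at which $g(p_1, p_2) := \mathbb{P}(S_n = b_*) - \mathbb{P}(S_n = n-b_*)$ vanishes. Conditioning on $Y = j$,
\[
g(p_1, p_2) = \sum_{j=0}^{n_2} \binom{n_2}{j} p_2^j (1-p_2)^{n_2-j}\,a_j(p_1),\qquad a_j(p_1) := \mathbb{P}(X = b_* - j) - \mathbb{P}(X = n - b_* - j).
\]
The standard ratio $\mathbb{P}(X = k+1)/\mathbb{P}(X = k) = (n_1-k)p_1/[(k+1)(1-p_1)]$ gives the unique solution $a_0(p_1) = 0$ at $p_1 = (n+1)/[2(n_1+1)] = \ceil*{n/2}/(n_1+1)$, settling the case $p_2 = 0$ and pinning down the claimed upper bound. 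At that same $p_1$, the analogous ratio for $j \geq 1$ reduces (after substituting $n_1 - n_2 = 2n_1 - n$) to the positivity of $(n_1 - n_2 + 1 + 2j)(n+1) - (n+1 - 2j)(n_1 - n_2 + 1) = 4j(n_1 + 1)$, hence $a_j > 0$. Thus $g(\ceil*{n/2}/(n_1+1), p_2) > 0$ for every $p_2 > 0$, and combined with $g(1/2, p_2) < 0$ from the previous paragraph, the intermediate value theorem locates the zero $p^*(p_2)$ strictly below $\ceil*{n/2}/(n_1+1)$, with equality only in the limit $p_2 \to 0$.

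The main obstacle is justifying the MLR preservation under convolution that underlies the existence argument: a Poisson binomial lacks the pointwise symmetry of the single binomial, so a rigorous proof has to invoke a general preservation theorem for log-concave distributions or carry out an explicit pointwise comparison. A secondary, milder nuisance is checking that the constraint at $b = b_*$ is the binding one at $p_2 = 0$ (so that the upper bound $\ceil*{n/2}/(n_1+1)$ is actually attained rather than merely an upper estimate); this reduces to the elementary inequality $(n+1)^2 \geq (n_1-n_2+1)^2$ when compared against the constraint at $b_* + 1$, and is straightforward to verify.
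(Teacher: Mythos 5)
Your proposal is correct in its skeleton and reaches the bound by the same core computation as the paper: both reduce the comparison to the control ratio $\kappa_b = \mathbb{P}(S_n=b)/\mathbb{P}(S_n=n-b)$ over $\ceil*{{n\ov 2}} \leq b \leq n-\alpha$ via Lemma~\ref{lemma: Slush symmetry}, and both extract $\ceil*{{n\ov2}}/(n_1+1)$ from the adjacent-ratio identity $\mathbb{P}(X=m+1)/\mathbb{P}(X=m) = (n_1-m)p_1/[(m+1)(1-p_1)]$ at $p_2=0$. Where you genuinely diverge is the treatment of $p_2\in(0,{1\ov2})$. The paper works almost entirely in the limit $p_2=0$, where $\kappa_b = \tau_b\,(p_1/(1-p_1))^{2b-n}$ is explicit; it identifies $b=\ceil*{{n\ov2}}$ as the binding constraint by proving $\tau_b^{1/(2b-n)}$ is strictly decreasing in $b$, checks that $p_1=p_2={1\ov2}$ gives $\kappa_b=1$ via the Vandermonde identity \eqref{hypergeometric distr normalisation}, and then simply \emph{asserts} that the general case interpolates between the two limits. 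Your two replacements for that assertion are both improvements: the likelihood-ratio comparison of $X+Y$ against $X'+Y^*$ at $p_1={1\ov2}$ gives existence of $p_{\max}>{1\ov2}$ rigorously (the preservation step you flag as the main obstacle is standard — binomial pmfs are log-concave/PF$_2$ and $\leq_{\rm lr}$ is closed under convolution of independent log-concave laws, so you can just cite it rather than reprove it), and the conditioning decomposition $g=\sum_j w_j a_j$ with $a_0=0$ and $a_j>0$ for $j\geq1$ at $p_1=\ceil*{{n\ov2}}/(n_1+1)$ shows the upper bound is \emph{strict} for $p_2>0$, which the paper never actually establishes. Two small things to tighten. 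First, your anchor argument is stated only at $p_1={1\ov2}$, but the theorem needs $\kappa_b\leq1$ on all of $[0,p_{\max}]$; the identical MLR argument with $X^*\sim{\rm Bin}(n_1,1-p_1)$ covers every $p_1\leq{1\ov2}$ at once, so say so explicitly rather than relying on "supremum of the good set," which need not a priori be an interval. Second, for attainment of the bound at $p_2=0$ you must verify that $b=\ceil*{{n\ov2}}$ dominates \emph{every} $b$ in the range, not just $b=\ceil*{{n\ov2}}+1$; this is exactly the monotonicity of $\lambda(b)=\tau_b^{1/(2b-n)}$ that the paper proves, and you would need that (or an equivalent induction) to close that step.
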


The proof of this statement is left to Appendix~\ref{app: proofs}. Let us stress that in this setting the weak learners will still truthfully follow the modelling process. Thus, they can change their state in the communication phase, according to the majority of the sampled nodes, even if $p_2 = 0$. In Section~\ref{sec: simulation}, we will generalise this setting to a completely heterogeneous one. There, we will see that the Slush algorithm can significantly outperform a centralised majority rule.

\subsection{Byzantine tolerance in consensus learning}  \label{sec: Byzantine}

Byzantine fault-tolerant consensus protocols are known for their ability to tolerate failures and withstand malicious attacks on a network. Accordingly, the next logical problem deserving our attention is that of a network which includes such Byzantine participants, as introduced in Definition~\ref{def: Byzantine}.

Malicious nodes will usually follow an \emph{adversarial strategy}, with the aim of steering the network towards their preferred outcome. Note that this is different from the scenario described by Theorem~\ref{thm: performance groups}, as, in that case, all participants behave the same during the communication phase. To describe the behaviour of the network in the presence of such participants, we can consider the extreme scenario in which the Byzantine participants know the correct outcome with certainty, but decide against sharing it.

\begin{definition}[Perfectly Byzantine participant]  \label{def: perfect Byzantine}
    A perfectly malicious (or perfectly Byzantine) participant is a participant who knows with certainty the correct label of any data inputs, and who, when queried, will always respond with the wrong label.
\end{definition}

Given this definition, we can now consider a learning problem which includes a number of perfectly malicious participants.  Let us point out that the scenario described by the following theorem is still one with a high degree of homogeneity; accordingly, the majority rule is expected to perform rather well. 

\begin{theorem} \label{thm: Perfectly malicious nodes}
    (Perfectly malicious nodes) Consider a group of $f < \alpha$ perfectly malicious participants, in an otherwise homogeneous group of independent base learners of size $n$. Let the accuracy of the $c = n-f$ honest base learners be $p$. Then, the majority rule will outperform the Slush algorithm with parameters $k$ and $\alpha$, as long as $ p > {1\ov2}$ and $\alpha \neq \ceil{{n\ov 2}}$. The Slush algorithm will achieve the same accuracy only for $\alpha = \ceil*{{n\ov 2}}$.
\end{theorem}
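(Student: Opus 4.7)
The plan is to closely follow the argument of the proof of Theorem~\ref{thm: Slush vs Majority hom}, with modifications to account for the $f$ perpetually-red Byzantine nodes. Conditioning on the true label being blue (by symmetry this is without loss of generality), each of the $c = n - f$ honest learners votes blue independently with probability $p$, while the Byzantine always vote red. Let $\t\bb_b$ and $\t\rr_b = 1 - \t\bb_b$ denote the absorption probabilities of the Byzantine-modified Slush Markov chain starting from $b$ honest blue nodes. Since $f < \alpha$, an analysis of the reachable thresholds yields $\t\bb_b = 0$ for $b < \alpha$ and $\t\bb_b = 1$ whenever the red threshold is unreachable (in particular, for $b > n - \alpha$ in the regime $\alpha \leq \ceil*{{n\ov 2}}$ of main interest), so that
\be
\mathbb{P}_{\rm S} = \sum_{b=0}^{c} \binom{c}{b} p^b (1-p)^{c-b} \, \t\bb_b~, \qquad \mathbb{P}_{\rm Maj} = \sum_{b=\ceil*{{n\ov 2}}}^{c} \binom{c}{b} p^b (1-p)^{c-b}~,
\ee
the lower limit of the majority sum reflecting that the $f$ Byzantine red votes force the rule to require at least $\ceil*{{n\ov 2}}$ honest blue votes.

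I would next separate the sub-cases in $\alpha$. For $\alpha = \ceil*{{n\ov 2}}$ the Slush absorption profile degenerates to $\mathbbm{1}[b \geq \ceil*{{n\ov 2}}]$ and the two expressions coincide. For $\alpha > \ceil*{{n\ov 2}}$ the additional ``stuck'' middle states of the chain pin $\t\bb_b = 0$ for $b < \alpha$, so $\mathbb{P}_{\rm S}$ is supported on the strict subset $[\alpha, c]$ of the majority range, yielding the inequality directly. The substantive case is $\alpha < \ceil*{{n\ov 2}}$. Writing $n = 2m + 1$ and cancelling the common tail on $[n-\alpha+1,\, c]$, the difference reduces to
\be
\Delta \mathbb{P} \equiv \mathbb{P}_{\rm Maj} - \mathbb{P}_{\rm S} = \sum_{b=m+1}^{n-\alpha} \t\rr_b\, \pi_b ~-~ \sum_{b=\alpha}^{m} \t\bb_b\, \pi_b~, \qquad \pi_b \equiv \binom{c}{b} p^b (1-p)^{c-b}~.
\ee
Pairing the term at index $b \in [m+1,\, n-\alpha]$ of the positive sum with the term at index $n - b \in [\alpha,\, m]$ of the negative sum (the two ranges match bijectively), pointwise non-negativity reduces to showing that the Byzantine control ratio
\be
\t\kappa_b \equiv \frac{\t\rr_b}{\t\bb_{n-b}} \cdot \frac{\binom{c}{b}}{\binom{c}{n-b}} \cdot \left(\frac{p}{1-p}\right)^{2b-n}
\ee
satisfies $\t\kappa_b > 1$ for every $p > {1\ov 2}$ and every $b$ in the pairing range, playing the role of $\kappa_b$ from the proof of Theorem~\ref{thm: Slush vs Majority hom}.

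The main obstacle is the absence of a direct Byzantine analog of Lemma~\ref{lemma: Slush symmetry}: the $f$ always-red nodes break the $b \leftrightarrow n - b$ symmetry of the homogeneous chain and simultaneously spoil the binomial symmetry $\binom{n}{b} = \binom{n}{n-b}$, since the distribution of honest votes is now over $c \neq n$ learners. Heuristically, the same Byzantine perturbation that inflates $\t\rr_b$ above $\rr_b$ deflates $\t\bb_{n-b}$ below $\bb_{n-b}$, so the symmetry $\rr_b = \bb_{n-b}$ from Lemma~\ref{lemma: Slush symmetry} already yields $\t\rr_b / \t\bb_{n-b} \geq 1$; the quantitative crux is to show that this Byzantine enhancement is always strong enough to dominate the combinatorial ratio $\binom{c}{n-b}/\binom{c}{b}$, which may itself exceed unity in the relevant range. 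I expect this to follow either from an explicit coupling between the Byzantine chain and the $n$-node symmetric Slush chain, or from a direct manipulation of the absorption recursions obtained by adapting the machinery of Appendix~\ref{app: Avalanche}. Strictness of the final inequality for $p > {1\ov 2}$ is then inherited from the factor $(p/(1-p))^{2b-n} > 1$, exactly as in the homogeneous case.
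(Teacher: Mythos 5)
Your setup mirrors the paper's proof exactly: the same accuracies $\mathbb{P}_{\rm Maj}$ and $\mathbb{P}_{\rm S}$ as sums over the $c$ honest Bernoulli trials, the same case split on $\alpha$ versus $\ceil*{{n\ov 2}}$, and the same reduction of the substantive case $\alpha < \ceil*{{n\ov 2}}$ to showing that the Byzantine control ratio $\t\kappa_b = \dfrac{\t\rr_b}{\t\bb_{n-b}}\cdot\dfrac{\binom{c}{b}}{\binom{c}{n-b}}\cdot\left(\dfrac{p}{1-p}\right)^{2b-n}$ exceeds $1$ on the pairing range. You also correctly diagnose the crux: the honest symmetry of Lemma~\ref{lemma: Slush symmetry} is broken, a soft bound $\t\rr_b/\t\bb_{n-b}\geq 1$ is \emph{not} sufficient because $\binom{c}{n-b}/\binom{c}{b}$ can exceed unity (e.g.\ $n=11$, $f=2$, $b=6$ gives $\binom{9}{5}/\binom{9}{6}=3/2$), and some quantitative control is required.

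However, at precisely that point your argument stops: ``I expect this to follow either from an explicit coupling \dots or from a direct manipulation of the absorption recursions'' is a statement of intent, not a proof, so the proposal has a genuine gap. The paper closes it with Lemma~\ref{lemma: Slush asymmetry with Byzantine}, which gives the sharp estimate
\be
{\t\rr_b \ov \t\bb_{n-b}} \;>\; \prod_{j=n-b}^{b} {n-j \ov c-j} \;=\; {b!\,(c-b-1)! \ov (n-b-1)!\,(b-f)!}~, \qquad \ceil*{{n\ov 2}} \leq b \leq n-\alpha~,
\ee
derived from the exact closed forms of the absorption probabilities together with the modified rates $\mu_b = b\,{\rm H}(n,n-b,k,\alpha)$, $\lambda_b = (c-b)\,{\rm H}(n,b,k,\alpha)$ and the key identity $\mu_{n-b} = {n-b \ov c-b}\,\lambda_b$ (this is where the hypothesis $f<\alpha$ enters). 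Multiplying this bound by the combinatorial ratio $\binom{c}{b}/\binom{c}{n-b} = (n-b)!\,(b-f)!\,/\,\bigl(b!\,(c-b)!\bigr)$ collapses everything to $\t\kappa_b > {n-b \ov c-b}\,\bigl(p/(1-p)\bigr)^{2b-n} > 1$, which is exactly the margin you were hoping for. Without proving some version of this lemma, your argument does not go through; with it, your write-up would coincide with the paper's.
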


This theorem is the analogous of Theorem~\ref{thm: Slush vs Majority hom} to the scenario involving Byzantine nodes. Its proof makes use of the exact form of the absorption probabilities in the Slush protocol in the presence of Byzantine nodes, as described by Lemma~\ref{lemma: Slush asymmetry with Byzantine}, which can be found in Appendix~\ref{app: Avalanche}. 

The observant reader may have noticed the constraint $\alpha > f$ used in the previous theorem. Recall that we model the Slush protocol as a continuous-time Markov chain, with two absorbing states. The death and birth rates are given by the probability that the next query changes a node's colour for the red or blue colours, respectively. If $f$ were to be larger than or equal to $\alpha$, then the state with all honest nodes being blue would no longer be an absorbing state for this process. Thus, to avoid this complication, we set $\alpha > f$ in the statement of the above theorem. Note, however, that in a practical scenario, the protocol will only run a finite amount of rounds. Hence, even in the $\alpha \leq f$ case, there would be a finite probability of reaching consensus for the blue colour. This can be modelled by artificially setting the death rate from the all-blue state to zero.\\ 

\begin{LateProof} \textit{of Theorem~\ref{thm: Perfectly malicious nodes}.}\, The majority rule and Slush algorithm accuracies are given by:
    \bea
    \mathbb{P}_{\rm Maj}(n, c, p) & = \sum_{b = \ceil*{{n\ov 2}}}^c\, \binom{c}{b} p^b (1-p)^{c-b}~, \\
    \mathbb{P}_{\text{S}}(n, c, k,  \alpha, p) & =  \sum_{b=0}^{c}  \binom{c}{b} 
    \bb_b \, p^{b} (1-p)^{c-b}~.
    \eea
From here, we again look at $\Delta \mathbb{P} = \mathbb{P}_{\rm Maj} - \mathbb{P}_{\text{S}}$, and apply the same methods as in the proof of Theorem~\ref{thm: Slush vs Majority hom}. As before, if $\alpha > \ceil*{{n\ov 2}}$, the majority rule outperforms Slush, while the performances are identical for $\alpha =  \ceil*{{n\ov 2}}$. Then, using Lemma~\ref{lemma: Slush asymmetry with Byzantine}, it follows that the control ratio satisfies:
\be
    \kappa_b > {n-b \ov c-b} \times  \left( {p\ov 1- p} \right)^{2b-n} > 1~,
\ee
for  $\ceil*{{n\ov 2}} \leq b \leq n-\alpha$. 

\end{LateProof}

\paragraph{Faulty communication.} Theorem~\ref{thm: Perfectly malicious nodes} offers some insight into how the homogeneous Slush algorithm performs in the presence of Byzantine nodes. In a more realistic scenario, Byzantine behaviour can simply be due to faulty communication. We can expect, in particular, that such participants will only be able to send at most one query, but will not be able to receive any responses. As a result, such participants will effectively drop out in the communication phase. This shows another advantage of consensus learning over centralised ensemble methods since the former can identify this type of faulty participants.

\begin{definition}[Faulty participant]
    A faulty participant is a participant who does not participate in the communication phase.
\end{definition}

\begin{definition}[Perfectly faulty participant]
    A perfectly faulty participant is a faulty participant whose initial output is incorrect.
\end{definition}

We remind the reader that a perfectly faulty participant is not trying to stall the system. Instead, their ML model can be thought of as having very low accuracy (zero) -- see also Section~\ref{sec: modelling assumptions}. The communication phase of the Slush algorithm will be able to detect the faulty participants, which will thus drop out. However, the initial responses of the faulty nodes should still be considered in a majority rule, as there would be no means of identifying a faulty connection in such cases. Thus, introducing perfectly faulty participants is equivalent to considering supermajority rules instead of a simple majority.

\begin{conjecture} 
    Consider a homogeneous group of $c$ independent base learners, each with accuracy $p$. Let there be $f$ perfectly malicious participants and $f'$ perfectly faulty participants. For an appropriate choice of $f' > 0$, there exists a value $p_{\rm th} > {1\ov 2}$ such that the Slush algorithm with parameters $\alpha > f$ and $p \leq p_{\rm th}$ outperforms the majority rule.
\end{conjecture}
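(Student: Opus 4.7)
The plan is to reduce the conjecture to the $\delta$-supermajority framework of Theorem~\ref{thm: Slush vs delta-majority} and then extend its pairing argument to the Byzantine setting via Lemma~\ref{lemma: Slush asymmetry with Byzantine}. First I would observe that, since the $f$ perfectly malicious and $f'$ perfectly faulty participants each contribute a wrong initial vote, the centralised majority decides correctly iff at least $\ceil*{n/2}$ of the $c$ honest learners do, where $n=c+f+f'$. Setting $\delta=\ceil*{(c+f+f')/2}-\ceil*{c/2}$, this identifies $\mathbb{P}_{\rm Maj}$ with the accuracy of a $\delta$-supermajority rule on the homogeneous group of $c$ honest learners; choosing $f'\geq 1$ suffices to guarantee $\delta\geq 1$. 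On the Slush side, the $f'$ faulty nodes drop out of the communication phase, so the protocol runs effectively on $c+f$ participants with $f$ fixed adversaries, and its accuracy reads $\mathbb{P}_{\rm S}=\sum_{b=0}^{c}\binom{c}{b}\,\bb_{b}\,p^{b}(1-p)^{c-b}$ with $\bb_{b}$ supplied by Lemma~\ref{lemma: Slush asymmetry with Byzantine}.

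When $f=0$ the conjecture is then immediate: the Slush protocol runs on $c$ honest nodes only, and Theorem~\ref{thm: Slush vs delta-majority} delivers a threshold $p_{\rm th}(\delta)>{1\ov 2}$ below which it outperforms the $\delta$-supermajority rule, which is precisely $\mathbb{P}_{\rm Maj}$. For $f>0$, I would mimic the pairing argument behind Theorem~\ref{thm: Slush vs delta-majority}: writing $\Delta\mathbb{P}=\mathbb{P}_{\rm Maj}-\mathbb{P}_{\rm S}$ and splitting the Slush sum around the supermajority threshold $\ceil*{c/2}+\delta$, each majority term at index $b$ can be paired with a below-threshold Slush term, producing a per-pair control ratio schematically of the form
\be
\kappa_{b}\;\sim\;\frac{\rr_{b}}{\bb_{c-b}}\cdot\left(\frac{p}{1-p}\right)^{2b-c}\cdot\left(\frac{1-p}{p}\right)^{2\delta}.
\ee
In contrast to the honest case, the factor $\rr_{b}/\bb_{c-b}$ is strictly greater than $1$ at $p={1\ov 2}$ because the adversaries break the Slush symmetry (exactly as in Theorem~\ref{thm: Perfectly malicious nodes}). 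However, the supermajority offset contributes a factor $((1-p)/p)^{2\delta}<1$ for $p>{1\ov 2}$, which can be made arbitrarily small by enlarging $f'$. Choosing $f'$ large enough that this gain dominates the Byzantine-induced asymmetry yields $\kappa_{b}<1$ on a non-trivial right-neighbourhood of ${1\ov 2}$, and a continuity argument then supplies the desired $p_{\rm th}>{1\ov 2}$.

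The main obstacle will be making this trade-off quantitative: one needs an upper bound on $\rr_{b}/\bb_{c-b}$ extracted from the closed form behind Lemma~\ref{lemma: Slush asymmetry with Byzantine} and a matching lower bound on $\delta$ in terms of $f'$, and then a verification that for every admissible $f$ there exists a choice of $f'$ so that the geometric-in-$\delta$ supermajority gain overwhelms the Byzantine asymmetry on some sub-interval of $({1\ov 2},1]$. Because a clean closed form for $p_{\rm th}$ in terms of $(c,f,f',k,\alpha)$ seems out of reach, the natural target is an existence statement with $f'$ selected as a function of $(c,f,k,\alpha)$, possibly supplemented by a numerical table analogous to Table~\ref{tab: threshold (honest)}. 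A cleaner strengthening of the conjecture, namely ``for every fixed $f<\alpha$, choosing $f'$ sufficiently large suffices'', is what I would expect the pairing argument above to actually deliver.
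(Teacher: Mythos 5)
The first thing to flag is that the paper does not prove this statement: it is explicitly labelled a conjecture, the authors write that they ``do not have a proof of this statement'' and support it only with the numerical evidence of Table~\ref{tab: threshold with Byz nodes}. So there is no proof in the paper to match yours against. Your reduction is nevertheless the same one the authors gesture at: they describe the conjecture as an interpolation between Theorem~\ref{thm: Perfectly malicious nodes} and Theorem~\ref{thm: Slush vs delta-majority}, with the faulty participants converting the centralised majority into a supermajority over the remaining voters (they state $\delta = f'$; your $\delta=\ceil*{(c+f+f')/2}-\ceil*{c/2}$ measures the offset relative to the honest group of size $c$ and so also absorbs the $f$ malicious votes, a bookkeeping difference worth reconciling but not the core issue). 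Your $f=0$ case is indeed immediate from Theorem~\ref{thm: Slush vs delta-majority}.

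For $f>0$ your sketch has a genuine gap, and it is exactly the gap that keeps the statement a conjecture. Two points. First, Lemma~\ref{lemma: Slush asymmetry with Byzantine} gives a \emph{lower} bound on $\rr_b/\bb_{n-b}$ (it is how Theorem~\ref{thm: Perfectly malicious nodes} shows the majority \emph{wins}); to run the pairing argument in the other direction you need an \emph{upper} bound on this ratio, which neither the paper nor your sketch supplies, and which does not follow from the symmetry Lemma~\ref{lemma: Slush symmetry} once adversaries are present. Second, your proposed mechanism for absorbing the Byzantine asymmetry is the factor $\left((1-p)/p\right)^{2\delta}$, ``made arbitrarily small by enlarging $f'$.'' But the conclusion you need is $\kappa_b<1$ on a right-neighbourhood of $p=\tfrac12$, and as $p\to\tfrac12^+$ that factor tends to $1$ for every fixed $\delta$, so it provides no help at the point where the threshold is decided. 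In the honest proof of Theorem~\ref{thm: Slush vs delta-majority} the slack at $p=\tfrac12$ comes from the binomial-coefficient offset $\binom{n}{b}\binom{n}{b-2\delta}^{-1}\le 1$ together with $\rr_{b-2\delta}>\rr_b$; in the Byzantine case you would need the analogous coefficient ratio, evaluated at $p=\tfrac12$, to strictly dominate the (unbounded-above, as far as the stated lemmas go) ratio $\rr_b/\bb_{\cdot}>1$, uniformly in $b$. That quantitative comparison is the missing idea, so your continuity argument does not yet yield $p_{\rm th}>\tfrac12$. Your proposed strengthening (``for every $f<\alpha$ a sufficiently large $f'$ suffices'') is plausible and consistent with Table~\ref{tab: threshold with Byz nodes}, but it remains unproven for the same reason.
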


As previously alluded to, it is not difficult to see that this statement reduces to a comparison between the Slush algorithm and $\delta$-supermajority rules, with $\delta = f'$. As such, this proposition is an interpolation between Theorem~\ref{thm: Perfectly malicious nodes} and Theorem~\ref{thm: Slush vs delta-majority}. We do not have a proof of this statement, but we offer some numerical evidence below.

Remarkably, small values of $\delta = f'$ already ensure that the threshold value $p_{\rm th}$ is larger than 50\%. Table~\ref{tab: threshold with Byz nodes} gives some numerical values for $p_{\rm th}(\delta)$, approximated to two decimal places, for  $n = 101$, and fixed protocol parameters.
\begin{table}[!ht]
    \centering
    \begin{tabular}{ccccc}
    \toprule
       $\delta$   & 0 & 1 & 2 & 3  \\ \midrule 
       $f = 0$ & 0.5 & 0.7 & 0.84 & 0.87 \\ 
       $f = 1$ & 0.49 & 0.69 & 0.84 & 0.87  \\
       $ f = 5$ & 0.47 & 0.67 & 0.83 & 0.87 \\
       $f = 10$ & 0.47 & 0.67 & 0.83 & 0.87 \\ \bottomrule
    \end{tabular}
    \caption{Lower bounds on threshold value $p_{\rm th}(\delta)$ for fixed $n=101$, $k=20$, $\alpha=14$ and varying $f$ and $\delta$.}
    \label{tab: threshold with Byz nodes}
\end{table}

\section{Numerical simulations}   \label{sec: simulation}

In this section, we present numerical simulations of the Slush consensus learning algorithm that extend beyond our previous analysis. It is worth pointing out that the communication phase in the previously described scenarios did not make use of the conviction of the base learners in their prediction. In this regard, we will also present a modified Slush algorithm which uses local parameters instead of globally defined ones. Such modifications can stir the network in favour of the better classifiers.

\begin{definition}[Strong confidence]  \label{def: strong confidence}
    A participant whose local model has accuracy $p > {1\ov 2}$ is said to have strong confidence in their result if their local threshold parameter for accepting a query satisfies $\alpha \geq k \,p$. 
\end{definition}

To simulate the behaviour of a realistic fully decentralised network, we will consider non-iid data. It should be noted that heterogeneity among the training sets has been shown to pose serious challenges for FL algorithms to achieve high levels of precision~\cite{https://doi.org/10.48550/arxiv.1806.00582}. In Section~\ref{sec: MNIST} we focus on the FEMNIST dataset~\cite{caldas2019leaf}, which is a dataset designed for non-iid federated learning. Section~\ref{sec: beta distribution} presents a generalisation of this setting. More details about our simulations are discussed in Appendix~\ref{app: simulation details}.

\subsection{Non-IID MNIST dataset} \label{sec: MNIST}

Realistic datasets for fully decentralised distributed learning are typically proprietary and not available to the public. A modular benchmarking framework for federated learning is provided by the LEAF bechmark~\cite{caldas2019leaf}, which organises well-established datasets for realistic distributed setting applications. An example is the FEMNIST dataset, which partitions the extended MNIST dataset~\cite{lecun1998mnist, cohen2017emnist} by the writer of the digit or character into 3550 non-iid sets. The individual sets have, on average, around 227 samples (with a standard deviation of 89 samples)~\cite{caldas2019leaf}.

Due to the small number of samples, we will only train simple models for the data of a single user. Thus, we will run two types of simulations on the FEMNIST dataset. First, we build 101 different models, each being trained on the data of a single different user. Second, we will group users together and train more intricate models on the grouped datasets.

\begin{figure}[!ht] %
    \centering
    \includegraphics[width=0.98\textwidth]{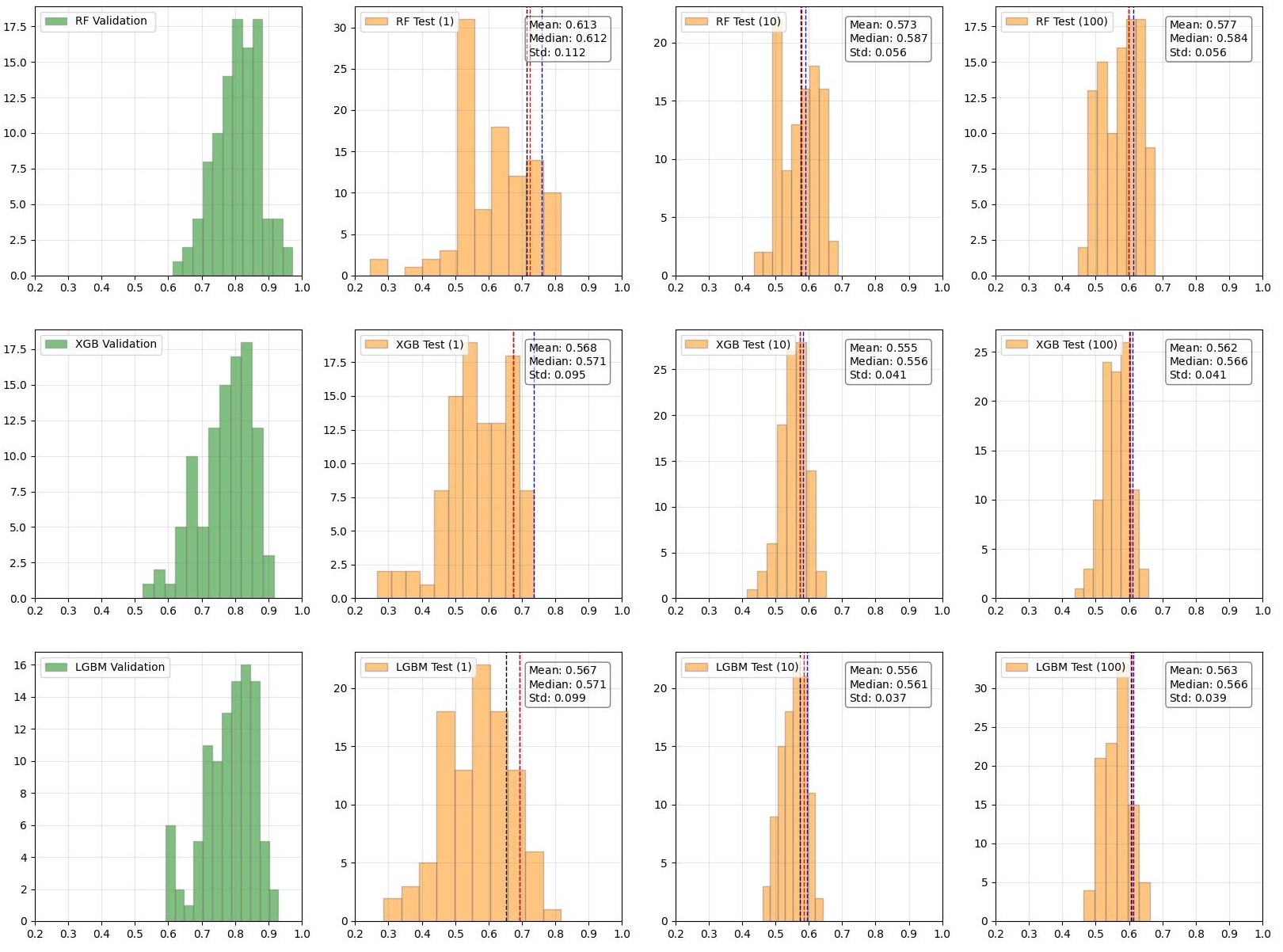}
    \caption{Simulation of Slush consensus learning on FEMNIST dataset for $n=101$ base learners. \emph{Green:} Distribution of accuracies on validation sets. \emph{Orange:} Distribution of accuracies on test sets using data from 1, 10 or 100 new users, respectively. The dashed lines correspond to: majority ensemble (black), Slush with $\alpha = 6$ (red) and Slush with local $\alpha$ parameters, \ie strong confidence (blue).}\label{fig: feMNIST letters niid}
\end{figure}%

\subsubsection{Models for individual users}

To begin with, we pick $n = 101$ sub-datasets of the FEMNIST dataset. Each such dataset contains data for 62 distinct classes: 10 digits, 26 lower case letters and 26 upper case letters. We create a binary classification task by labelling all lower case letters by 0, and the upper case letters by 1, while discarding the digits.

The next step is to train $101$ different models on these datasets. We deploy three different types of models: random forests (RF)~\cite{breiman2001random}, extreme gradient boosting (XGBoost)~\cite{chen2015xgboost} and light gradient-boosting machine (LGBM)~\cite{ke2017lightgbm}. Each model is trained on 80\% of their own data, with the performance on the remaining 20\% of the data recorded on the left column of Figure~\ref{fig: feMNIST letters niid}. As before, we use accuracy as the performance metric for evaluating the models. Importantly, all models have accuracies better than 50\% on their respective validation sets. 

The performance of these models can be, of course, slightly improved, by tuning the model parameters using a grid search, for example. Random forests tend to be rather robust to overfitting since the decision trees assembling the forests are independent. As such, the default parameters do typically lead to reasonably well-trained ensembles. While gradient boosters can improve the accuracy of random forests, they can lead to overfitting since they repeatedly fit new models to the residuals of previous models. These features can be also seen in the left columns of Figure~\ref{fig: feMNIST letters niid}. Nevertheless, these simple models should suffice for our purposes. Note that some of the low accuracies on the test sets can be largely attributed to the small size of the training sets.

\begin{figure}[!ht] %
    \centering
    \includegraphics[width=0.98\textwidth]{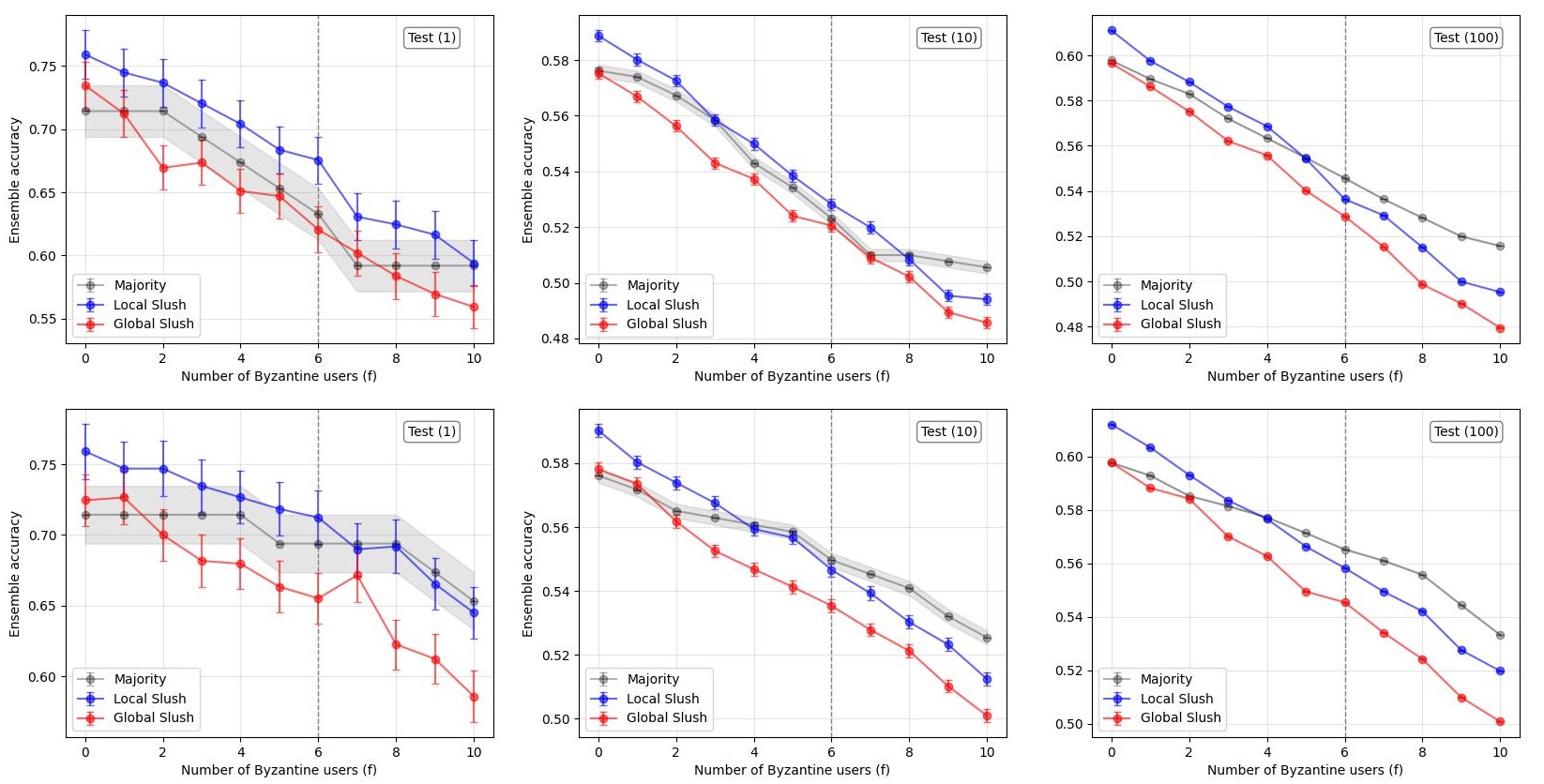}
    \caption{Accuracies of ensembles built from 101 Random Forest models against number of perfectly malicious models. The honest models are trained on non-iid samples from the FEMNIST dataset and tested on data from 1, 10 and 100 new users, respectively. The dashed line $f = 6$ is the value of the $\alpha$ parameter used for the \emph{global} Slush algorithm. \emph{Top row:} strong classifiers turn Byzantine. \emph{Bottom row:} weak classifiers turn Byzantine.}\label{fig: feMNIST letters niid Byzantine}
\end{figure}%

For testing, we use three different datasets, from 1, 10 and 100 combined new users. These sets consist of 49, 453 and 4741 samples. The distributions of accuracies in these three testing scenarios are shown in orange in Figure~\ref{fig: feMNIST letters niid}, for the three types of models built. It should be noted straightaway that many of the models do not generalise very well to the unseen handwriting. For each of the test datasets, we also generate the majority ensemble, as well as consensus learning ensembles. This will no longer be the case when we increase the size of the training sets in the next subsection.

For the communication phase of the consensus learning algorithms, we run a Slush protocol with a global threshold parameter $\alpha = 6$, and one with local parameters determined by the accuracy on the test sets, in accordance with the concept of strong confidence introduced in Definition~\ref{def: strong confidence}.\footnote{In practical terms, only part of the test subset should be used for determining these local parameters. However, since some of the test sets are rather small, we make an implicit assumption that the test set accuracy would be the same as the accuracy on a smaller subset of this set. This only assumes that this subset is identically distributed to the whole test set.} In both cases, the protocols use $k=10$ for the sampling of participants. For speeding up the computation time, each communication phase only lasts for 50 rounds per node (so a total of 5000 rounds), which should be more than enough to ensure convergence~\cite{rocket2020scalable, amoressesar2024analysis}. The results are indicated with dashed lines in Figure~\ref{fig: feMNIST letters niid}.

In Figure~\ref{fig: feMNIST letters niid Byzantine} we also consider Byzantine participants. For this, $f$ base learners are selected at random from the $101$ models and are turned into perfect Byzantine models, as per Definition~\ref{def: perfect Byzantine}. We consider two different samples of base learners that are turned into perfectly Byzantine models: strong classifiers and weak classifiers. More details about the performance of these users on the three test sets can be found in Appendix~\ref{app: simulation details}.

Figure~\ref{fig: feMNIST letters niid Byzantine} shows how the accuracies of the ensembles vary when the number of Byzantine participants increases. Let us note that in the context of Byzantine fault tolerance, a centralised majority rule would inevitably have increased Byzantine resilience compared to any consensus protocol. The maximum number of Byzantine participants in the former case is 1/2 of the total network participants, as opposed to the classical 1/3 value for consensus protocols~\cite{10.1145/322186.322188}. This argument can be extended to consensus learning algorithms. Regardless, consensus learning appears to outperform the majority rule even in the presence of Byzantine users, as long as their number is not too large. 
The simulations also indicate that when strong models are converted to Byzantine models, the local Slush algorithm has increased resilience as compared to the majority rule.

\begin{figure}[t]
    \centering
    \begin{subfigure}{\textwidth}
    \centering
    \includegraphics[width=0.98\textwidth]{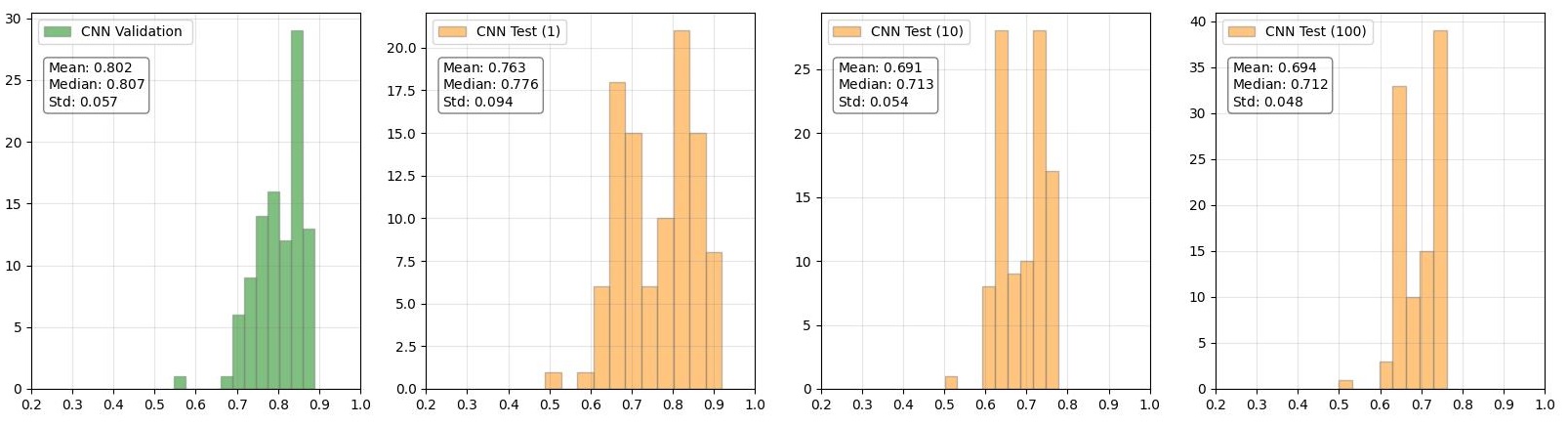}
    \end{subfigure}
    \hfill
    \begin{subfigure}{\textwidth}
    \centering
    \includegraphics[width=0.98\textwidth]{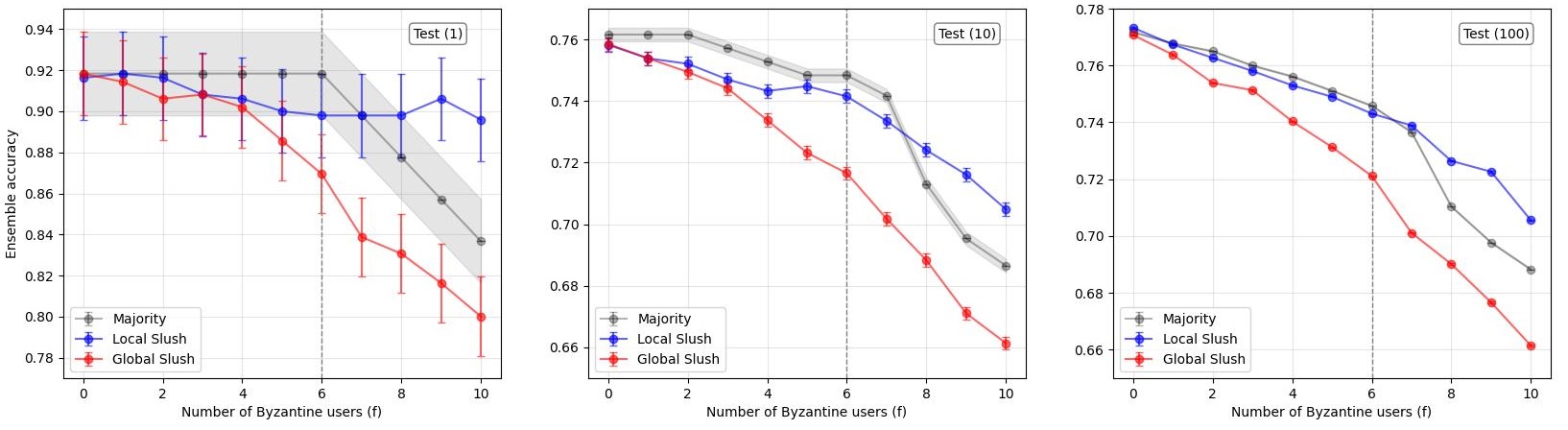}
    \end{subfigure}%
    \caption{\emph{Top:} Distribution of accuracies on validation sets (green) and test sets (orange) for the 101 CNNs. \emph{Bottom:} Accuracies of ensembles of 101 CNNs against number of perfectly malicious nodes for the three test sets.}
    \label{fig: feMNIST letters niid CNN}
\end{figure}

\subsubsection{Models for groups of users}

Convolutional neural networks (CNNs) are known to be particularly suitable for image recognition. In this subsection, we train such models on enlarged datasets and subsequently create ensembles of CNNs. To compare the results with the results from the previous section, we will stick to $n = 101$ distinct models.  We will use the same test sets as in the previous simulations, which are separate from the training and validation sets. 

To avoid overfitting, we use rather simple CNNs, with four hidden layers: a convolutional layer with 8 filters of size $(3,3)$; a max pooling layer of size $(2,2)$; a flattening layer; a dense layer of 100 neurons with ReLU activation functions and He weight initialisation scheme \cite{Goodfellow-et-al-2016}. For the stochastic gradient descent optimiser, we set the learning rate to 0.01 and use a momentum of 0.9. As the training sets are not too large, we do not use any cross-validation. The models are trained for 10 epochs, with batch sizes of 32 examples. 

Perhaps not surprisingly, these CNNs generalise much better than the models trained on data coming from a single user. This is reflected in the distribution of accuracies in the top row of Figure~\ref{fig: feMNIST letters niid CNN}. The bottom row in the figure also shows how the ensembles built from these 101 CNNs behave once base learners turn (perfectly) Byzantine. In this case, the sample of 10 base learners that are turned into Byzantine models is chosen to be representative of the whole ensemble of 101 models. More details about this sample are again left to Appendix~\ref{app: simulation details}. Rather remarkably, we see that the local Slush algorithm still has increased robustness against Byzantine participants and can outperform a centralised majority rule even for larger numbers of Byzantine participants.

\subsection{Beta-distributed base learners} \label{sec: beta distribution}

The previous simulations on the FEMNIST dataset strengthen our position that consensus learning algorithms can perform better than centralised majority rules if the base learners are diverse. This situation is highly probable within a realistic context and could occur whenever the base learners are trained on non-iid data. These insights are in perfect agreement with our analysis of the control ratio in \eqref{control ratio schematically}.

In this section, we present a generalisation of these results, by generating samples of accuracies for independent base learners from a beta distribution. The beta distribution, ${\rm B}(a,b)$, is a bounded continuous probability distribution characterised by two real shape parameters, $a, b$, with $a, b> 0$.\footnote{Here, $b$ should not be confused with the number of blue nodes in the Slush protocol.} The distribution approximates a uniform distribution when $ a = b = 1$, and a normal distribution for large $a, b$, with $a \approx b$. 

\begin{figure}[!ht] %
    \centering
    \includegraphics[width=0.98\textwidth]{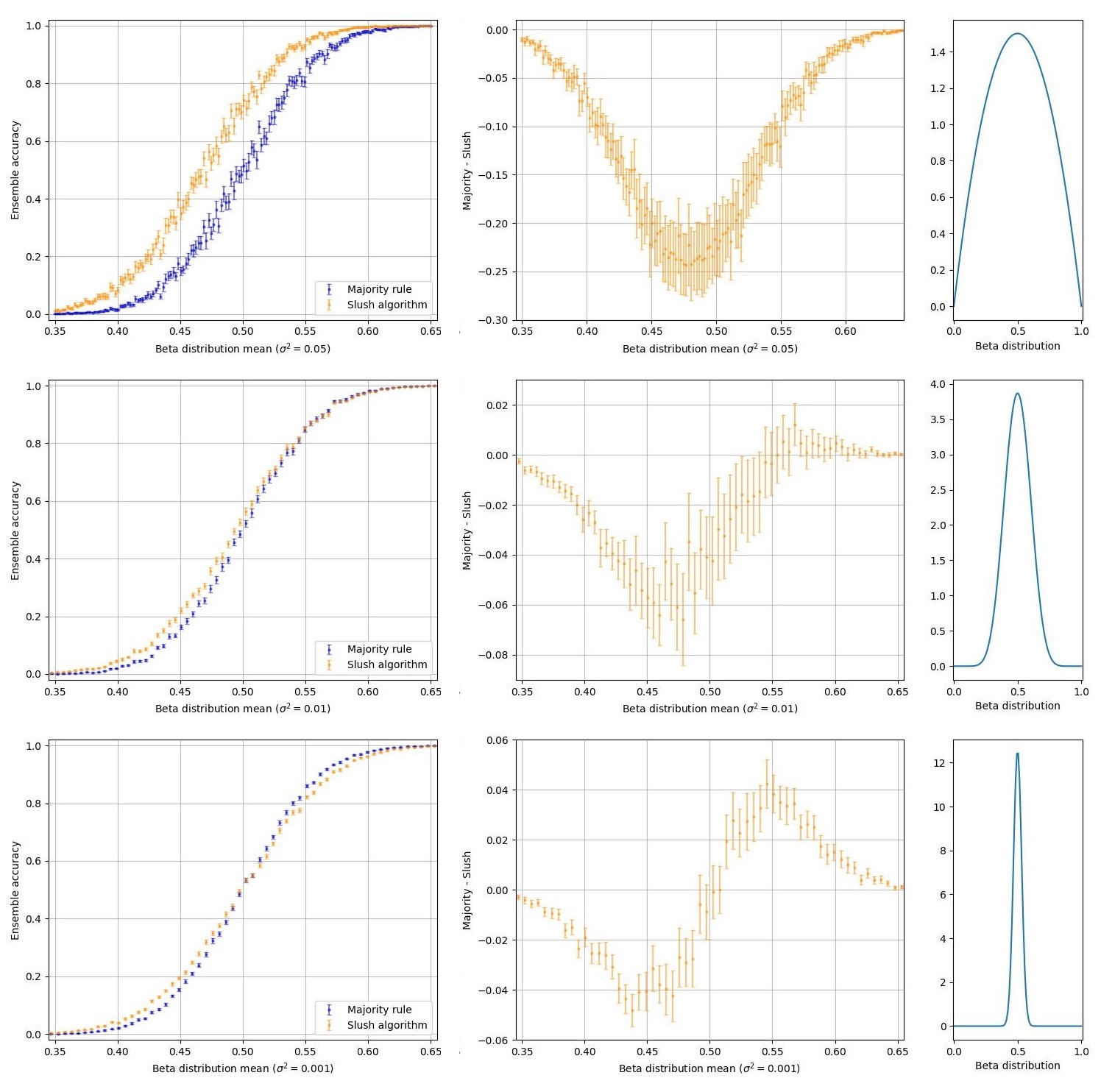}
    \caption{Simulation of Slush consensus algorithm for $n=101$, $k=10$, $\cN_1 = 100$, $\cN_2 = 50$ and local $\alpha$ parameters. \emph{Left column:} Ensemble accuracy for varying mean and fixed variance of the beta distribution. \emph{Middle column:} Difference in accuracy between majority rule and Slush algorithm, for fixed variance. \emph{Right column:} Probability density function of beta distribution with mean equal to $0.5$ and variance representative of the row.}\label{fig: Simulation}
\end{figure}%

It is, of course, difficult to estimate the distribution of accuracies of the base learners without more specifications about the difficulty of the task, or details about the training data used by the base learners. Independent and identically distributed sampling for the training datasets is more likely to lead to distributions that are close to a normal distribution -- see \eg \cite{8857234}. On the other hand, if there are reasonable expectations that the base learners can achieve high accuracies, then a folded normal distribution would be more appropriate.\footnote{However, in such scenarios, ensemble methods might not be necessary at all for improving accuracy.} Regardless, our previous simulations do appear to suggest that the beta distribution is a valid candidate.

The simulation can be described by the following steps:
\begin{enumerate}
    \item A sample of base learner accuracies of size $n$ is generated from a beta distribution with fixed mean and variance.
    \item For each sample, we generate a vector of individual outputs (\ie a voting profile) and simulate the output of the Slush protocol as well as that of a majority rule using this voting profile. The process is repeated $\cN_1$ for a given sample of accuracies, and the accuracy of the ensemble is determined as the percentage of correctly identified outputs.
    \item Step 2 repeats $\cN_2$ times for a given beta distribution, to eliminate any statistical outliers.
\end{enumerate}

\noindent The error in the ensemble accuracy is measured as the sampling error for the set of $\cN_2$ values obtained in step~3 of the simulation. The exact value of the error can be determined as discussed in Appendix~\ref{app: simulation details}, but this would significantly increase the computation time. Finally, the process is repeated for different means and variances of the beta distribution, which is always taken to have a concave density function.

The results of the simulations are shown in Figure~\ref{fig: Simulation}. As before, we use a Slush algorithm with local threshold parameters $\alpha$. As expected from the previous theoretical analysis, the Slush algorithm performs significantly better compared to a centralised majority rule for larger variances in the distribution of accuracies.

\section{Conclusions and outlook} \label{sec: conclusions}

In this work, we introduced a novel distributed ML paradigm -- consensus learning -- readily described as a fully decentralised ensemble method that deploys a consensus protocol. We analysed how a typical consensus learning algorithm built on a probabilistic consensus protocol behaves in different scenarios and what improvements it brings to established ML methods. Consensus learning has clear advantages over other distributed learning algorithms, which include communication overhead, resilience against malicious users and protection of private data. Moreover, consensus learning preserves model explainability of typical ensemble weighting methods, offering a high degree of transparency and interpretability.

Our concrete results offer lower bounds on the accuracy of consensus learning classifiers using the Slush consensus protocol, while also describing the behaviour for a large enough number of base learners. We stress again that the proofs of our main results only use \emph{weak} features of the Slush consensus protocol, and could thus be generalised to other probabilistic protocols. In addition to these results, our numerical analysis shows that a relatively simple modification of the Slush consensus protocol can lead to increased Byzantine resilience and a boost in performance.

Our analysis also indicates that a greedy consensus which favours high-accuracy nodes might perform better than Slush algorithms for an ML task. For this purpose, it would be interesting to study federated byzantine agreement (FBA) consensus protocols~\cite{mazieres2021stellar} where each node chooses which participants to trust, as well as hierarchical consensus protocols, which delegate leaders for each communication round. Note, however, that such protocols do typically require precise knowledge of the network, through the so-called \emph{quorum} membership, in order to satisfy safety and liveness guarantees. 

Performance boosts can be also achieved by using different local aggregation rules in the communication phase. For a classification task, for instance, these could be weighted majority rules, where, ideally, the weights would be determined by the accuracy of the learners. Such information is typically either not available or, not trustworthy in a distributed setting, but could be stored as some form of past performance in a blockchain implementation. As such, this information would become available to all base learners. Additionally, in such a blockchain implementation, reward mechanisms can create further incentives for participants to be honest, and thus increase the performance of the ensemble. Other Byzantine-resilient aggregation methods (such as those discussed in \eg \cite{10.1145/3616537}) could lead to increased robustness against malicious attacks.

Lastly, consensus learning algorithms can also be applied to other types of ML problems. For regression problems, robust local aggregation rules need to be deployed, similar to Byzantine ML algorithms. The algorithms can also be used for unsupervised ML, similar to other unsupervised ensemble learning methods. We leave a more detailed exposition of these methods for future work.




\section*{Acknowledgements}

We are very grateful to Charles Grover, Sabrina Guettes and Jernej Tonejc for insightful discussions, constructive feedback and valuable input in many of the proofs presented in this work. We would also like to thank Anne Marie Jackson for helpful comments on the final draft.

\appendix
\section{Snow protocols} \label{app: Avalanche}

The Snow family of consensus protocols builds upon the Slush protocol, whose technical aspects we describe in this appendix. 

\subsection{Slush with honest participants} \label{app: Slush honest}

The Slush protocol is fully described by two control parameters $k \in (0,n]$ and $\alpha \in (k/2, k]$; the former is the size of the sample selected by a node for sending a query, while the latter is the threshold parameter for accepting a query. The protocol can be modelled as a continuous-time Markov chain, with the state $\cS$ of the system corresponding to the number of blue nodes (\ie nodes that correctly labelled the transactions) at a given time. This is depicted diagrammatically below:
\bea    
 \begin{tikzpicture}[baseline=1mm,baseline=1mm] 
\node[style={rectangle,draw},thick] (0) []{$0$};
\node[style={rectangle,draw},thick] (1) [right =1.5 of 0]{$1$};
\node[style={rectangle,draw},thick] (2) [right =1.5 of 1]{$2$};

\node[style={rectangle,draw},thick] (n-2) [right =2 of 2]{$n-2$};
\node[style={rectangle,draw},thick] (n-1) [right =1.5 of n-2]{$n-1$};
\node[style={rectangle,draw},thick] (n) [right =1.5 of n-1]{$n$};

\draw[->] (1) to node [midway,above] {$\mu_1$} (0);
\draw[->] (2) to [out=150,in=30] (1); \draw[->] (1) to [out=330,in=210] (2);
\draw[->] (n-1) to [out=160,in=20] (n-2); \draw[->] (n-2) to [out=340,in=200] (n-1);
\draw[->] (n-1) to node [midway,below] {$\lambda_{n-1}$} (n);

\node (int1) [right = 0.35 of 2] {$\ldots\ldots$}; 
\node (mu2) [above right = 0.15 and 0.45 of 1] {$\mu_2$}; \node (l1) [below right = 0.15 and 0.45 of 1] {$\lambda_1$};
\node (mu2) [above right = 0.15 and 0.3 of n-2] {$\mu_{n-1}$}; \node (l1) [below right = 0.15 and 0.3 of n-2] {$\lambda_{n-2}$};
\end{tikzpicture}
\eea
The process has two absorbing states, all-blue and all-red, respectively. The probability that a query for the blue colour reaches the threshold of $\alpha$ or more votes given $b$ blue nodes in the network can be found from a simple combinatorial exercise, being determined by a hypergeometric distribution,
\be\label{hypergeometric sum}
    {\rm H}(n,b,k,\alpha)\equiv \sum_{j=\alpha}^k \binom{b}{j} \binom{n-b}{k-j} \binom{n}{k}^{-1}~.
\ee
An important identity for the normalisation of the hypergeometric distribution is
\be\label{hypergeometric distr normalisation}
    \sum_{j=0}^k \binom{n_1}{j}\binom{n_2}{ k-j} = \binom{n_1 + n_2}{k}~,
\ee
for $n_1, n_2 \in \bN$, with $n_1, n_2 \geq k$. Consider the case where all nodes in the network are honest. Then, the Markov chain is described by the following transition rates.

\begin{definition}[Transition rates]
    The death $\mu_b$ and birth $\lambda_b$ rates for the state with $b$ blue nodes of the Slush protocol with $n$ nodes and protocol parameters $k$ and $\alpha$ are defined as
    \be \label{Slush death-birth rates}
    \mu_b = b\, {\rm H}(n,n-b,k,\alpha)~, \qquad \lambda_b = (n-b)\, {\rm H}(n,b,k,\alpha)~.
    \ee
\end{definition}

The death rate is given by the probability that a given query reaches red consensus; thus, we want one of the $b$ blue nodes to change colour to red. Similarly, for the birth rate, we need one of the $(n-b)$ red nodes to change their colour to blue. These rates satisfy the obvious property $\lambda_b = \mu_{n-b}$. Let us also point out that
\be \label{vanishing rates}
    \lambda_{\,b\,<\,\alpha} = 0 = \mu_{\,b\,>\,n-\alpha}~,
\ee
as the voting threshold cannot be reached by the minorities in these cases.
\begin{theorem}[Slush absorption,~\cite{rocket2020scalable}]\label{thm: exact xi}
    Let the configuration of the system at time $t$ be $\cS$, with $b$ blue nodes, where $0 \leq b \leq n$, and $n-b$ red nodes. Then, the probability of absorption $\rr_b$ in the all-red state is given by
\be \label{exact xi}
    \rr_b = \frac{\sum\limits_{l=1}^{n-b} \prod\limits_{i=1}^{n-l} \mu_i \prod\limits_{j=n-l+1}^{n-1}\lambda_j }{\sum\limits_{l=1}^n \prod\limits_{i=1}^{n-l} \mu_i \prod\limits_{j=n-l+1}^{n-1}\lambda_j}~.
\ee
\end{theorem}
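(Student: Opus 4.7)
The plan is to treat the given formula as the classical birth–death chain absorption identity and derive it via first-step analysis on the Markov chain described in the paper. First, I would record the boundary conditions: $\rr_0 = 1$ (starting in the all-red absorbing state) and $\rr_n = 0$ (starting in the all-blue absorbing state). Conditioning on the next transition, the interior recurrence for $1 \leq b \leq n-1$ reads
\be
    (\mu_b + \lambda_b)\,\rr_b \,=\, \mu_b\, \rr_{b-1} + \lambda_b\, \rr_{b+1}~.
\ee
Defining the successive decrements $d_b = \rr_{b-1} - \rr_b$, this rearranges to the one-step ratio $\lambda_b\, d_{b+1} = \mu_b\, d_b$, which is the standard tool for solving birth–death absorption problems.

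Iterating the ratio gives, for $b \geq 1$,
\be
    d_{b} \,=\, d_1 \prod_{i=1}^{b-1} \frac{\mu_i}{\lambda_i}~,
\ee
and since $\rr_n - \rr_0 = -1$ telescopes as $\sum_{b=1}^n d_b = 1$, I would solve for $d_1$ and then write
\be
    \rr_b \,=\, \sum_{j=b+1}^n d_j \,=\, \frac{\sum_{j=b+1}^n \prod_{i=1}^{j-1}(\mu_i / \lambda_i)}{\sum_{j=1}^n \prod_{i=1}^{j-1}(\mu_i / \lambda_i)}~.
\ee
The final step is cosmetic: multiplying numerator and denominator by $\prod_{i=1}^{n-1} \lambda_i$ converts each term $\prod_{i=1}^{j-1}(\mu_i/\lambda_i)$ into $\prod_{i=1}^{j-1}\mu_i \prod_{i=j}^{n-1}\lambda_i$, and the substitution $l = n - j + 1$ then recovers exactly the indexing in the stated formula.

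The main obstacle is the degeneracy at the edges of the chain caused by \eqref{vanishing rates}: for $b < \alpha$ one has $\lambda_b = 0$, so the ratio $\mu_b/\lambda_b$ in the derivation is ill-defined, and similarly $\mu_b = 0$ for $b > n - \alpha$. I would handle this by first observing that the recurrence at such boundary states forces $d_b = 0$ (i.e.\ $\rr_b$ is constant on $\{0, \ldots, \alpha - 1\}$ and on $\{n-\alpha+1, \ldots, n\}$), so the genuine birth–death argument needs to be run only on the transient interval $\alpha \leq b \leq n - \alpha$; after the multiplication by $\prod_{i=1}^{n-1}\lambda_i$ is performed, the cleared formula remains valid on the whole range, with the vanishing $\mu_i$ or $\lambda_j$ factors automatically killing the contributions of the boundary summands. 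A short direct check that the resulting expression reduces to $1$ at $b = 0$ and to $0$ at $b = n$ confirms consistency and completes the argument.
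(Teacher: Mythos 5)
Your proposal is correct and follows essentially the same route as the paper, which sets up the identical first-step recurrence $(\mu_b+\lambda_b)\rr_b=\lambda_b\rr_{b+1}+\mu_b\rr_{b-1}$ with boundary conditions $\rr_0=1$, $\rr_n=0$ and then defers the explicit telescoping solution to standard references; you simply carry that computation out in full. Your extra care with the degenerate rates from \eqref{vanishing rates} (restricting the ratio argument to the transient window $\alpha\leq b\leq n-\alpha$ and checking that the cleared formula absorbs the vanishing boundary decrements) is a correct and worthwhile detail that the paper leaves implicit.
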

\begin{proof}
    This is a standard death-birth Markov process, which makes use of the steady-state Kolmogorov equations:
    \be
    (\mu_b + \lambda_b)\, \rr_b = \lambda_b\, \rr_{b+1} + \mu_b\, \rr_{b-1}~.
    \ee
    This recursion relation can be solved explicitly for appropriate boundary conditions, \ie $\rr_{b = 0} = 1$ and $\rr_{b = n} = 0$. See \eg \cite{tan1976absorption}, Chapter IV of~\cite{karlin1957classification}, or Theorem~2 of~\cite{rocket2020scalable} for explicit proofs.
\end{proof}

\begin{corollary}   \label{corr: exact zeta}
    The absorption probability $\bb_b$ in the all-blue state given $b$ blue nodes reads
    \be \label{exact zeta}
    \bb_b = \frac{\sum\limits_{l=1}^b  \prod\limits_{i=1}^{l-1}\mu_i \prod\limits_{j=l}^{n-1} \lambda_j }{\sum\limits_{l=1}^n \prod\limits_{i=1}^{n-l} \mu_i \prod\limits_{j=n-l+1}^{n-1}\lambda_j}~.
    \ee
\end{corollary}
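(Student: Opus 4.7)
The plan is to invoke Theorem~\ref{thm: exact xi} and reduce the corollary to a short bookkeeping identity. Since the underlying Markov chain has only two absorbing states when every participant is honest, the total absorption probability must equal one, i.e.\ $\rr_b + \bb_b = 1$ for each $b \in \{0, \ldots, n\}$. Hence it suffices to verify that the claimed closed form for $\bb_b$ satisfies $\bb_b = 1 - \rr_b$, with $\rr_b$ given by \eqref{exact xi}. Because the two expressions share the same denominator, the whole check collapses to showing that their numerators add up to that denominator.

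To execute this, I would substitute $m = n - l + 1$ in the numerator of $\bb_b$. Under this change the summand $\prod_{i=1}^{l-1}\mu_i \prod_{j=l}^{n-1}\lambda_j$ becomes $\prod_{i=1}^{n-m}\mu_i \prod_{j=n-m+1}^{n-1}\lambda_j$, and the index $m$ runs through $\{n-b+1, \ldots, n\}$. Concatenating this with the numerator of $\rr_b$, whose index $l$ covers $\{1, \ldots, n-b\}$, produces a single sum over $l \in \{1, \ldots, n\}$ of exactly the summand appearing in the denominator. This establishes $\rr_b + \bb_b = 1$ with $\bb_b$ as claimed.

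As a cross-check, I would also rerun the derivation of Theorem~\ref{thm: exact xi} directly for $\bb_b$: the same Kolmogorov recursion $(\mu_b + \lambda_b)\bb_b = \lambda_b \bb_{b+1} + \mu_b \bb_{b-1}$ now carries the swapped boundary conditions $\bb_0 = 0$, $\bb_n = 1$, which merely flip the role of the two absorbing states. Rewriting it as $\lambda_b(\bb_{b+1}-\bb_b) = \mu_b(\bb_b - \bb_{b-1})$, iterating from the lower boundary, telescoping, and fixing the free constant via $\bb_n = 1$ gives the same ratio; multiplying numerator and denominator by $\prod_{j=1}^{n-1}\lambda_j$ and relabelling the summation index reproduces the stated expression. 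There is no real obstacle: the only point to confirm is that these are the correct boundary conditions for absorption in the all-blue state, which is immediate from the definition of $\bb_b$ together with the two-absorbing-state observation already used above. The substantive analytic work lives entirely inside Theorem~\ref{thm: exact xi}, and the corollary follows by symmetry plus a one-line reindexing.
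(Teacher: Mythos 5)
Your proposal is correct and takes essentially the same route as the paper: the paper likewise deduces $\bb_b = 1-\rr_b$ from the process having only the two absorbing states (Theorem~\ref{thm: exact xi}) and then matches the closed forms. Your explicit reindexing $m=n-l+1$ simply fills in the step the paper attributes to the identity $\lambda_b=\mu_{n-b}$ (and in fact shows that identity is not strictly needed for the bookkeeping).
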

\begin{proof}
    Since the Markov process has only two absorbing states, we have $\bb_b + \rr_b = 1$, with $\rr_b$ as given in Theorem~\ref{thm: exact xi}. Using the identity $\lambda_b = \mu_{n-b}$, the result follows.
\end{proof}

From \eqref{vanishing rates}, we also note that $\bb_b = 0$ for $b<\alpha$. Moreover, as $\lambda_{b\,<\,\alpha} = 0$, the sum in the numerator of $\bb_b$ in \eqref{exact zeta} will only contribute with terms starting from $l = \alpha$, to $ l = b$. Similarly, we notice that $\bb_b = 1$ when $b > n-\alpha$, as the absorption probability in the all-red state vanishes in these cases. In practice, however, it might be slightly difficult to work with these expressions.\\

\begin{LateProof}
    \textit{of Lemma~\ref{lemma: Slush symmetry}.} An intuitive argument for the first result follows from the neutrality of the protocol, as defined earlier in the context of decisive decision rules: Slush does not discriminate against one of the absorbing states on labelling grounds. The result can also be proved explicitly from the exact expressions of the absorption probabilities, using $\lambda_b = \mu_{n-b}$.

    Since $B_{b \,> \,{n\ov 2}} > {1\ov 2}$, and $B_{b\,> \, n-\alpha} = 1$, the second part of the theorem is equivalent to the statement that $\bb_b$ is discrete concave for $n-\alpha > b \geq \ceil*{{n\ov 2}}$. Thus, we need to show that
    \be
    \bb_{b-1} + \bb_{b+1} < 2\bb_b~,
    \ee
    which, upon using Corollary~\ref{corr: exact zeta}, amounts to showing that
    \be
    \mu_{b} < \lambda_{b}~,
    \ee
    for the above range for $b$. This can be shown using the explicit form \eqref{Slush death-birth rates} of the transition rates. A sufficient condition is given as follows:
    \be
    b\, \binom{n-b}{j}\binom{b}{k-j} < (n-b) \binom{b}{j}\binom{n-b}{k-j}~,
    \ee
    with $\alpha \leq j \leq k$. Using $b = m+1 + \delta$, with $0\leq \delta \leq m$, for $n = 2m+1$, the above reduces to
    \be
        \prod_{t=k-j}^{j-1} \left(1 - {2\delta + 1 \ov m+1+\delta-t} \right) < 1-{2\delta + 1 \ov m+1+\delta}~,
    \ee
    which is clearly true. 
    
    Finally, when $\alpha  = k = 1$, it follows that $\lambda_b = \mu_b$ for any $1 \leq b \leq n-1$. Then, it is rather straightforward to see that all terms in the summand of \eqref{exact zeta} are equal, and thus $B_b = {b\ov n}$.  This concludes the proof.
    
\end{LateProof}

Another relevant result used in the proof of Lemma~\ref{lemma: Slush monotonicity} is as follows.

\begin{lemma}   \label{lemma: F(p) monotonicity}
    Let $F(b^*, n; p)$ be defined as in Lemma~\ref{lemma: Slush monotonicity}, \ie
    \be
        F(b^*, n; p) = \sum_{b = b^*}^n \binom{n}{b} p^b (1-p)^{n-b} = 1 - \sum_{b = 0}^{b^*-1} \binom{n}{b} p^b (1-p)^{n-b}~,
    \ee
    for $0 \leq b^* \leq n$. Then,
    \bea
        {d \ov dp} F(b^*, n; p)  = \binom{n}{b^*} b^*\,p^{b^*-1} (1-p)^{n-b^*}~.\label{F derivative}
    \eea
    
\end{lemma}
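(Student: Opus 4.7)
My plan is to prove the identity by direct term-by-term differentiation of the first (summation) form of $F(b^*, n; p)$, followed by recognising a telescoping structure. This is a classical manipulation for the binomial CDF, so I do not expect any genuine obstacle; the only care needed is getting the boundary term right.

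First I would apply the product rule to each summand, obtaining
\begin{equation*}
\frac{d}{dp}\Bigl[\binom{n}{b} p^{b}(1-p)^{n-b}\Bigr]
= \binom{n}{b}\Bigl[b\, p^{b-1}(1-p)^{n-b} - (n-b)\, p^{b}(1-p)^{n-b-1}\Bigr].
\end{equation*}
Next I would invoke the elementary identities
\begin{equation*}
b\binom{n}{b} = n\binom{n-1}{b-1}, \qquad (n-b)\binom{n}{b} = n\binom{n-1}{b},
\end{equation*}
which rewrite the two pieces above as $a_{b} - a_{b+1}$, where I set $a_{b} \coloneqq n\binom{n-1}{b-1} p^{b-1}(1-p)^{n-b}$. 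The derivative of $F(b^*,n;p)$ is then the telescoping sum $\sum_{b=b^*}^{n}(a_{b} - a_{b+1})$.

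The sum collapses to $a_{b^*} - a_{n+1}$. The boundary term $a_{n+1} = n\binom{n-1}{n} p^{n}(1-p)^{-1}$ vanishes because $\binom{n-1}{n}=0$, eliminating the apparent singularity at $p = 1$. The remaining contribution is
\begin{equation*}
a_{b^*} = n\binom{n-1}{b^*-1} p^{b^*-1}(1-p)^{n-b^*} = \binom{n}{b^*} b^* \, p^{b^*-1}(1-p)^{n-b^*},
\end{equation*}
using $n\binom{n-1}{b^*-1} = b^*\binom{n}{b^*}$, which matches \eqref{F derivative}. As a sanity check, one can alternatively differentiate the second (finite) form $1 - \sum_{b=0}^{b^*-1}\binom{n}{b} p^{b}(1-p)^{n-b}$, in which the same telescoping argument runs from $b=0$ to $b=b^*-1$ and the lower boundary term vanishes because $\binom{n-1}{-1}=0$; the two computations yield the same expression, confirming the identity.
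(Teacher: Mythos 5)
Your proof is correct. The product-rule expansion, the identities $b\binom{n}{b}=n\binom{n-1}{b-1}$ and $(n-b)\binom{n}{b}=n\binom{n-1}{b}$, the telescoping to $a_{b^*}-a_{n+1}$, and the vanishing of the boundary term via $\binom{n-1}{n}=0$ are all sound, and the final conversion $n\binom{n-1}{b^*-1}=b^*\binom{n}{b^*}$ gives exactly the claimed formula. The paper instead proceeds by induction on $b^*$, differentiating the complementary lower-tail form $1-\sum_{b=0}^{b^*-1}\binom{n}{b}p^b(1-p)^{n-b}$: it checks the cases $b^*=0,1$ directly and then shows that passing from $F(j,n;p)$ to $F(j+1,n;p)$ subtracts one more summand whose derivative combines with the induction hypothesis to reproduce the closed form at $j+1$. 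The two arguments encode the same cancellation --- the paper's inductive step is precisely one link of your telescope --- but your version obtains the answer in a single closed-form pass over the upper tail and makes the boundary behaviour explicit, while the paper's version trades that for an induction that avoids introducing the shifted binomial coefficients and handles the endpoints through its base cases. Either is acceptable; yours is arguably the more economical write-up, and your cross-check against the lower-tail form is exactly the consistency the paper's induction relies on.
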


\begin{proof}
    We prove this statement by induction, as follows. First note that ${d \ov dp}F(0, n; p) = 0$ and ${d \ov dp}F(1, n; p) = n(1-p)^{n-1}$, in agreement with the above formula. Then, assuming \eqref{F derivative} holds for ${d \ov dp}F(j, n; p)$, we can find ${d \ov dp}F(j+1, n; p)$ as follows:
    \bea
    {d \ov dp} F(j+1, n; p) & = - \sum_{b=0}^{j} \binom{n}{b} p^{b-1}(1-p)^{n-b-1} (b-np)\\
    & = {d \ov dp}F(j, n; p) - \binom{n}{j}p^{j-1} (1-p)^{n-j-1} (j-np) \\
    & = \binom{n}{j}p^{j-1}(1-p)^{n-j-1} \Big( j(1-p) - (j-np) \Big) \\
    & = \binom{n}{j}(n-j) p^j (1-p)^{n-j-1} = \binom{n}{j+1}(j+1) p^j (1-p)^{n-j-1}~,
    \eea
    which concludes the proof by induction.
\end{proof}

\begin{lemma}   \label{lemma: (b-1)(b+1)<b^2}
    For the Slush protocol with no malicious nodes, the absorption probabilities in the all-blue states satisfy:
    \be
        \bb_{b-1} \bb_{b+1} \leq \bb_b^{\,2}~,
    \ee
    for any $b$ in the range $b\geq \ceil*{{n\ov 2}} $. Furthermore, equality occurs only for $b > n-\alpha$.
\end{lemma}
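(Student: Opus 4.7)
The plan is to exploit the closed-form expression from Corollary~\ref{corr: exact zeta} and reduce the log-concavity inequality to an algebraic identity for the partial sums. Writing $\bb_b = S_b/D$, where $S_b = \sum_{l=1}^b a_l$, $a_l = \prod_{i=1}^{l-1}\mu_i \prod_{j=l}^{n-1}\lambda_j$, and $D$ is the $b$-independent denominator, I use $S_b = S_{b-1}+a_b$ and $S_{b+1} = S_{b-1}+a_b+a_{b+1}$ to obtain
\be
    D^2\bigl(\bb_b^{\,2} - \bb_{b-1}\bb_{b+1}\bigr) \;=\; a_b^{\,2} + S_{b-1}\bigl(a_b - a_{b+1}\bigr)~.
\ee
Since $a_l, S_{b-1} \geq 0$ unconditionally, the inequality $\bb_{b-1}\bb_{b+1} \leq \bb_b^{\,2}$ will follow once I verify $a_{b+1} \leq a_b$ throughout the range $b \geq \ceil*{{n\ov 2}}$.

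For the monotonicity of $a_l$, I note that consecutive terms telescope: $a_{b+1}/a_b = \mu_b/\lambda_b$ whenever both are positive. This reduces the task to showing $\mu_b \leq \lambda_b$ for $b \geq \ceil*{{n\ov 2}}$, which is essentially the content of the discrete concavity argument inside the proof of Lemma~\ref{lemma: Slush symmetry}. That proof handles $\ceil*{{n\ov 2}} \leq b < n-\alpha$ via the sufficient condition $b\binom{n-b}{j}\binom{b}{k-j} \leq (n-b)\binom{b}{j}\binom{n-b}{k-j}$ for $\alpha \leq j \leq k$; the same combinatorial estimate extends without modification to the boundary $b = n-\alpha$. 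For $b > n-\alpha$ we have $\mu_b = 0$, so $a_{b+1}=0$ and the inequality is trivial.

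Finally, the equality statement is handled by tracking the vanishing pattern of $a_l$. Since $\lambda_l = 0$ for $l < \alpha$ and $\mu_i = 0$ for $i > n-\alpha$, the term $a_l$ is nonzero precisely for $\alpha \leq l \leq n-\alpha+1$. Consequently, the right-hand side of the identity above can vanish only when both $a_b = 0$ and $a_{b+1} = 0$, which forces $b \geq n-\alpha+2$ and in particular $b > n-\alpha$; note that $b = n-\alpha+1$ still gives strict inequality because $a_b > 0$ there, while the interior range gives $a_b^{\,2} > 0$. The main technical point of the proof is the mild extension of the Lemma~\ref{lemma: Slush symmetry} estimate to the endpoint $b = n-\alpha$, after which all the pieces assemble cleanly.
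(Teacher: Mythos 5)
Your proof is correct, but it takes a mechanically different route from the paper's, which disposes of the inequality in two lines: the discrete concavity $\bb_{b-1}+\bb_{b+1} < 2\bb_b$ established inside the proof of Lemma~\ref{lemma: Slush symmetry} combined with the AM--GM inequality gives $\sqrt{\bb_{b-1}\bb_{b+1}} \leq \tfrac{1}{2}(\bb_{b-1}+\bb_{b+1}) < \bb_b$ directly. Your identity $D^2\bigl(\bb_b^{\,2}-\bb_{b-1}\bb_{b+1}\bigr) = a_b^{\,2} + S_{b-1}(a_b - a_{b+1})$ checks out, and the monotonicity $a_{b+1}\leq a_b$ you reduce to is exactly equivalent to the paper's concavity statement, since $a_l/D$ is the increment $\bb_l - \bb_{l-1}$; both proofs therefore rest on the same key estimate $\mu_b \leq \lambda_b$ from Lemma~\ref{lemma: Slush symmetry}. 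What your version buys: a quantitative gap $\bb_b^{\,2}-\bb_{b-1}\bb_{b+1} \geq a_b^{\,2}/D^2$, and a sharper equality analysis --- you correctly observe that equality in fact occurs only for $b \geq n-\alpha+2$, since at $b = n-\alpha+1$ one still has $\bb_{b-1}\bb_{b+1} = \bb_{n-\alpha} < 1 = \bb_b^{\,2}$, which refines the ``only for $b > n-\alpha$'' of the statement. Your explicit extension of the combinatorial estimate to the endpoint $b = n-\alpha$ is a point the paper's proof needs as well but passes over silently, so flagging it is a genuine improvement rather than a detour.
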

\begin{proof}
    Since $b \geq \ceil*{{n\ov 2}}$, from the proof of Lemma~~\ref{lemma: Slush symmetry} we have ${1\ov 2}\left(\bb_{b-1} + \bb_{b+1} \right)< \bb_b$. By the AM-GM inequality,\footnote{Arithmetic and geometric means inequality.} we also have:
    \be
        {1\ov 2}\left(\bb_{b-1} + \bb_{b+1} \right) \geq \sqrt{\bb_{b-1} \bb_{b+1}}~.
    \ee
    Combining these two yields the required identity. 
\end{proof}

Numerical simulations indicate that this lemma extends to any $1 \leq b \leq n-1$, with equality for $b < \alpha$ and $b > n-\alpha$. However, in the $b < \ceil*{{n\ov 2}}$ range, we now have $\mu_b > \lambda_b$, and thus ${1\ov 2}\left(\bb_{b-1} + \bb_{b+1} \right) > \bb_b$. As such, the argument presented in the above proof no longer applies. Thus, the proof appears to be more intricate for this range, and we leave it for future work. If proved, this extended version of Lemma~\ref{lemma: (b-1)(b+1)<b^2} would significantly simplify the proof of Theorem~\ref{thm: Slush convergence}.

\subsection{Byzantine participants}

Consider now the scenario described by Theorem~\ref{thm: Perfectly malicious nodes}, \ie there are $f$ Byzantine nodes which follow an ideal adversarial strategy and always respond to a query with the red colour. Let $c = n-f$ be the number of honest participants. In the presence of malicious nodes, the death and birth ratios of the Slush protocol change as follows~\cite{rocket2020scalable}:
\be
     \mu_b = b\, {\rm H}(n, n-b, k, \alpha)~, \qquad \lambda_b = (c-b)\, {\rm H}(n,b,k,\alpha)~,
\ee
for $1 \leq b \leq c-1$, as $\{0\}$ and $\{c\}$ are the absorbing states, where we also assume $c > f$.  Importantly, we have $\mu_{n-b} = {n-b \ov c-b} \lambda_{b}$, while it is still true that $\lambda_b = 0$ for $b<\alpha$, and $\mu_b = 0$ for $b > n-\alpha$. The absorption probabilities $\bb_b$, $\rr_b$ in the all-blue and all-red states, respectively, can be found as before, with the distinction that $n$ is replaced by $c$ in \eqref{exact zeta} and \eqref{exact xi}, since the Markov process has $c+1$ states instead of $n+1$. \\

A new and important result in our analysis is the following lemma.

\begin{lemma}   \label{lemma: Slush asymmetry with Byzantine}
    In the Slush protocol with parameters $k,\alpha$ and $n$ nodes, out of which $0 < f< \alpha$ are (perfectly) Byzantine, the absorption probabilities satisfy:
    \be
    {\rr_b \ov \bb_{n-b}} > \prod_{j = n-b}^b {n-j \ov c-j} = {b! \ov (n-b-1)!} \times {(c-b-1)! \ov (b-f)!}~,
    \ee
    for $\ceil*{{n\ov 2}} \leq b \leq n-\alpha$, where $c = n-f$.
\end{lemma}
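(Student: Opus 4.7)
The plan is to rewrite both absorption probabilities through a common positive sequence and then extract the lower bound by a term-by-term pairing argument. I first introduce $B_l = \prod_{i=1}^{l-1}\mu_i\prod_{j=l}^{c-1}\lambda_j$ for $l=1,\dots,c$. Applied to the Byzantine birth--death chain on $\{0,\dots,c\}$, the formulas in Theorem~\ref{thm: exact xi} and Corollary~\ref{corr: exact zeta} (with $n$ replaced by $c$) become
\[
\rr_b = \frac{1}{D}\sum_{l=b+1}^{c} B_l, \qquad \bb_{n-b} = \frac{1}{D}\sum_{l=1}^{n-b} B_l, \qquad D = \sum_{l=1}^{c} B_l.
\]
Because $\lambda_j=0$ for $j<\alpha$ and $f<\alpha$, every $B_l$ with $l\leq f$ vanishes; the denominator effectively runs over $l\in[f+1,n-b]$ and has exactly $c-b$ nontrivial terms, matching the $c-b$ terms of the numerator sum $l\in[b+1,c]$.

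Next, I pair $B_{b+1+j}$ with $B_{n-b-j}$ for $j=0,\dots,c-b-1$. Telescoping $B_{l+1}/B_l=\mu_l/\lambda_l$ gives $Q_j := B_{b+1+j}/B_{n-b-j} = \prod_{l=n-b-j}^{b+j} \mu_l/\lambda_l$. Writing $\mu_l/\lambda_l = \frac{l}{c-l}\,h_l$ with $h_l = H(n,n-l,k,\alpha)/H(n,l,k,\alpha)$, the reflection $h_l h_{n-l}=1$ forces the $h$-factors across the symmetric interval $l\in[n-b,b]$ to telescope to unity, leaving $Q_0 = \prod_{l=n-b}^{b} l/(c-l)$. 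A pair-by-pair check using the same reflection then yields $Q_0 = \prod_{l=n-b}^{b} (n-l)/(c-l) = \prod_{j=n-b}^{b} (n-j)/(c-j)$, matching the claimed lower bound.

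For $j\geq 1$, I write $Q_j = Q_0\,A_j$ and group each extra pair of factors via
\[
\frac{\mu_l\mu_{n-l}}{\lambda_l\lambda_{n-l}} = \frac{l(n-l)}{(c-l)(l-f)} = 1 + \frac{fc}{(c-l)(l-f)},
\]
which strictly exceeds $1$ for $f>0$ and $l\in[n-b,b]$. Hence $A_j \geq 1$ (and $A_j>1$ for $j\geq 1$), so
\[
\frac{\rr_b}{\bb_{n-b}} = Q_0\,\frac{\sum_{j} A_j\,B_{n-b-j}}{\sum_{j} B_{n-b-j}} \geq Q_0,
\]
with strict inequality whenever some $j\geq 1$ contributes a nonzero $B_{n-b-j}$, which happens precisely for $b<n-\alpha$.

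The main obstacle I anticipate is the identification of $Q_0$ with the claimed product. The reflection $h_l h_{n-l}=1$ and the pairwise symmetry of $l/(c-l)$ versus $(n-l)/(c-l)$ over $[n-b,b]$ are both elementary, but combining them---and carefully aligning the effective numerator and denominator term counts after removing the vanishings $B_l=0$ for $l\leq f$---is where one could easily slip. Once $Q_0$ is pinned down, the inequality follows immediately from the strict positivity of $fc/[(c-l)(l-f)]$ together with the trivial fact that a weighted average of values $\geq 1$ is itself $\geq 1$.
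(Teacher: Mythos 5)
Your argument is correct and is essentially the paper's own proof in different notation: pairing $B_{b+1+j}$ with $B_{n-b-j}$ is exactly the paper's term-by-term matching of the two numerator sums, your $Q_0$ is the paper's factor $\prod_{t=l}^{n-l}\tfrac{n-t}{c-t}$ evaluated at its minimising index $l=n-b$, and the reflection $h_l h_{n-l}=1$ together with the symmetry of $\prod_{l} l$ over $[n-b,b]$ is the same identity used there. Your remark that strictness needs a surviving $j\ge 1$ term is in fact sharper than the statement: at $b=n-\alpha$ both sums collapse to a single term and the bound is attained with equality (the paper's own argument has the same feature), so the inequality should be read as non-strict at that endpoint; this does not affect the downstream use in Theorem~\ref{thm: Perfectly malicious nodes}.
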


\begin{proof}  Using the above identities, the numerators of $\rr_b$ and $\bb_{n-b}$ can be simplified to
\bea
    \rr_{b}\, & \propto \sum_{l = \alpha - f}^{c-b}  \prod_{i=1}^{c-l} \mu_i \prod_{j=c-l+1}^{c-1}\lambda_j~, \\
    \bb_{n-b} & \propto \,\, \sum_{l = \alpha}^{n-b} \,\,\, \prod_{i=1}^{l-1} \, \mu_i \, \, \prod_{j=l}^{c-1} \, \lambda_j~, 
\eea
as $\lambda_{l < \alpha} = 0$ and $\mu_{l>n-\alpha} = 0$. Due to these identities, and since $n-\alpha < c$, we can rewrite the numerator of $\rr_b$ as
\bea
    \rr_{b}\,  \propto  \sum_{l = \alpha}^{n-b} \, \prod_{i=1}^{n-l} \mu_i \prod_{j=n-l+1}^{c-1}\lambda_j~.
\eea
Further algebraic manipulations of this expression lead to
\bea
    \rr_{b}\,  \propto  \sum_{l = \alpha}^{n-b} \, \prod_{i=1}^{l-1} \mu_i \prod_{j=l}^{c-1}\lambda_j \times \left( \prod_{t=l}^{n-l} {n-t \ov c-t}\right)~,
\eea
as long as $\alpha > f$. The fraction inside the last product ${n-t \ov c-t}$ is always greater than 1 for $f > 0$ and reaches its minimum for $l = n-b$. Using this value for all terms in the sum, one obtains the required identity.

\end{proof}

\section{Proofs of main results} \label{app: proofs}

\subsection{Proof of Theorem~\ref{thm: Slush vs delta-majority}} \label{app: proof of Slush vs delta-majority}

\begin{LateProof}\textit{of Theorem~\ref{thm: Slush vs delta-majority}.}\, To prove the theorem, we proceed as before by computing $\Delta \mathbb{P} = \mathbb{P}_{\rm Maj}^{(\delta)} - \mathbb{P}_S$, where $\mathbb{P}_{\rm Maj}^{(\delta)}$ is the majority rule success probability \eqref{hom Condorcet probability}, but with the starting point of the sum changed to $\ceil*{{n\ov 2}} + \delta$. We note that:
    \bea
    \Delta \mathbb{P} \leq & \sum_{b = \ceil*{{n\ov 2}}+\delta}^n \binom{n}{b}p^b(1-p)^{n-b}\rr_b  \,
    -  \sum_{b = \ceil*{{n\ov 2}}+\delta}^n \binom{n}{b-2\delta}p^{n+2\delta-b}(1-p)^{b-2\delta}\bb_{n+2\delta-b}~.
    \eea
    A term-by-term comparison reveals that a set of sufficient conditions for $\Delta\mathbb{P} < 0$ is
    \be
    {\rr_b \ov \bb_{n+2\delta - b}}\, \binom{n}{b}\binom{n}{b-2\delta}^{-1} \left( {p \ov 1-p} \right)^{2b-n-2\delta} < 1~,
    \ee
    for $\ceil*{{n\ov 2}} + \delta \leq b < n - \alpha$. Each such condition on its own implies that the participant accuracy $p$ must be below a threshold value  $p_{\rm th}(b; \delta) = 1 - \big(1+\tau(b;\delta)\big)^{-1}$~, with
    \be \label{tau(b)}
        \tau(b; \delta) = \left({\rr_{b-2\delta} \ov \rr_b}\times  \binom{n}{b-2\delta}\middle/ \binom{n}{b} \right)^{1\ov 2b-n-2\delta}~,
    \ee
   where we also use Lemma~\ref{lemma: Slush symmetry}.  When combining these constraints, we are looking for the value of $b$ that minimises $p_{\rm th}(b; \delta)$.\footnote{This value is only a lower bound on the threshold value $p_{\rm th}(\delta)$.}  For our purposes, however, this will not be required. Instead, note that $\tau(b; \delta) > 1$, and thus $p_{\rm th}(b; \delta) > {1\ov 2}$ for any $b$ and $\delta$. Thus, the true threshold $p_{\rm th}(\delta)$ will also be larger than ${1\ov 2}$, as claimed.
\end{LateProof}

\subsection{Proof of Theorem~\ref{thm: Slush convergence}}   \label{app: proof of Slush - Condorcet}

\begin{LateProof}\textit{of Theorem~\ref{thm: Slush convergence}.}\, This theorem consists of two separate statements. First, let us look at $\mathbb{P}_{\rm S}(n, k, \alpha, p) \geq p$, which is equivalent to showing
    \be
    \mathbb{P}_{\rm S}(n, k, \alpha, p) \geq {1\ov n} \sum_{b=0}^n b \,\binom{n}{b}p^b(1-p)^{n-b}~,
    \ee
    where the RHS is the first moment of the binomial distribution. Next, we split the sums from both sides in two, the first one containing the terms up to $\floor*{{n\ov 2}}$, and the second one having the remaining terms. After a simple change of variables, the above condition becomes:
    \bea
    & \sum_{b = \ceil*{{n\ov 2}}}^n \binom{n}{b} \bigg( \bb_b \, p^b (1-p)^{n-b} + \bb_{n-b} \, p^{n-b} (1-p)^{b}\bigg)  \\
    & \geq \sum_{b = \ceil*{{n\ov 2}}}^n \binom{n}{b} \bigg( {b\ov n} \, p^b (1-p)^{n-b} + {n-b \ov n}\, p^{n-b} (1-p)^{b}\bigg) ~.
    \eea
    Finally, comparing these expressions term-by-term, and using Lemma~\ref{lemma: Slush symmetry},  a sufficient condition for the result to be true is given as:
    \bea
    \left( \bb_b - {b\ov n}\right) \left( \Bigg( {p\ov 1-p}\Bigg)^{2b-n} - 1\right) \geq 0~,
    \eea
    for $b > {n\ov 2}$. This is true by Lemma~\ref{lemma: Slush symmetry}, as long as $p>{1\ov 2}$.

    The second statement of Theorem~\ref{thm: Slush convergence} involves the convergence of the Slush algorithm accuracy, $\mathbb{P}_S$, to unit for large enough $n$. This statement builds on Lemma~\ref{thm: Slush vs delta-majority}, leading to a much stronger result. First, according to the aforementioned lemma, large accuracies for the Slush algorithm can occur for the base learner accuracy $p$ in the interval $(q, p_{\rm th}(q))$. Nevertheless, using the monotonicity of the Slush algorithm proved in Lemma~\ref{lemma: Slush monotonicity}, we can eliminate the upper bound of this interval. 
    
    However, the only remaining issue is to show that $q < p_{\rm th}(q)$, for any $n$, such that Lemma~\ref{thm: Slush vs delta-majority} can apply. To enlarge the domain of base learner accuracies, we look at $\delta = 1$ or $q = {1\ov 2} + {1\ov n}$. Based on the proof of Theorem~\ref{thm: Slush vs delta-majority} -- and more precisely on \eqref{tau(b)} -- a sufficient condition for $p_{\rm th}(q) > q$ is
    \be \label{threshold sufficient condition}
         {\rr_{b-2} \ov \rr_b} \, {(b-1)(b-2) \ov (n-b+1)(n-b+2)}> \left({n+2 \ov n-2}\right)^{2b-n-2}~,
    \ee
    for all $\ceil*{{n\ov 2}} + 1 \leq b \leq n-\alpha$. Clearly, all fractions involved are greater or equal than 1. To prove this inequality, consider first the lowest value $b = \ceil*{{n\ov 2}} + 1$, when the exponent on the RHS is simply 1; meanwhile, for the LHS we extend Lemma~\ref{lemma: Slush symmetry}, such that:
    \bea
        \rr_{\floor*{{n\ov2}}} > 1 - {\floor*{{n\ov2}} \ov n}~, \qquad \quad \rr_{\ceil*{{n\ov2}} + 1} < 1 - {\ceil*{{n\ov2}} + 1 \ov n}~.
    \eea
    It follows that:
    \be \label{aux condition}
        {\rr_{\floor*{{n\ov2}}} \ov \rr_{\ceil*{{n\ov2}} + 1}} > {n+1 \ov n-3},
    \ee
    which is indeed greater than ${n+2 \ov n-2}$. We would like to present a proof of \eqref{threshold sufficient condition} that can easily generalise to other probabilistic consensus protocols. Thus, we want to avoid using too many details that are specific to the Slush protocol. For the Slush protocol, the ratio of the absorption probabilities on the LHS of \eqref{threshold sufficient condition} grows (very fast) with $b$. A detailed argument for this claim is presented below.

    Consider the limiting case $\alpha = 1 = k$, where from Lemma~\ref{lemma: Slush symmetry} we have $\rr_b = 1- {b\ov n}$ for any $0 \leq b \leq n$. As a result, the ratio ${\rr_{b-2} \ov \rr_b }$ reads ${n-b+2\ov n-b}$, which can be shown to be strictly increasing. Additionally, this ratio is strictly larger than ${n+2 \ov n-2}$  for $b > \ceil*{{n\ov 2}}$. When $\alpha$ and $k$ are varied away from this configuration, the rate of change\footnote{We define the rate of change as the difference in consecutive absorption probabilities, \ie $\rr_{b} - \rr_{b+1}$.} in the absorption probability becomes sharper in the region around $b \approx {n\ov 2}$ and milder otherwise, as depicted in Figure~\ref{fig: Slush Absorption}. Thus, as long as this is the case, we have $\rr_{b-2} - \rr_b > {2\ov n}$ and thus
    \be
        {\rr_{b-2} \ov \rr_{b}} > 1 + {2 \ov n \rr_b}~.
    \ee
    The RHS is then larger than ${n+2 \ov n-2}$ as long as $\rr_b < {1\ov 2} - {1\ov n}$, which is of course true for $b > \ceil*{n\ov 2}$. Importantly, the above argument will hold when the rate of change in the absorption probabilities is larger than ${1\ov n}$. However, we are concerned with large values of $n$, and thus ${1\ov n}$ can be made arbitrarily small. The net effect of this is to extend the region where the above argument holds. Finally, for the tail values, \ie when $b$ is large and the rate of change is smaller than ${1\ov n}$, the absorption probability $\rr_b$ converges to 0, and thus the ratio $\rr_{b-2} / \rr_{b}$ diverges.\footnote{We are only interested in $b < n-\alpha$, where the strict inequality $\rr_{b-2} > \rr_b$ holds. Otherwise, $\rr_b = 0$ for $b > n-\alpha$.}

    In light of the above reasoning, we can use the ratio $\rr_{\floor*{{n\ov2}}}/ \rr_{\ceil*{{n\ov2}} + 1}$ as a placeholder for any value of $b$ in the interval. Then, \eqref{threshold sufficient condition} simplifies upon using \eqref{aux condition}  to:
    \be
         {(b-1)(b-2) \ov (n-b+1)(n-b+2)} > \left({n+2 \ov n-2}\right)^{2b-n-3}~,
    \ee
    for $b \geq \ceil*{{n\ov 2}} + 2$. An equivalent way of writing this is
    \be
        {(n+2j+1)(n+2j+3) \ov (n-2j-1)(n-2j-3)} > \left({n+2 \ov n-2}\right)^{2j+2}~,
    \ee
    for $0 \leq j \leq {1\ov 2}(n-2\alpha-5)$. For $j = 0$, the identity can be proved by expanding the terms and comparing the resulting quartic polynomials. More generally, the proof can be done by induction, using
    \bea
        {(n+2j+1)(n+2j+3) \ov (n-2j-1)(n-2j-3)} & = {(n+2j-1)(n+2j+1) \ov (n-2j+1)(n-2j-1)}\times {(n+2j+3)(n-2j+1) \ov (n-2j-3)(n+2j-1)} \\
        & > \left({n+2 \ov n-2}\right)^{2j} \times {(n+2j+3)(n-2j+1) \ov (n-2j-3)(n+2j-1)} \\
        & > \left({n+2 \ov n-2}\right)^{2j} \times \left({n+2 \ov n-2}\right)^{2}~,
    \eea
    where in the last line one proceeds as for the $j = 0$ case by expanding the brackets and computing the quartic polynomials. 
    
\end{LateProof}

\subsection{Proof of Theorem~\ref{thm: performance groups}} \label{app: proof of performance groups}

\begin{LateProof}\textit{of Theorem~\ref{thm: performance groups}.}\, The distribution for the initial number of blue nodes follows a \emph{Poisson binomial distribution}, given by:
\bea \label{2 PBN}
    \mathbb{P}_{\rm PBN}(S_n=b) & =  \sum_{j = \max(0, b-n_2)}^{\min(b, n_1)} \binom{n_1}{j} \,p_1^{j}(1-p_1)^{n_1-j}  \times  \binom{n_2}{b-j}\, p_2^{b-j}  (1-p_2)^{n_2 - b + j}~.
\eea
When $p_2 = 0$, the second part of the expression should be neglected, and the sum reduces to a single term $j = b$. For now, consider generic values of $p_1, p_2$. As in the proof of Theorem~\ref{thm: Slush vs Majority hom}, we do a term-by-term analysis, leading to the control ratio:
\be \label{PBN control ratio}
     \kappa_b = {\mathbb{P}_{\rm PBN}(S_n=b) \ov \mathbb{P}_{\rm PBN}(S_n=n-b)}~,  \quad \text{for } m+1 \leq b \leq n-\alpha~,
\ee
where $n = 2m+1$. We are interested in finding the values of $p_1$ for which $\kappa_b = 1$ and below which $\kappa_b < 1$, for all $b$. The largest such value occurs in the limiting case $p_2 = 0$ when we have:
\bea
    p_{1, \rm max}(b) = \left( 1 + \tau_b^{1\ov 2b-n} \right)^{-1}~.
\eea
Here we introduced:
\be
    \tau_b = \binom{n_1}{b}\binom{n_1}{b-n_2}^{-1}~,
\ee
with $m+1 \leq b \leq n_1$. Note also that $0 \leq \tau_b \leq 1$ for any $b$ in the given range, and thus $p_{1, \max} > {1\ov 2}$. The result of the theorem is obtained using $b = \ceil*{{n\ov 2}}$ in the previous expressions, which is the value minimising $\kappa_b$. To show that this is indeed the case, let $\lambda(b) = \tau_b^{1\ov 2b-n}$, and note that:
\bea
     \left({\lambda(b) \ov \lambda(b+1)}\right)^{(2b-n)(2b-n+2)}  = \left( {b+1 \ov b+1-n_2}\,\,{n - b \ov n_1 - b}\right)^{2b-n} \binom{n_1}{b}^2 \binom{n_1}{b-n_2}^{-2}~,
\eea
which is clearly greater than 1 for any $b \geq \ceil*{{n\ov 2}}$ as long as $n_2>0$. Thus, $\lambda(b)$ is strictly decreasing, as claimed.

Finally, for $p_2 = {1\ov 2}$, one can check that $p_1 = {1 \ov 2}$ ensures that $\kappa_b = 1$. This follows from the identity \eqref{hypergeometric distr normalisation}. Thus, for more general $p_2 \in \left( 0, {1\ov 2}\right)$, the sought-after value of $p_1$ will lie between the above two limiting cases. 
    
\end{LateProof}

\section{Simulation details} \label{app: simulation details}

In this appendix, we discuss certain aspects of the simulations not covered in Section~\ref{sec: simulation}.

\medskip \paragraph{Training sets distributions.} The extended FEMNIST dataset consists of handwritten digits, lower case letters and upper case letters, partitioned by the writer of the digit or character. To formulate a binary classification problem, we discard the digits and only consider lower and upper case letters. This, of course, reduces the size of the datasets of individual writers. The distribution of dataset sizes used for training the simple ensemble methods (RF, XGBoost and LGBM models) can be seen on the left diagram of Figure~\ref{fig: RF training size}.

\begin{figure}[!ht] %
    \centering
    \includegraphics[width=0.49\textwidth]{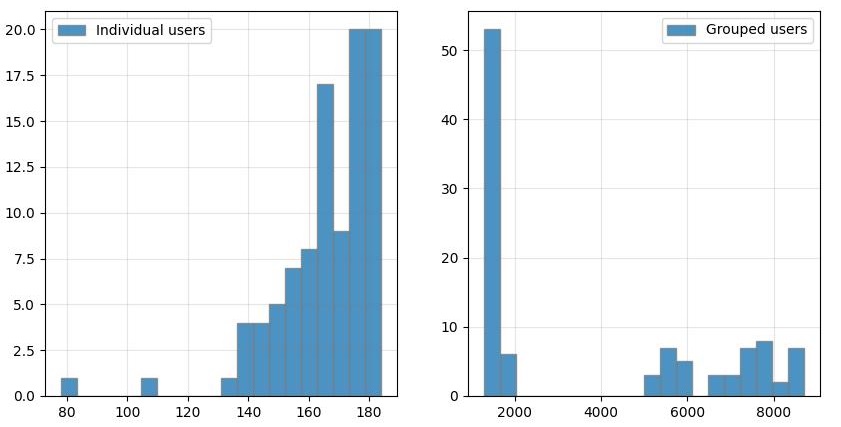}
    \caption{Distribution of dataset size used for training the 101 models. \emph{Left:} datasets of 101 individual users. \emph{Right:} datasets of 101 groups of users deployed for building CNNs.}\label{fig: RF training size}
\end{figure}%

For training CNNs, we group the data of the individual users to create 101 larger datasets. In the LEAF benchmark~\cite{caldas2019leaf}, the 3550 non-iid sets are split into 36 files of 100 users each, with the 50 users in the last file. For simplicity, we split each such file into three groups of equal numbers of users; more precisely, we have groups of 33, 33 and 34 users. The corresponding datasets are then combined in larger sets. The sizes of these training sets are also shown in Figure~\ref{fig: RF training size}. Note that the test datasets are separate from these training datasets.

\medskip \paragraph{Selecting Byzantine participants.} The simulations on the FEMNIST dataset described in Section~\ref{sec: simulation} do also include Byzantine participants. Consider first the 101 models trained on data from 101 distinct users. For this first scenario, we choose models at random, which are then turned into perfect Byzantine models, \ie their outputs are always incorrect. We then evaluate the accuracy of the majority ensemble and of the Slush algorithms. 

The results shown in Figure~\ref{fig: feMNIST letters niid Byzantine} consider two particular samples of base learners turned into perfect Byzantine models: strong classifiers and weak classifiers. By this we mean that the performances of the classifiers on the test sets are above (below, respectively) the average. %
\begin{table}[!ht]
    \centering
    \begin{tabular}{llll@{\hskip 0.3in}lll}
        \toprule 
        & \multicolumn{3}{c}{\textbf{Strong classifiers}} & \multicolumn{3}{c}{\textbf{Weak classifiers}} \\ 
       \textbf{Statistics}  & Test (1) & Test (10) & Test (100) & Test (1) & Test (10) & Test (100)  \\ \midrule
       Mean  & 0.699 & 0.610 & 0.611 & 0.601 & 0.552 & 0.554\\
       Median & 0.714 & 0.620 & 0.619 & 0.531 & 0.521 & 0.523\\
       Std & 0.091 & 0.037 & 0.037 & 0.114 & 0.057 & 0.053\\
       \bottomrule
    \end{tabular}
    \caption{Statistics for the two (random) samples of 10 base learners (Random Forests) turned into Byzantine models. Here, each model is trained on data coming from a single user.}
    \label{tab: stats RF Byz users}
\end{table}%
These statistics are described in Table~\ref{tab: stats RF Byz users} for samples of 10 users, and should be compared with these indicated on the top row of Figure~\ref{fig: feMNIST letters niid} for the whole ensemble of 101 classifiers.

A similar procedure is applied to the second simulation, where base learners are trained on data obtained from multiple users. There, we only consider a sample of 10 base learners that is representative of the whole ensemble of 101 models. Their statistics are shown in Table~\ref{tab: stats CNN Byz users}, %
\begin{table}[!ht]
    \centering
    \begin{tabular}{llll}
    \toprule
      \textbf{Statistics}   & Test (1) & Test (10) & Test (100)  \\ \midrule
       Mean accuracy  & 0.800 & 0.711 & 0.715 \\
       Median accuracy & 0.816 & 0.726 & 0.727 \\
       Standard deviation & 0.090 & 0.044 & 0.044\\
       \bottomrule
    \end{tabular}
    \caption{Statistics for the sample of 10 base learners (CNNs) turned into Byzantine models.}
    \label{tab: stats CNN Byz users}
\end{table} %
while the ensemble specifications are shown on the top row of Figure~\ref{fig: feMNIST letters niid CNN}.

\medskip \paragraph{Error measurement.} The plots shown in Section~\ref{sec: simulation} typically include the estimates for the error in ensemble accuracy. The first error to consider is due to the limited number of samples $n_{\rm samples}$ of the test set, being explicitly given by $\varepsilon = \left(n_{\rm samples}\right)^{-1}$. This error manifests for any single accuracy estimate of the Slush algorithm. To reduce this error, the communication phase is repeated $\cN = 10$ times for the testing on the datasets coming from 1 and 10 users. Meanwhile, for the test set of 100 grouped users, $n_{\rm samples}$ is large enough for this error to be negligible. 

To see how this repetition affects the overall error in the estimate for accuracy of the Slush algorithm we will proceed as follows. Let us model the ensemble accuracy as a random variable $Y$, whose variance $\sigma$ is closely related to the previously mentioned error $\varepsilon$ through $\sigma \propto \varepsilon^2$. The exact relation is not relevant for our purposes. The next step is to look at the mean of $\cN$ such random variables, whose variance becomes:
\be
    \sigma_{\bar Y} = {1\ov \cN^2}\,{\rm Var}\left(\sum_{i=1}^N Y_i\right)~.
\ee
If the $Y_i$ variables are uncorrelated, then we have $\sigma_{\bar Y} = \sigma / \cN$ and thus the error would reduce by a factor of $\sqrt{\cN}$. However, these variables are typically not uncorrelated. In the opposite case where the variables are perfectly correlated, we have instead $\sigma_{\bar Y} = \sigma$. This latter case applies to the majority rule, which is instead deterministic.

More generally, the error in Slush ensemble accuracy will be smaller than $\varepsilon$, but larger than $\varepsilon/\sqrt{\cN}$, and can be found by explicitly computing the covariance matrix of the $Y_i$ variables.

\medskip \paragraph{Beta distribution generalities.} The mean and variance of a random variable $Z \sim {\rm B}(a,b)$ can be expressed as:
\be
    \mathbb{E}[Z] = {a \ov a+b}~, \qquad {\rm var}(Z) = {ab \ov(a+b)^2(1+b+1)}~,
\ee
which can be easily inverted. As a result, an alternative way of fully specifying the beta distribution is through its mean and variance. Note that $\sigma^2 \equiv {\rm var}(Z)$ is limited to the interval $(0, 0.25)$. 
\printbibliography

\end{document}